\theoremstyle{plain}
\newtheorem{theorem}{Theorem}[section]
\newtheorem{lemma}[theorem]{Lemma}
\theoremstyle{definition}
\newtheorem{definition}[theorem]{Definition}
\theoremstyle{remark}
\def\eqref#1{equation~\ref{#1}}
\def\1{\bm{1}}
\def\vmu{{\bm{\mu}}}
\def\vs{{\bm{s}}}
\def\vv{{\bm{v}}}
\def\vx{{\bm{x}}}
\DeclareMathAlphabet{\mathsfit}{\encodingdefault}{\sfdefault}{m}{sl}
\SetMathAlphabet{\mathsfit}{bold}{\encodingdefault}{\sfdefault}{bx}{n}
\def\sR{{\mathbb{R}}}
\newcommand\norm[1]{\left\lVert#1\right\rVert}
\icmltitlerunning{ProtoGate: Prototype-based Neural Networks with Global-to-local Feature Selection for Tabular Biomedical Data}
\begin{document}
\doparttoc
\faketableofcontents

\twocolumn[
\icmltitle{ProtoGate: Prototype-based Neural Networks with \\Global-to-local Feature Selection for Tabular Biomedical Data}



\icmlsetsymbol{equal}{*}

\begin{icmlauthorlist}
\icmlauthor{Xiangjian Jiang}{cam}
\icmlauthor{Andrei Margeloiu}{cam}
\icmlauthor{Nikola Simidjievski}{cam,onco}
\icmlauthor{Mateja Jamnik}{cam}
\end{icmlauthorlist}

\icmlaffiliation{cam}{Department of Computer Science and Technology, University of Cambridge, UK}
\icmlaffiliation{onco}{Department of Oncology, University of Cambridge, UK}

\icmlcorrespondingauthor{Xiangjian Jiang}{xj265@cam.ac.uk}

\icmlkeywords{Machine Learning, ICML, Interpretable Machine Learning, Deep Tabular Learning, Biomedical Data}

\vskip 0.3in
]

\printAffiliationsAndNotice{}  

\begin{abstract}
\looseness-1
Tabular biomedical data poses challenges in machine learning because it is often high-dimensional and typically low-sample-size (HDLSS). Previous research has attempted to address these challenges via local feature selection, but existing approaches often fail to achieve optimal performance due to their limitation in identifying globally important features and their susceptibility to the co-adaptation problem. In this paper, we propose ProtoGate, a prototype-based neural model for feature selection on HDLSS data. ProtoGate first selects instance-wise features via adaptively balancing global and local feature selection. Furthermore, ProtoGate employs a non-parametric prototype-based prediction mechanism to tackle the co-adaptation problem, ensuring the feature selection results and predictions are consistent with underlying data clusters. We conduct comprehensive experiments to evaluate the performance and interpretability of ProtoGate on synthetic and real-world datasets. The results show that ProtoGate generally outperforms state-of-the-art methods in prediction accuracy by a clear margin while providing high-fidelity feature selection and explainable predictions. Code is available at~\url{https://github.com/SilenceX12138/ProtoGate}.

\end{abstract}

\section{Introduction}
In biomedical research, tabular data is frequently collected \cite{baxevanis2020bioinformatics, lesk2019introduction} for a wide range of applications such as detecting marker genes~\cite{hsu2003unsupervised} and performing survival analysis~\cite{fan2022survival}. Clinical trials, whilst collecting large amounts of high-dimensional data using modern high-throughput sequencing technologies, often consider a small number of patients due to practical reasons~\cite{levin2022transfer}. The resulting tabular datasets are thus often high-dimensional and typically low-sample-size (HDLSS). Moreover, given the inherent heterogeneity of biomedical data, important features often vary from sample to sample -- even in the same dataset~\cite{yang2022locally, yoon2018invase}. Such scenarios have proven challenging for current machine learning approaches, including deep tabular models~\citep{shwartz2022tabular, yang2022locally, margeloiu2022weight}.

\begin{figure}[!t]
    \centering
    \includegraphics[width=0.97\columnwidth]{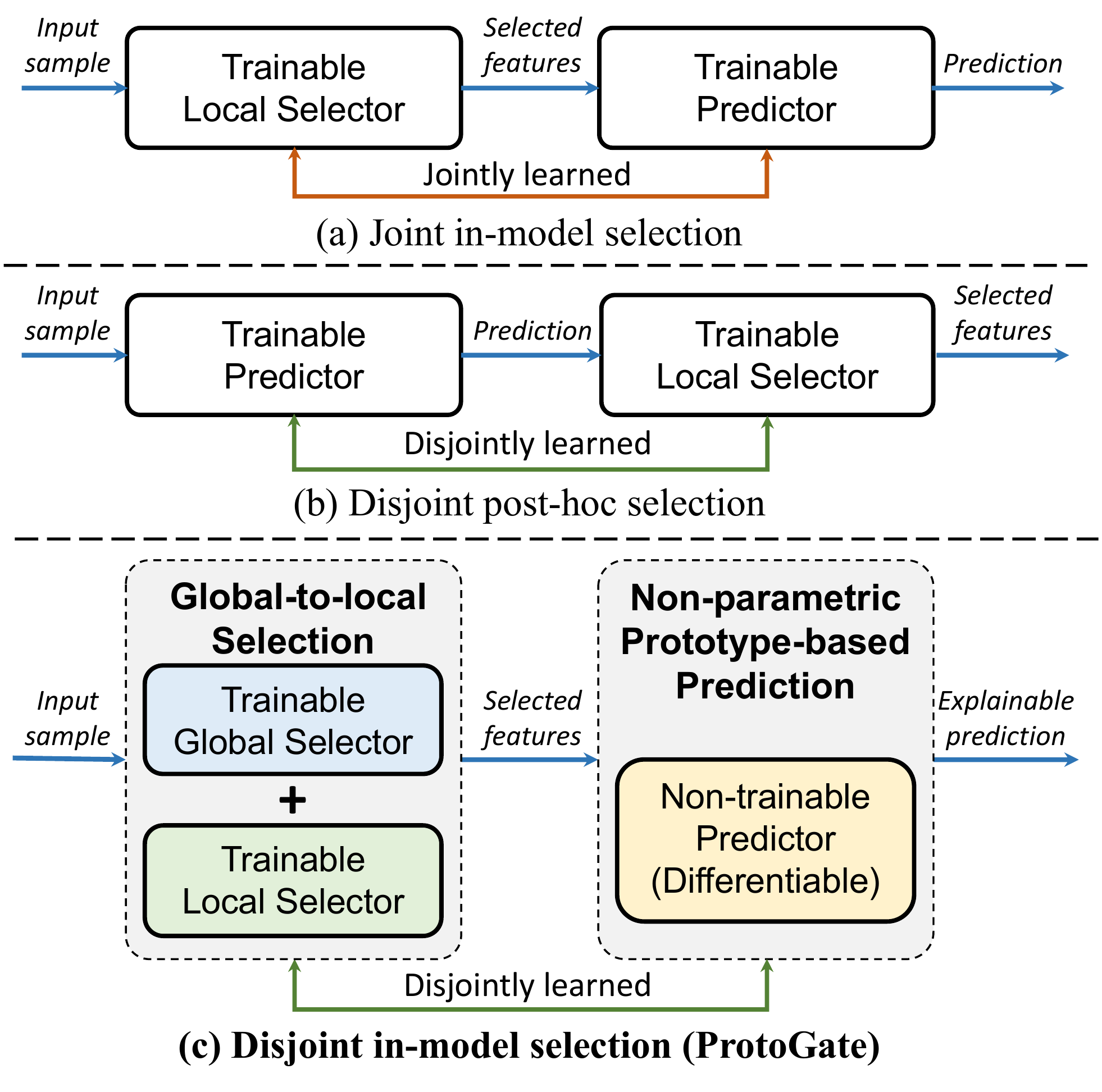}
    \vspace{-6mm}
    \caption{\textbf{An overview of the proposed model.} ProtoGate introduces a novel disjoint in-model selection method. It balances global and local feature selection, and makes explainable prototypical predictions. In contrast to~(a), ProtoGate integrates a trainable feature selector with a non-trainable predictor (i.e., no trainable parameters in the predictor), which allows for disjointly learned feature selector and predictor, thus mitigating the co-adaptation problem. In contrast to~(b), ProtoGate makes predictions with the selected features, preserving their in-model explainability.}  
    \label{fig:overview}
    \vspace{-4mm}
\end{figure}

\begin{table*}[!t]
    \centering
    \caption{\textbf{Model design comparison between ProtoGate and prior local feature selection methods.} ProtoGate has different design rationales to the benchmark methods: a novel learning paradigm with global-to-local feature selector and prototype-based predictor.}
    \label{tab:related_work_mini}
    \resizebox{0.99\textwidth}{!}{
        \begin{tabular}{lc|c|c|c|c|c}
        \midrule
        \multirow{2}{*}[-3mm]{Methods} & \multirow{2}{*}[-3mm]{Learning paradigms} & \multicolumn{2}{c|}{Feature Selection} & \multicolumn{2}{c|}{Prediction}   & \multirow{2}{*}{\parbox{0.1\linewidth}{\vspace{3mm} \thead{Co-adaptation\\avoidance}}} \\
        
        \cmidrule{3-6}
        &                                             & Global   Selector             & Local Selector                                 & Explainability                & \thead{Clustering\\assumption}         \\
        
        \midrule
        TabNet~\cite{arik2021tabnet}                                       & \multirow{5}{*}{\parbox{0.2\linewidth}{\vspace{2mm} Joint in-model selection}}                                                     & {\color[HTML]{E6341C} \XSolidBrush}  & {\color[HTML]{3A7B21} \CheckmarkBold} & {\color[HTML]{E6341C} \XSolidBrush}   & {\color[HTML]{E6341C} \XSolidBrush}  & {\color[HTML]{E6341C} \XSolidBrush} \\
        
        L2X~\cite{chen2018learning}                                          &                                                                                                             & {\color[HTML]{E6341C} \XSolidBrush}   &              {\color[HTML]{3A7B21} \CheckmarkBold}                                  & {\color[HTML]{E6341C} \XSolidBrush}   & {\color[HTML]{E6341C} \XSolidBrush}  & {\color[HTML]{E6341C} \XSolidBrush} \\
        
        INVASE~\cite{yoon2018invase}                                       &                                                                                              & {\color[HTML]{E6341C} \XSolidBrush}   &                {\color[HTML]{3A7B21} \CheckmarkBold}                                & {\color[HTML]{E6341C} \XSolidBrush}   & {\color[HTML]{E6341C} \XSolidBrush}  & {\color[HTML]{E6341C} \XSolidBrush} \\
        
        LSPIN~\cite{yang2022locally}                                        &                                                                                                                    & {\color[HTML]{E6341C} \XSolidBrush}   &                   {\color[HTML]{3A7B21} \CheckmarkBold}                             & {\color[HTML]{E6341C} \XSolidBrush}   & {\color[HTML]{E6341C} \XSolidBrush}  & {\color[HTML]{E6341C} \XSolidBrush} \\
        
        LLSPIN~\cite{yang2022locally}                                       &                                                                                                                  & {\color[HTML]{E6341C} \XSolidBrush}   &                    {\color[HTML]{3A7B21} \CheckmarkBold}                            & {\color[HTML]{3A7B21} \CheckmarkBold} & {\color[HTML]{E6341C} \XSolidBrush}  & {\color[HTML]{E6341C} \XSolidBrush} \\
        
        \midrule
        REAL-X~\cite{jethani2021have}                                       & Disjoint post-hoc selection                                                                     & {\color[HTML]{E6341C} \XSolidBrush}   &                 {\color[HTML]{3A7B21} \CheckmarkBold}                               & {\color[HTML]{E6341C} \XSolidBrush}   & {\color[HTML]{E6341C} \XSolidBrush} & {\color[HTML]{3A7B21} \CheckmarkBold}  \\
        
        \midrule
        \rowcolor{Gainsboro!60}
        \textbf{ProtoGate (Ours)}                  & Disjoint in-model selection           & {\color[HTML]{3A7B21} \CheckmarkBold} &                      {\color[HTML]{3A7B21} \CheckmarkBold}                          & {\color[HTML]{3A7B21} \CheckmarkBold} & {\color[HTML]{3A7B21} \CheckmarkBold} & {\color[HTML]{3A7B21} \CheckmarkBold} \\
        \bottomrule
        \end{tabular}
    }
    \vspace{-4mm}
\end{table*}

Prior work~\cite{remeseiro2019review, arik2021tabnet, chen2018learning, yoon2018invase, yang2022locally, yoshikawa2022neural} has attempted to address such challenges with local feature selection: rather than selecting a general subset of features across all samples, local methods select specific subsets of features for each sample and these subsets may vary from sample to sample.

\vspace{-0.25mm}
However, existing methods still have three main limitations, which are summarised in \cref{tab:related_work_mini}:
\textit{(i)~Neglecting globally important features.}
In many real-world HDLSS tasks, even simple models -- such as MLP and Lasso -- can outperform the advanced methods (see \cref{tab:acc} for more details). We hypothesise that this is because the mainstream methods (\cref{fig:overview}(a) and \cref{fig:overview}(b)) only perform local feature selection without explicitly capturing globally important features, provided the competitive global methods support their existence. 
\textit{(ii)~Low-fidelity feature selection.}
The selected features by these methods can be uninformative even when prediction accuracy is high. For instance, L2X~\cite{chen2018learning} achieves 96\% accuracy in digit classification on MNIST by using only one pixel as input~\cite{jethani2021have}. Although REAL-X mitigates this problem via disjointly learning a predictor to prevent it from overfitting on the selected features (\cref{fig:overview}(b)), it sacrifices the in-model explainability of selected features. 
\textit{(iii)~Insufficient explainability and inappropriate inductive bias.}
In the biomedical domain, the clustering assumption, which states similar samples likely belong to the same class~\cite{chapelle2006semi}, has been shown effective~\cite{kolodner1992introduction, li2018deep, bichindaritz2006case, bichindaritz2008case, lu2021ace}. However, prior work may inappropriately incorporate such inductive bias. For instance, LSPIN~\cite{yang2022locally} selects similar features for similar samples, but it cannot guarantee that samples of the same class cluster together after selection (see \cref{sec:exp_cls} for more details). In this paper, we aim to overcome these challenges by introducing a novel global-to-local method that selects accurate and high-fidelity features with explainable predictions.

We propose ProtoGate (\cref{fig:overview}(c)), a simple yet effective feature selection method for tabular biomedical data. Our approach is distinguished by three core concepts. Firstly, ProtoGate selects features in a global-to-local manner to adaptively balance global and local feature selection. Thus ProtoGate can effectively capture both global and local information across samples. Secondly, ProtoGate elegantly mitigates the co-adaptation problem through a unique learning paradigm for local feature selection: ProtoGate selects features with a non-parametric predictor. The predictor is non-trainable, and it can only evaluate whether the selected features lead to accurate predictions, rather than jointly learn to overfit the selected features for high classification accuracy. With the resistance to co-adaptation problem, ProtoGate can safely make predictions with the selected features. Therefore, ProtoGate can improve the fidelity of selected features while still preserving the in-model explainability of selected features. Thirdly, ProtoGate encodes the clustering assumption into feature selection via prototype-based prediction, which promotes feature selection that clusters samples of the same class together. This design not only confers ProtoGate an inductive bias aligned with the clustering assumption, but also the capability to make explainable prototypical predictions. In addition, we propose a hybrid sorting strategy to expedite the prototype-based prediction.

\vspace{1.8mm}
\looseness-1
\textbf{Contributions.}
More broadly, our contributions are: 
(i)~We present ProtoGate, a novel feature selection method for tabular biomedical data (\cref{sec:method}). The proposed method is characterised by three novelties: a global-to-local feature selection approach, a unique learning paradigm to tackle co-adaptation, and a prototype-based prediction mechanism that aligns with the clustering assumption. 
(ii)~We show that ProtoGate generally outperforms 14 benchmark methods in seven biomedical classification tasks with fewer selected features and shorter runtimes (\cref{sec:exp_cls}). 
(iii)~We further show that ProtoGate can generalise well in non-biomedical domains (\cref{sec:exp_cls}).
(iv)~We conduct comprehensive ablation studies to show that each component of ProtoGate makes complementary contributions to the model performance (\cref{sec:exp_ablation}).
(v)~We quantitatively and qualitatively demonstrate that ProtoGate can provide robust interpretability for the selected features and predictions (\cref{sec:exp_syn}).
(vi) We provide an open-source implementation of ProtoGate to facilitate future research and applications.

\section{Related Work}
\label{sec:related_work}

\textbf{Local Feature Selection.}
Recent work attends to the heterogeneity across samples by selecting instance-wise features for making predictions~\cite{chen2018learning, arik2021tabnet, jethani2021have, yoon2018invase, yang2022locally, yoshikawa2022neural}. Its main focus is on designing differentiable optimisation to generate sparse masks for feature selection. L2X uses mutual information with Concrete distribution~\cite{chen2018learning}, and INVASE models each feature's mask value with independent Bernoulli distributions~\cite{yoon2018invase}.
\citet{yamada2020feature} propose a continuous relaxation of discrete random variables, which was further extended to the exact formulation by LSPIN~\cite{yang2022locally}. 
In this work, we employ this exact formulation for computing differentiable $\ell_0$-regularisation
\cref{tab:related_work_mini} and \cref{appendix:model_cmp} show that there have been few mechanisms to explicitly promote globally important features in local feature selection.  Instead, ProtoGate proposes to address the gap by adaptively balancing global and local feature selection, leading to flexible feature selection behaviours across different datasets.

\vspace{1.5mm}
\looseness-1
\textbf{Co-adaptation Problem.}
In local feature selection, co-adaptation refers to the situation where the model achieves high prediction accuracy with low-fidelity features~\cite{jethani2021have, adebayo2018sanity, hooker2019benchmark, samek2016evaluating}. \citet{jethani2021have} prove that co-adaptation stems from jointly learning a trainable selector and a trainable predictor, where the predictor learns to overfit on poorly selected features for high accuracy. Post-hoc explanation methods (\cref{fig:overview}(b)) address the problem by training a local selector to explain a pre-trained predictor. Instead, we propose to mitigate the problem via a non-trainable predictor. In ProtoGate, the predictor has no trainable parameters and cannot adapt to the feature selector for high accuracy, eliminating the possibility of co-adaptation while preserving the in-model explainability.

\vspace{1.5mm}
\looseness-1
\textbf{Prototype-based Machine Learning.}
Prototype-based models~\cite{biehl2016prototype} in machine learning are closely related to metric learning~\cite{goldberger2004neighbourhood} and case-based reasoning~\cite{kolodner1992introduction}. They are built upon the clustering assumption and aim to represent data through prototypical exemplars (e.g., KNN~\cite{fix1985discriminatory}) or a set of prototypical centroids (e.g., $k$-means~\cite{ball1965isodata}) that capture the fundamental characteristics of the data. These core ideas parallel similar concepts from cognitive psychology and neurosciences, and it is a pervasive behaviour in everyday human problem-solving~\cite{kolodner1992introduction, li2018deep, bichindaritz2006case, bichindaritz2008case, lu2021ace}. Thus, in ProtoGate we leverage the efficacy of prototype-based prediction for feature selection, leading to more generally applicable inductive biases and improved explainability in predictions.

\begin{figure*}[!t]
    \centering
    \includegraphics[width=2\columnwidth]{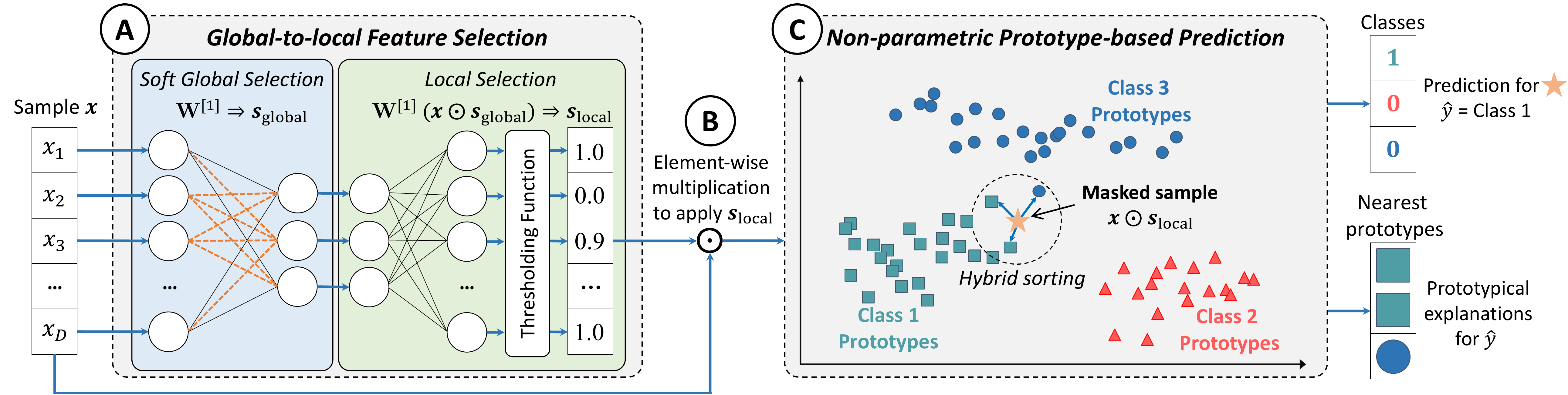}
    \vspace{-1mm}
    \caption{\textbf{The architecture of ProtoGate.} \textbf{(A)} Given a sample $\vx \in \sR^{D}$, the global-to-local feature selection performs soft global feature selection in the first layer of the gating network. The {\color[HTML]{ED7D31}{orange}} dashed lines denote sparsified weights (i.e., reduced to zero) in $\mathbf{W}^{[1]}$ under $\ell_1$-regularisation. The neural network then computes the instance-wise mask $\{s_d\}_{d=1}^{D} \in [0,1]^{D}$ with a thresholding function for local feature selection. \textbf{(B)} The local mask $\vs_{\text{local}}$ is applied to the sample for local feature selection by element-wise multiplication. \textbf{(C)} The non-parametric prototype-based prediction further classifies $\vx \odot \vs_{\text{local}}$ by retrieving the $K$ nearest prototypes in base $\mathcal{B}$ via hybrid sorting. The majority class is used as the predicted label $\hat{y}$, and the exemplars (i.e., the nearest prototypes) provide prototypical explanations.}
    \label{fig:architecture}
    \vspace{-4mm}
\end{figure*}

\section{Method}
\label{sec:method}

\cref{fig:architecture} illustrates the architecture of ProtoGate. We first describe our problem setup (\cref{sec:problem_setup}). Then we present the core components of ProtoGate: a global-to-local feature selection approach (\cref{sec:g2l_fs}), and a non-parametric prototype-based prediction mechanism (\cref{sec:proto_pred}). Furthermore, we define a disjoint objective function to optimise ProtoGate end-to-end (\cref{sec:objective_func}). We present the pseudocode for training and inference in \cref{appendix:algo}. 

\subsection{Problem Setup}
\label{sec:problem_setup}
We consider the classification task on tabular biomedical data with $\mathcal{Y}$ classes. Let $X \coloneqq \left[\vx^{(1)}, \dots, \vx^{(N)}\right]^\top \in \sR^{N \times D}$ be the data matrix consisting of $N$ samples $\vx^{(i)} \in \sR^D$ with $D$ features, and let $Y  \coloneqq \left[y^{(1)}, \dots, y^{(N)}\right]^\top \in \sR^{N \times 1}$ be the corresponding labels. We denote $x^{(i)}_{d}$ as the $d$-th feature of the $i$-th sample. To simplify the notation, we assume all samples in $X$ are used for training. We mainly consider the HDLSS datasets, where the number of features is much greater than the number of samples (i.e., $D	\gg N$). 

\subsection{Global-to-local Feature Selection}
\label{sec:g2l_fs}
Our proposed global-to-local feature selection is implemented via a gating network $S_\mathbf{W}\!\!:\! \sR^{D} \!\rightarrow \![0, 1]^{D}$ that sequen\-tially performs soft global selection and local selection (\cref{fig:architecture}A). It takes as input a sample $\vx^{(i)}$ and generates a mask (also referred to as ``gate'') $\vs_{\text{local}}^{(i)} \coloneqq [s_1^{(i)}, \dots, s_D^{(i)}] \in [0,1]^D$ for local feature selection. The $d$-th feature is selected if and only if the mask value is positive ($s_d^{(i)}\! >\! 0$) (\cref{fig:architecture}B).

\vspace{1.5mm}
\looseness-1
\textbf{Soft Global Selection.}
ProtoGate first captures the globally important features via applying $\ell_1$-regularisation on $\mathbf{W}^{[1]}$, the weights of the first layer in the gating network. The regularisation promotes sparsity in $\mathbf{W}^{[1]}$ via reducing some weights to zero. When all weights connected to the same input neuron are reduced to zero, the corresponding input feature is dropped (e.g., $x_2$ and $x_3$ in \cref{fig:architecture}A). Mathematically, the sparsity in $\mathbf{W}^{[1]}$ transforms the first layer's computation as $\mathbf{W}^{[1]}\vx^{(i)} = \mathbf{W}^{[1]}(\vx^{(i)} \odot \vs_\text{global}^{(i)})$, where $\odot$ represents element-wise multiplication and $\vs_\text{global}^{(i)} \in \{0, 1\}^{D}$ is a binary mask, signifying dropped features as zeros. The first layer is shared across samples, and thus the mask is consistent for all samples (i.e., $\vs_\text{global}=\vs_\text{global}^{(i)}=\vs_\text{global}^{(j)}$). Furthermore, the global selection by $\mathbf{W}^{[1]}$ is ``soft'' because ProtoGate uses $\vs_\text{global}$ to explicitly emphasise the existence of globally important features, serving as a foundation rather than replacing subsequent local feature selection. Specifically, the initially dropped features can be recovered in the following local selection process (e.g., $x_3$ in \cref{fig:architecture}A). We further illustrate the interplay between soft global and local selection with a real-world example in \cref{appendix:fs_illustration}.

\textbf{Local Selection.}
After the initial selection of globally important features, ProtoGate further refines $\vs_\text{global}$ to generate instance-wise masks $\vs_\text{local}^{(i)}$ for local feature selection. This stage involves processing the embedded output $\mathbf{W}^{[1]}(\vx^{(i)} \odot \vs_\text{global})$ through subsequent network layers, yielding the vector $\vmu^{(i)}$. Given the range of $\vmu^{(i)}$ is not necessarily $[0,1]^D$, we threshold it to compute the instance-wise masks and then apply $\ell_0$-regularisation to promote sparse masks. However, the non-differentiability of $||\vs_\text{local}^{(i)}||_0$ poses a challenge for efficient optimisation via backpropagation. Building on prior work~\cite{louizos2018learning, rolfe2016discrete, yamada2020feature, yang2022locally}, we re-formalise the local mask as a vector of random variables, defined by $\vs_\text{local}^{(i)} = \max(0, \min(1, \vmu^{(i)} + \bm{\epsilon}^{(i)}))$, where $\bm{\epsilon}^{(i)}$ is Gaussian noise sampled from $\mathcal{N}(\bm{0}, \bm{\Sigma})$. This formulation allows for the differentiable estimation of $\ell_0$-regularisation through its expectation (i.e., $||\vs_\text{local}^{(i)}||_0 \propto \mathbb{E} [ ||\vs_\text{local}^{(i)}||_0 ]$). In addition, the thresholded mask values are continuous (i.e., $\vs_{\text{local}}^{(i)} \in [0,1]^D$), which are considered to be more expressive than binary masks~\cite{gros2021softseg}.

\vspace{-1mm}
The integration of these two phases is encapsulated in ProtoGate's weighted sparsity regularisation for feature selection
\begin{equation}
\label{eq:reg_sparsity}
\begin{aligned}
    \!\!\!\!\!\mathcal{R}_{\text{select}} &\coloneqq \mathcal{R_{\text{global}}}(\vs_{\text{global}}) + \mathcal{R_{\text{local}}}(\vs_{\text{local}}^{(i)}) \\
    &\propto \lambda_{\text{global}}\norm{\mathbf{W}^{[1]}}_1 + \lambda_{\text{local}}\sum_{d=1}^{D}\int_{\hat{\mu}_d^{(i)}}^{\infty}\exp(-\frac{t^2}{2})dt
\end{aligned}
\end{equation}
where $(\lambda_{\text{global}}, \lambda_{\text{local}})$ is a pair of hyperparameters to balance the regularisation strength between global and local feature selection, and $\hat{\mu}_d^{(i)}$ is the standardised output of $S_{\mathbf{W}}$. Comprehensive theoretical justifications are in \cref{appendix:reg}.

\looseness-1
\textbf{Complementary regularisations.}
ProtoGate employs different regularisations for its two-stage selection process to address distinct objectives.
Firstly, the soft global selection focuses on efficiently identifying a globally important lower-dimension feature set (i.e. $\textbf{s}_\text{global}$). Therefore, ProtoGate promotes sparsity in $\textbf{s}_\text{global}$ via the faster $\ell_1$-regularisation because it does not require the computation of complex surrogates like $\ell_0$-regularisation.
Secondly, the local selection aims to identify locally important features for accurate predictions and high interpretability, which prioritises selecting fewer features (i.e., $\textbf{s}_\text{local}^{(i)}$ contains more ``exact zeros''). Note that small mask values (e.g., 0.1) still signify a selected feature. Therefore, we choose $\ell_0$-regularisation for local selection because it generally leads to more exact zeros~\cite{louizos2018learning}.
Note that, while $\ell_1$-regularisation can also promote exact zeros with greater regularisation strength, we find that it can detrimentally impact model performance, aligning with findings from previous study~\cite{louizos2018learning}.

\vspace{-0.4mm}
\subsection{Non-parametric Prototype-based Prediction}
\label{sec:proto_pred}
The non-parametric prototype-based prediction (\cref{fig:architecture}C) is implemented via a differentiable $k$-Nearest Neighbours (KNN), denoted as \(F_\theta\!:\! \mathbb{R}^D\! \rightarrow\! \mathcal{Y}\), which takes as input the masked sample $\vx^{(i)} \odot \vs_\text{local}^{(i)}$ and predicts its label $\hat{y}^{(i)}$. Note that $\theta$ here refers to hyperparameters for a non-parametric model, rather than trainable parameters as $\textbf{W}$ of $S_\mathbf{W}$.

\vspace{-1mm}
During the training period, ProtoGate first constructs a prototype base $\mathcal{B}$ with the masked training samples. Recall that we assume all $x^{(i)}$ are used as training samples. Thus ProtoGate retains the pairs of the masked sample and its label $p^{(i)} \coloneqq \left(\vx^{(i)} \odot \vs_\text{local}^{(i)}, y^{(i)}\right)$ as prototypes in the base. With the acquired prototypes, ProtoGate can classify a randomly picked query sample $\vx_{\text{query}} \in X$ by retrieving similar prototypes in the base $\mathcal{B}$. Specifically, ProtoGate sorts the prototypes by their similarities (i.e., Euclidean distance) to the masked query sample and makes predictions according to the majority class of the $K$ nearest prototypes. 
Technically, during training, we use the masked samples from the same batch as prototypes; during inference, we use the masked samples from the whole training set as prototypes.

\vspace{-1mm}
\looseness-1
\textbf{Hybrid Sorting.}
The standard sorting operation (e.g., QuickSort~\cite{hoare1961algorithm}) is non-differentiable and the model cannot be optimised via backpropagation. In contrast, the differentiable sorting operation (e.g., NeuralSort~\cite{grover2018stochastic}) allows backpropagation, but the high time complexity can be a bottleneck. With these in mind, we take a step further and propose \textit{hybrid} sorting. We integrate two kinds of sorting operations to amalgamate their strengths: ProtoGate first trains with NeuralSort for backpropagating gradients from the sorted prototypes, and then substitutes NeuralSort for QuickSort during inference. During the training phase, given a random training sample as query sample $\vx_{\text{query}} \coloneqq \vx^{(i)}\in X$, we sort the base $\mathcal{B}$ via computing a row-stochastic matrix $\widehat{\mathbf{P}} \in \sR^{N \times N}$, and $\widehat{\mathbf{P}}[n,m]$ denotes the probability that the $m$-th prototype in $\mathcal{B}$ is the $n$-th nearest to query sample. Given $n \sim \mathcal{U}\{1,K\}$, we can improve the accuracy via $\max\ \mathbb{E}_{\widehat{\mathbf{P}}[n,:]}[\mathbbm{1}(y^{(m)}=y_{\text{query}})]$ where $\widehat{\mathbf{P}}[n,:]$ is the $n$-th row of $\widehat{\mathbf{P}}$, and $\mathbbm{1}(\cdot)$ is the indicator function (i.e., maximising the number of prototypes that have the same label as $\vx_{\text{query}}$). Therefore, the prediction loss per query sample is given by:

\vspace{-3mm}
\begin{equation}
\begin{aligned}
\!\mathcal{L}_{\text{pred}}
    \coloneqq K - \sum_{n=1}^K \mathbb{E}_{\widehat{\mathbf{P}}[n,:]} \left[\mathbbm{1}(y^{(m)} = y_{\text{query}})\right]
    \label{eq:dknn_loss}
\end{aligned}
\end{equation}
\vspace{-2mm}
\looseness-1

where $y_{\text{query}}=y^{(i)}$. Note that we set $m \neq i$ to avoid using $x_{\text{query}}$ itself as a prototype. \cref{eq:dknn_loss} estimates the number of prototypes that have different labels to $\vx_{\text{query}}$ among the $K$ nearest prototypes (we provide theoretical analysis in \cref{appendix:pred_loss}). By optimising the gating network, ProtoGate learns to generate local masks that cluster samples of the same class together. During inference, the prototypes are fixed and query samples are from unseen test data. We find that the hybrid strategy decreases the inference time by almost half while preserving the identical predictive performance as only using differentiable sorting (\cref{appendix:ablation_sort}).

\textbf{Differentiability vs. Non-trainability.}
The two concepts are not contradictory. For hybrid sorting, the differentiability allows backpropagating gradients from sorted prototypes. On the other hand, its non-parametric nature leads to a non-trainable predictor $F_\theta$ that consistently evaluates whether the masked samples align with the clustering assumption. And $F_\theta$ cannot overfit on $\vs_\text{local}^{(i)}$ from $S_{\mathbf{W}}$ through training, leading to disjointly learned $S_{\mathbf{W}}$ and $F_\theta$. Therefore, ProtoGate can be resistant to the co-adaptation problem.

\vspace{-2mm}
\subsection{Disjoint Training Loss}
\label{sec:objective_func}
We further incorporate the core components into the estimator of ProtoGate's training loss:

\vspace{-5mm}
\begin{equation}
\begin{aligned}
    \mathcal{L_{\text{total}}} (\textbf{W}) \coloneqq \mathbb{E}_{(X,Y)} \left[ \mathcal{L}_{\text{pred}} + \mathcal{R}_{\text{select}} \right].
    \label{eq:training_loss}
\end{aligned}
\end{equation}
We can see that the gating network $S_{\mathbf{W}}$ is the only trainable component, denoting disjointly learned feature selector and predictor (i.e., ProtoGate's training objective only needs to optimise $\mathbf{W}$). Moreover, the training loss is fully differentiable, allowing ProtoGate to be optimised end-to-end in a single stage with standard gradient-based approaches.

\begin{table*}[!t]
    \centering
    \caption{\textbf{Classification accuracy (\%) on seven HDLSS real-world tabular datasets.} We report the mean $\pm$ std balanced accuracy and average accuracy rank across datasets. A higher rank implies higher accuracy. Note that ``$-$'' denotes failed convergence, and the rank is computed with the balanced accuracy of other methods. We highlight the {\color[HTML]{008080} \textbf{First}}, {\color[HTML]{7030A0} \textbf{Second}} and {\color[HTML]{C65911} \textbf{Third}} ranking accuracy for each dataset. ProtoGate consistently ranks Top-3 across datasets and achieves the best overall performance.}
    \label{tab:acc}
    \resizebox{\textwidth}{!}{
    \begin{tabular}{m{0.1cm}lrrrrrrr|r}
    \toprule
    \multicolumn{2}{l} {Methods} & colon & lung & meta-dr & meta-pam & prostate & tcga-2y & toxicity & \textbf{Rank} \\
    \midrule
    
    \multirow{4}{*}{\rotatebox{90}{\begin{tabular}{l} \textit{Baseline} \end{tabular}}}
    & Ridge & 77.50$_{\pm\text{9.43}}$ & 92.77$_{\pm\text{6.45}}$ & {\color[HTML]{C65911} \textbf{59.19}}$_{\pm\text{9.78}}$ & 92.16$_{\pm\text{7.69}}$ & 87.22$_{\pm\text{8.56}}$ & 57.73$_{\pm\text{5.83}}$ & {\color[HTML]{C65911} \textbf{91.88}}$_{\pm\text{5.49}}$ & 6.71$_{\pm\text{2.75}}$ \\
    
    & SVM & 70.75$_{\pm\text{13.93}}$ & 72.77$_{\pm\text{8.33}}$ & 50.64$_{\pm\text{2.24}}$ & 61.46$_{\pm\text{7.65}}$ & 85.75$_{\pm\text{6.63}}$ & 52.63$_{\pm\text{4.02}}$ & 66.75$_{\pm\text{7.86}}$ & 15.14$_{\pm\text{1.68}}$ \\
    
    & KNN & 71.65$_{\pm\text{12.03}}$ & 91.06$_{\pm\text{5.41}}$ & 54.64$_{\pm\text{7.92}}$ & 82.79$_{\pm\text{7.95}}$ & 78.78$_{\pm\text{9.20}}$ & 58.83$_{\pm\text{6.71}}$ & 83.86$_{\pm\text{7.07}}$ & 10.86$_{\pm\text{3.98}}$ \\
    
    & MLP & 80.00$_{\pm\text{8.70}}$ & {\color[HTML]{008080} \textbf{96.47}}$_{\pm\text{2.69}}$ & 57.89$_{\pm\text{9.63}}$ & {\color[HTML]{7030A0} \textbf{96.17}}$_{\pm\text{2.59}}$ & 87.22$_{\pm\text{7.41}}$ & {\color[HTML]{7030A0} \textbf{60.18}}$_{\pm\text{7.24}}$ & {\color[HTML]{008080} \textbf{94.48}}$_{\pm\text{4.28}}$ & {\color[HTML]{7030A0} \textbf{3.86}}$_{\pm\text{3.34}}$ \\
    \midrule
    
    \multirow{6}{*}{\rotatebox{90}{\begin{tabular}{l} \textit{Global} \end{tabular}}} 
    & Lasso & 79.40$_{\pm\text{10.18}}$ & {\color[HTML]{7030A0} \textbf{94.47}}$_{\pm\text{4.39}}$ & 58.58$_{\pm\text{9.17}}$ & 95.15$_{\pm\text{2.83}}$ & {\color[HTML]{7030A0} \textbf{91.18}}$_{\pm\text{6.39}}$ & 56.99$_{\pm\text{5.21}}$ & 91.86$_{\pm\text{6.03}}$ & {\color[HTML]{C65911} \textbf{5.00}}$_{\pm\text{2.89}}$ \\
    
    & RF & {\color[HTML]{C65911} \textbf{80.05}}$_{\pm\text{10.37}}$ & 91.73$_{\pm\text{6.61}}$ & 51.48$_{\pm\text{3.41}}$ & 88.73$_{\pm\text{6.24}}$ & 90.38$_{\pm\text{7.31}}$ & 58.70$_{\pm\text{6.84}}$ & 79.78$_{\pm\text{7.10}}$ & 8.71$_{\pm\text{5.02}}$ \\

    & XGBoost & 72.60$_{\pm\text{12.59}}$ & 86.61$_{\pm\text{8.72}}$ & 58.09$_{\pm\text{8.65}}$ & 92.65$_{\pm\text{5.40}}$ & 82.55$_{\pm\text{10.22}}$ & 55.91$_{\pm\text{6.97}}$ & 70.13$_{\pm\text{7.85}}$ & 11.71$_{\pm\text{2.69}}$ \\
    
    & CatBoost & 72.65$_{\pm\text{10.12}}$ & 91.57$_{\pm\text{5.74}}$ & 57.08$_{\pm\text{5.74}}$ & 95.93$_{\pm\text{4.39}}$ & 90.24$_{\pm\text{6.87}}$ & 55.09$_{\pm\text{7.52}}$ & 81.95$_{\pm\text{7.47}}$ & 9.00$_{\pm\text{3.56}}$ \\
    
    & LightGBM & 76.60$_{\pm\text{11.67}}$ & 93.42$_{\pm\text{5.91}}$ & 58.23$_{\pm\text{8.56}}$ & 94.98$_{\pm\text{5.19}}$ & {\color[HTML]{008080} \textbf{91.38}}$_{\pm\text{5.71}}$ & 57.09$_{\pm\text{7.87}}$ & 81.98$_{\pm\text{6.25}}$ & 6.43$_{\pm\text{3.21}}$ \\
    
    & STG & 79.55$_{\pm\text{10.53}}$ & 93.30$_{\pm\text{6.28}}$ & 58.15$_{\pm\text{8.67}}$ & 76.13$_{\pm\text{8.19}}$ & 89.38$_{\pm\text{5.85}}$ & 57.04$_{\pm\text{5.76}}$ & 87.95$_{\pm\text{5.01}}$ & 7.43$_{\pm\text{3.60}}$ \\
    \midrule
    
    \multirow{7}{*}{\rotatebox{90}{\begin{tabular}{l} \textit{Local} \end{tabular}}} 
    & TabNet & 56.75$_{\pm\text{15.20}}$ & 80.14$_{\pm\text{12.23}}$ & 53.87$_{\pm\text{7.31}}$ & 82.66$_{\pm\text{11.56}}$ & 66.55$_{\pm\text{15.33}}$ & 52.16$_{\pm\text{8.20}}$ & 41.68$_{\pm\text{9.03}}$ & 15.29$_{\pm\text{1.60}}$ \\
    
    & L2X & 57.60$_{\pm\text{13.48}}$ & 50.02$_{\pm\text{14.26}}$ & 52.54$_{\pm\text{8.30}}$ & 62.64$_{\pm\text{13.75}}$ & 61.78$_{\pm\text{13.69}}$ & 52.30$_{\pm\text{6.29}}$ & 31.72$_{\pm\text{9.11}}$ & 14.29$_{\pm\text{0.70}}$ \\
    
    & INVASE & $-$ & 91.22$_{\pm\text{6.16}}$ & $-$ & 91.70$_{\pm\text{6.84}}$ & $-$ & 55.98$_{\pm\text{6.45}}$ & 79.94$_{\pm\text{6.60}}$ & 11.29$_{\pm\text{0.95}}$ \\

    & REAL-X & 76.75$_{\pm\text{12.21}}$ & 93.27$_{\pm\text{4.32}}$ & {\color[HTML]{7030A0} \textbf{60.01}}$_{\pm\text{7.12}}$ & 95.59$_{\pm\text{3.04}}$ & 86.75$_{\pm\text{6.68}}$ & {\color[HTML]{C65911} \textbf{59.30}}$_{\pm\text{7.49}}$ & 90.79$_{\pm\text{4.75}}$ & 5.86$_{\pm\text{3.18}}$ \\

    & LSPIN & {\color[HTML]{7030A0} \textbf{81.30}}$_{\pm\text{7.97}}$ & 76.92$_{\pm\text{9.38}}$ & 53.98$_{\pm\text{8.00}}$ & {\color[HTML]{008080} \textbf{97.18}}$_{\pm\text{3.16}}$ & 87.75$_{\pm\text{6.74}}$ & 55.95$_{\pm\text{7.45}}$ & 83.47$_{\pm\text{8.59}}$ & 8.29$_{\pm\text{5.19}}$ \\

    & LLSPIN & 79.35$_{\pm\text{7.74}}$ & 70.10$_{\pm\text{12.31}}$ & 56.77$_{\pm\text{9.65}}$ & 95.50$_{\pm\text{3.60}}$ & 88.71$_{\pm\text{5.98}}$ & 57.88$_{\pm\text{6.02}}$ & 81.67$_{\pm\text{9.01}}$ & 9.00$_{\pm\text{3.65}}$ \\

    \cmidrule{2-10}
    
    & \textbf{ProtoGate (Ours)} & {\color[HTML]{008080} \textbf{83.95}}$_{\pm\text{9.82}}$ & {\color[HTML]{C65911} \textbf{93.44}}$_{\pm\text{6.37}}$ & {\color[HTML]{008080} \textbf{60.43}}$_{\pm\text{7.62}}$ & {\color[HTML]{C65911} \textbf{95.96}}$_{\pm\text{3.93}}$ & {\color[HTML]{C65911} \textbf{90.58}}$_{\pm\text{5.72}}$ & {\color[HTML]{008080} \textbf{61.18}}$_{\pm\text{6.47}}$ & {\color[HTML]{7030A0} \textbf{92.34}}$_{\pm\text{5.67}}$ & {\color[HTML]{008080} \textbf{2.00}}$_{\pm\text{1.00}}$ \\
    \bottomrule
    \end{tabular}
    }   
    \vspace{-4mm}
\end{table*}

\vspace{-2.3mm}
\section{Experiments}
\label{sec:exp}
\looseness-1
We evaluate ProtoGate by answering three questions:

\vspace{-1mm}
\textbf{Performance (Q1):} Can ProtoGate achieve competitive accuracy in classification and sparsity in feature selection against those of the benchmark methods?

\vspace{-1mm}
\textbf{Ablation Impacts (Q2):} What contributions to model performance do individual components of ProtoGate make?

\vspace{-1mm}
\textbf{Interpretability (Q3):} Can ProtoGate improve the interpretability of selected features and predictions?

\vspace{-1mm}
To answer these questions, we first compare ProtoGate against 16 benchmark methods on real-world classification tasks (\cref{sec:exp_cls}). Next, we investigate the effectiveness of global-to-local feature selection and non-parametric prototype-based prediction with ablation studies (\cref{sec:ablation_sel}). Finally, we quantitatively and qualitatively evaluate ProtoGate's interpretability by analysing the selected features and predictions (\cref{sec:exp_interpretability}).

\vspace{-1mm}
\looseness-1
\textbf{Real-world datasets.}
We use seven open-source HDLSS tabular biomedical datasets~\cite{margeloiu2022weight}, which contain 2,000-5,966 features with 62-197 samples of 2-4 different classes. We also select four challenging non-HDLSS and non-biomedical tabular datasets from the TabZilla benchmark~\cite{mcelfresh2023neural}, which contain 19-857 features with 846-2,000 samples of 4-100 classes. Full descriptions are in \cref{appendix:data_real}.

\vspace{-1mm}
\textbf{Synthetic datasets.}
We generate three challenging synthetic datasets (Syn1, Syn2 and Syn3) by adapting those in prior studies \cite{yang2022locally, yoon2018invase, jethani2021have}. In our settings, each dataset has 200 samples of 100 features, which is only 10\% of the samples and 10 times more features compared to~\citet{yang2022locally}. Each dataset has two classes, and we introduce an imbalance by generating 50 and 150 samples, respectively. The exact synthetic datasets are described in \cref{appendix:data_syn}.

\vspace{-1mm}
\textbf{Benchmark methods.}
We compare ProtoGate against 16 benchmark methods, including four standard baselines: Ridge Regression (Ridge)~\cite{cox1958regression}, SVM~\cite{cortes1995support}, KNN~\cite{fix1985discriminatory} and MLP~\cite{gorishniy2021revisiting}; six global feature selection methods (also referred to as ``global methods''): Lasso~\cite{tibshirani1996regression}, Random Forest (RF)~\cite{breiman2001random}, XGBoost~\cite{chen2016xgboost}, CatBoost~\cite{prokhorenkova2018catboost}, LightGBM~\cite{ke2017lightgbm} and STG~\cite{yamada2020feature}; six local feature selection methods (also referred to as ``local methods''): TabNet~\cite{arik2021tabnet}, L2X~\cite{chen2018learning}, INVASE~\cite{yoon2018invase}, REAL-X~\cite{jethani2021have} and LSPIN/LLSPIN~\cite{yang2022locally}.

\vspace{-1mm}
\looseness-1
\textbf{Experimental setup.}
For each dataset, we use 5-fold cross-validation and repeat it for five times, summing up to 25 runs. For the training fold, we randomly select 10\% of samples as the validation set. For each model and each dataset, we tune its hyperparameters according to the aggregated performance on validation sets across 25 runs. Our setup with validation sets leads to less training data, but it is more realistic than those in prior work~\cite{yang2022locally}, where \textit{no} validation sets are used for model selection on ``colon'' and ``toxicity'' datasets. Note that the reported results are averaged over 25 runs on test sets by default. When aggregating results across datasets, we use the average distance to the minimum (ADTM) metric via affine renormalisation between the top-performing and worse-performing models~\cite{grinsztajn2022tree}. Full experimental details are in \cref{appendix:reprod}.

\begin{figure*}[!t]
    \centering
    \subfloat{\includegraphics[width=0.7\columnwidth]{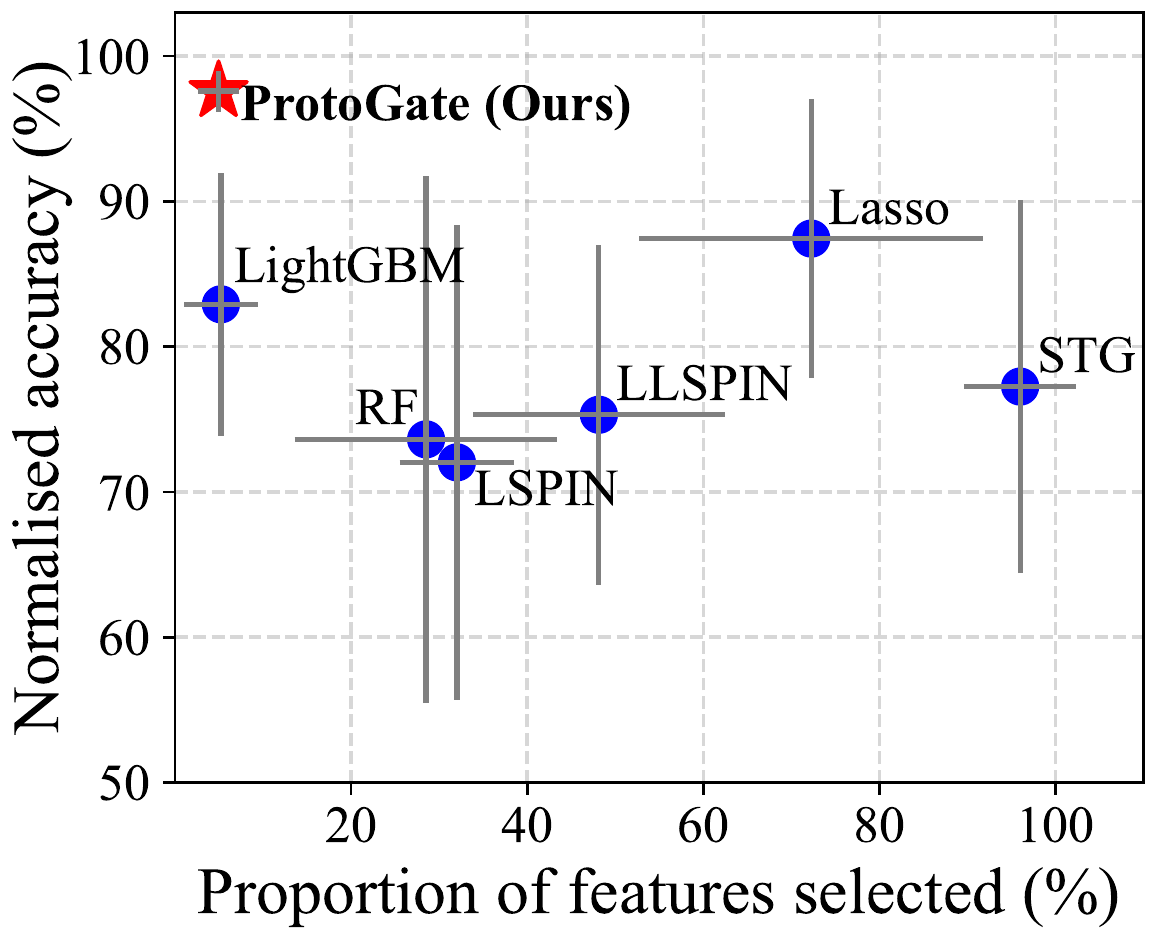}}
    \subfloat{\includegraphics[width=0.664\columnwidth]{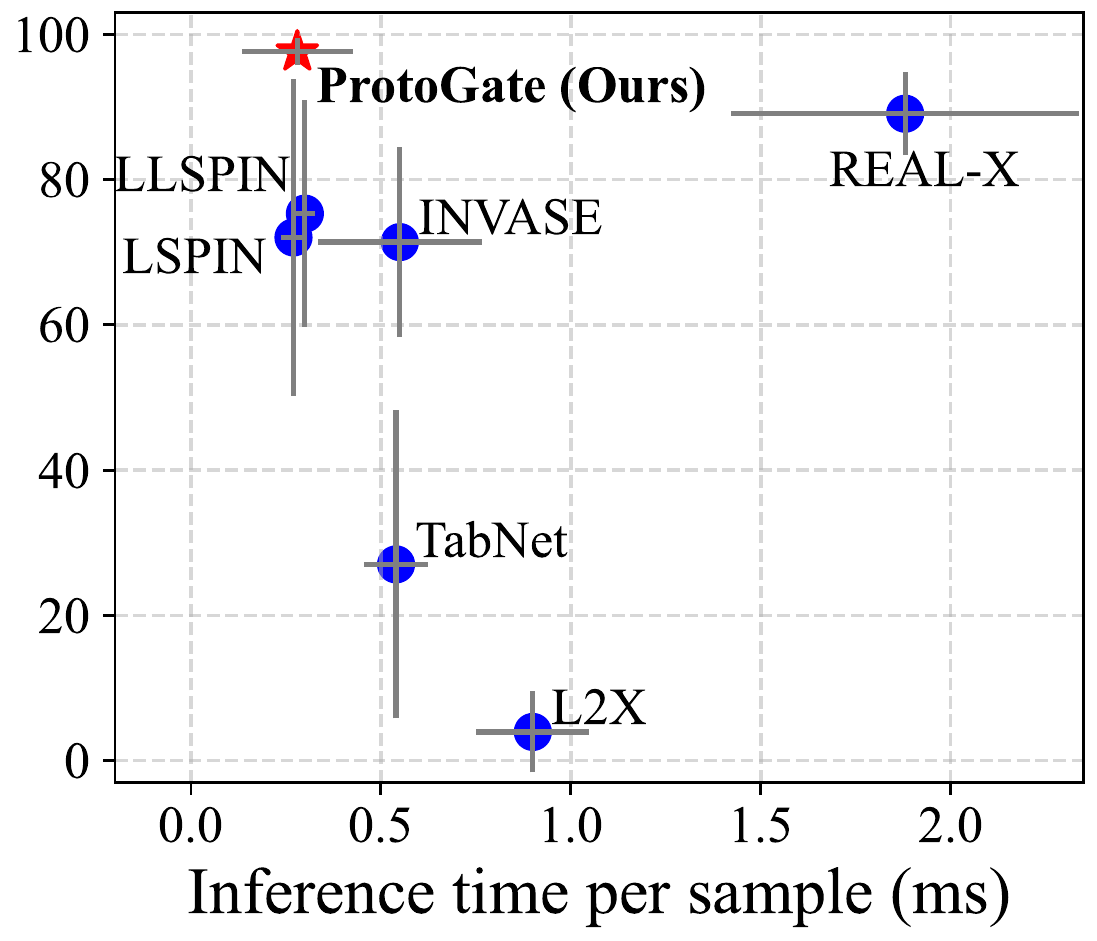}}
    \subfloat{\includegraphics[width=0.664\columnwidth]{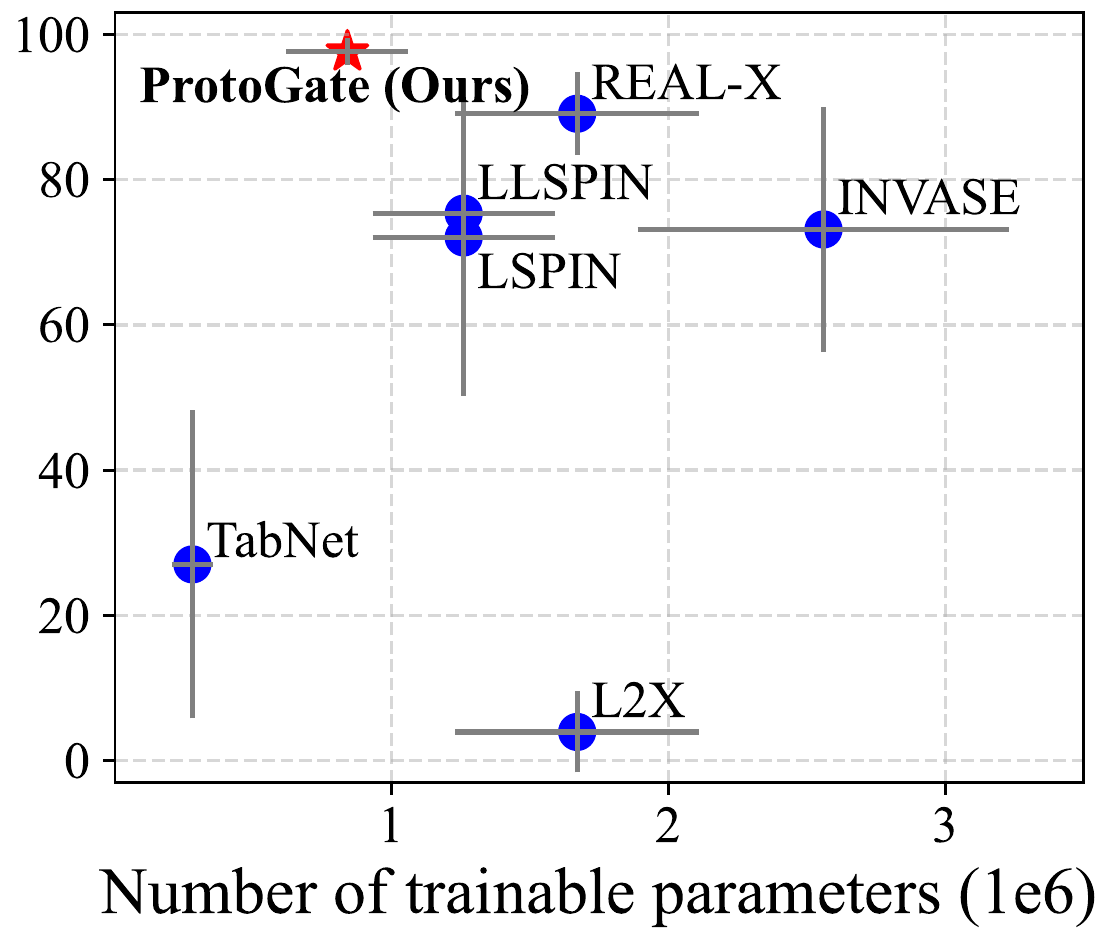}}
    \vspace{-2mm}
    \caption{\textbf{Left:} Mean normalised feature selection sparsity vs. mean normalised balanced accuracy. We exclude the outliers (TabNet, L2X and INVASE) due to their suboptimal results or failed convergence. \textbf{Middle:} Median runtime vs.\ mean normalised balanced accuracy. \textbf{Right:} Median model size vs.\ mean normalised balanced accuracy. ProtoGate generally achieves higher accuracy and fewer selected features with higher computation efficiency than other local methods.}
    \label{fig:trade_offs}
    \vspace{-4mm}
\end{figure*}

\subsection{Prediction on Real-world Datasets (Q1)}
\label{sec:exp_cls}
We compare ProtoGate against benchmark methods in real-world scenarios. We evaluate the models from four aspects: (i) classification accuracy, (ii) feature selection sparsity, (iii) computation efficiency and (iv) model generalisability.

\looseness-1
\textbf{Classification accuracy.}
We evaluate the performance in real-world classification tasks via \textit{balanced accuracy} and the corresponding \textit{rank}. Typically, higher classification accuracy denotes that the selected features are more informative.

\cref{tab:acc} shows that ProtoGate exhibits robust performance in real-world HDLSS tasks,  with notably high results on ``colon'', ``meta-dr'' and ``tcga-2y'' datasets. It consistently ranks in the top three for balanced accuracy across datasets, outperforming the benchmark methods in average rank. In particular, ProtoGate surpasses the best-performing benchmark local method, REAL-X, by a clear margin: from 5.86$\rightarrow$2.00 in average rank. In addition, ProtoGate provides \textit{in-model} explainability of selected features, while REAL-X only provides \textit{post-hoc} explainability. Although other local methods can achieve high classification accuracy (e.g., LSPIN on ``meta-pam'' dataset), they are susceptible to co-adaptation problem and hence the interpretability of their selected features cannot be fairly compared to ProtoGate's (see  \cref{sec:exp_interpretability}). Another important finding is that simple Lasso and MLP can outperform local methods on HDLSS datasets, but not ProtoGate. 

\vspace{0.7mm}
The results suggest that ProtoGate can be a robust feature selection method for HDLSS tasks. We attribute ProtoGate's competitive performance to three main reasons. Firstly, ProtoGate takes advantage of both global and local selection. ProtoGate selects instance-wise features \textit{after} explicitly capturing the globally important features, which adaptively balances global and local feature selection. Secondly, ProtoGate meticulously attends to the similarity in high-dimensional space. For instance, LSPIN has constraints to generate similar masks for similar samples in the original input space. However, the poor accuracy of KNN shows that the similarity between samples \textit{before} feature selection can be misleading for predictions, thus harming the model performance. In contrast, ProtoGate inherently leverages the similarity between samples \textit{after} feature selection. Furthermore, ProtoGate is better aligned with the clustering assumption because it encourages samples of the same class to have similar representations, \textit{not} similar masks. Thirdly, ProtoGate is parameter-efficient. In contrast to other local methods that can easily be over-parameterised due to complex network-based architectures,  ProtoGate has a sparsified gating network and a non-parametric predictor, which notably reduces the number of trainable parameters.

\vspace{0.65mm}
\looseness-1
\textbf{Feature selection sparsity.}
We evaluate the sparsity via \textit{proportion of features selected per sample} for in-model selection methods. In \cref{fig:trade_offs} (Left), we plot the ``sparsity vs. accuracy'' by aggregating the results across datasets. The numerical results and visualisation of selected features are in \cref{appendix:complete_sparsity}. Overall, ProtoGate stably achieves higher accuracy with a lower proportion of features selected per sample than benchmark methods, suggesting that the features selected by ProtoGate are easier to interpret by humans.

\vspace{0.65mm}
\looseness-1
\textbf{Feature selection behaviour.}
We further analyse the behaviour of ProtoGate via the \textit{composition of the selected features} (i.e., the proportions of ``both selected'' and ``locally recovered'' features in $\textbf{s}_\text{local}^{(i)}$). We find that ProtoGate exhibits adaptive feature selection behaviour. On most datasets (e.g., the ``colon'' dataset), the majority proportion of $\textbf{s}_\text{local}^{(i)}$ is generally included in $\textbf{s}_\text{global}$  (i.e., the proportion of ``both selected'' features generally exceeds that of ``locally recovered'' features), suggesting ProtoGate can behave more globally in feature selection. In contrast, on the other datasets (e.g., the ``meta-dr'' dataset), ProtoGate tends to recover more features locally than reusing the globally selected features, highlighting a more pronounced local selection strategy. Full results and discussion are in \cref{appendix:feat_composition} and \cref{appendix:lambda_t_test}.

\vspace{0.65mm}
\textbf{Computation efficiency.}
We evaluate the speed and model size via \textit{inference time} and \textit{number of trainable parameters}. In \cref{fig:trade_offs} (Middle), the trade-off between accuracy and inference speed shows ProtoGate excels benchmark local methods in maintaining high accuracy while keeping the inference time short. In \cref{fig:trade_offs} (Right), the trade-off between accuracy and model size further shows ProtoGate is more parameter-efficient than benchmark local methods. More results on computation efficiency are in \cref{appendix:efficiency}. The results suggest that ProtoGate can be a highly efficient local method for HDLSS tasks.

\begin{table*}[!t]
\centering
\caption{\textbf{Classification accuracy (\%) of ProtoGate and its two variants.}  We \textbf{bold} the highest accuracy for each dataset. ProtoGate consistently achieves the best accuracy across all datasets, showing the complementary effects of global and local selection.}
\label{tab:ablation_reg_acc}
\resizebox{\textwidth}{!}{
\begin{tabular}{lrrrrrrr}
\toprule
Methods & colon & lung & meta-dr & meta-pam & prostate & tcga-2y & toxicity \\
\midrule
ProtoGate-global & 75.95$_{\pm\text{11.63}}$ & 90.20$_{\pm\text{6.67}}$ & 50.66$_{\pm\text{4.08}}$ & 92.80$_{\pm\text{6.00}}$ & 89.02$_{\pm\text{5.38}}$ & 56.46$_{\pm\text{6.15}}$ & 70.27$_{\pm\text{10.31}}$ \\

ProtoGate-local & 81.45$_{\pm\text{10.87}}$ & 90.51$_{\pm\text{8.36}}$ & 58.88$_{\pm\text{8.88}}$ & 92.60$_{\pm\text{6.07}}$ & 89.58$_{\pm\text{5.96}}$ & 59.26$_{\pm\text{6.34}}$ & 87.44$_{\pm\text{6.84}}$ \\

\textbf{ProtoGate (Global-to-local)} & \textbf{83.95}$_{\pm\text{9.82}}$ & \textbf{93.56}$_{\pm\text{6.29}}$ & \textbf{60.43}$_{\pm\text{7.62}}$ & \textbf{95.96}$_{\pm\text{3.93}}$ & \textbf{90.58}$_{\pm\text{5.72}}$ & \textbf{61.18}$_{\pm\text{6.47}}$ & \textbf{92.34}$_{\pm\text{5.67}}$ \\
\bottomrule
\end{tabular}
}
\vspace{-6mm}
\end{table*}

\begin{figure*}[!t]
    \centering
    \subfloat{\includegraphics[width=0.875\columnwidth]{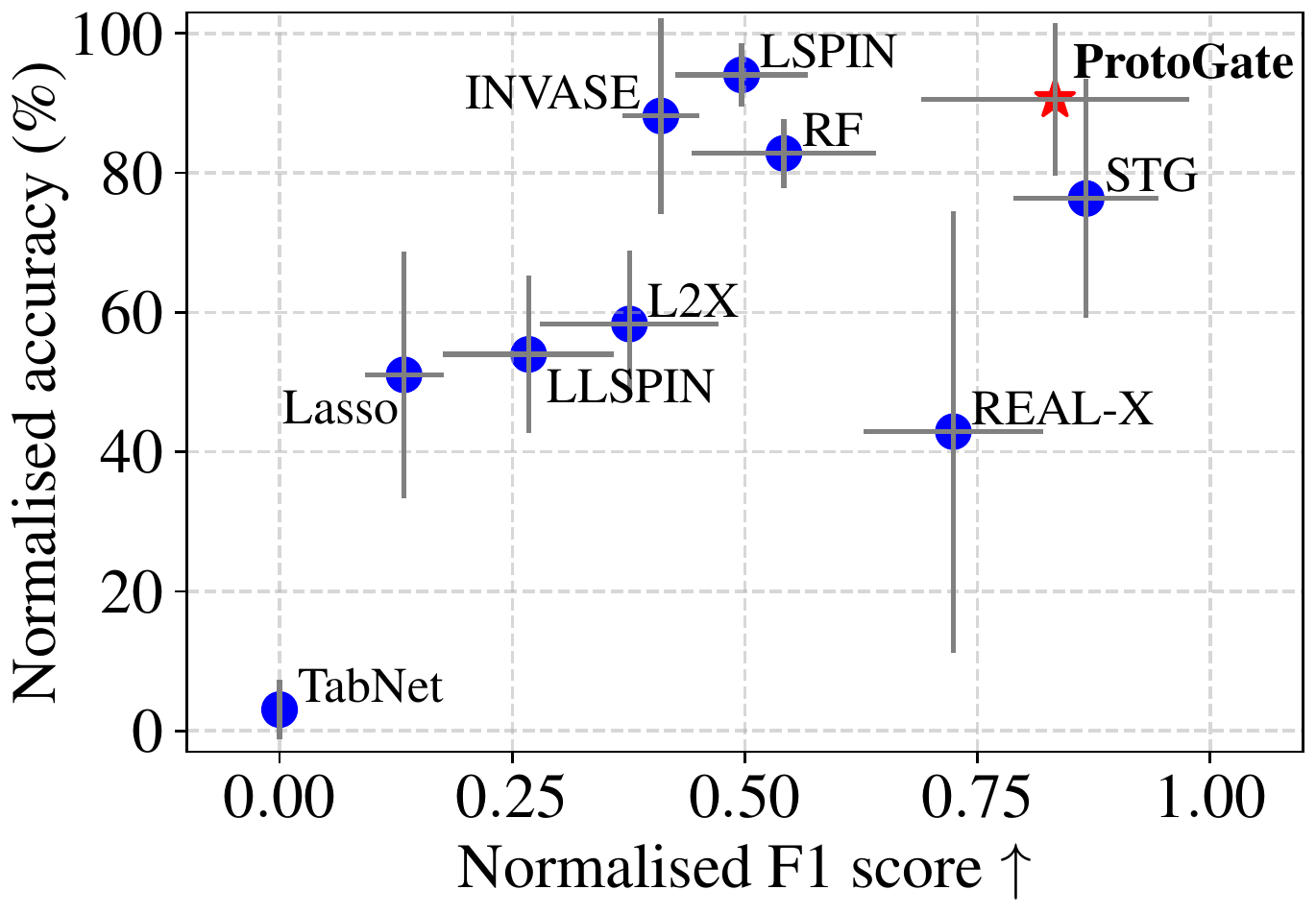}}
    \subfloat{\includegraphics[width=0.85\columnwidth]{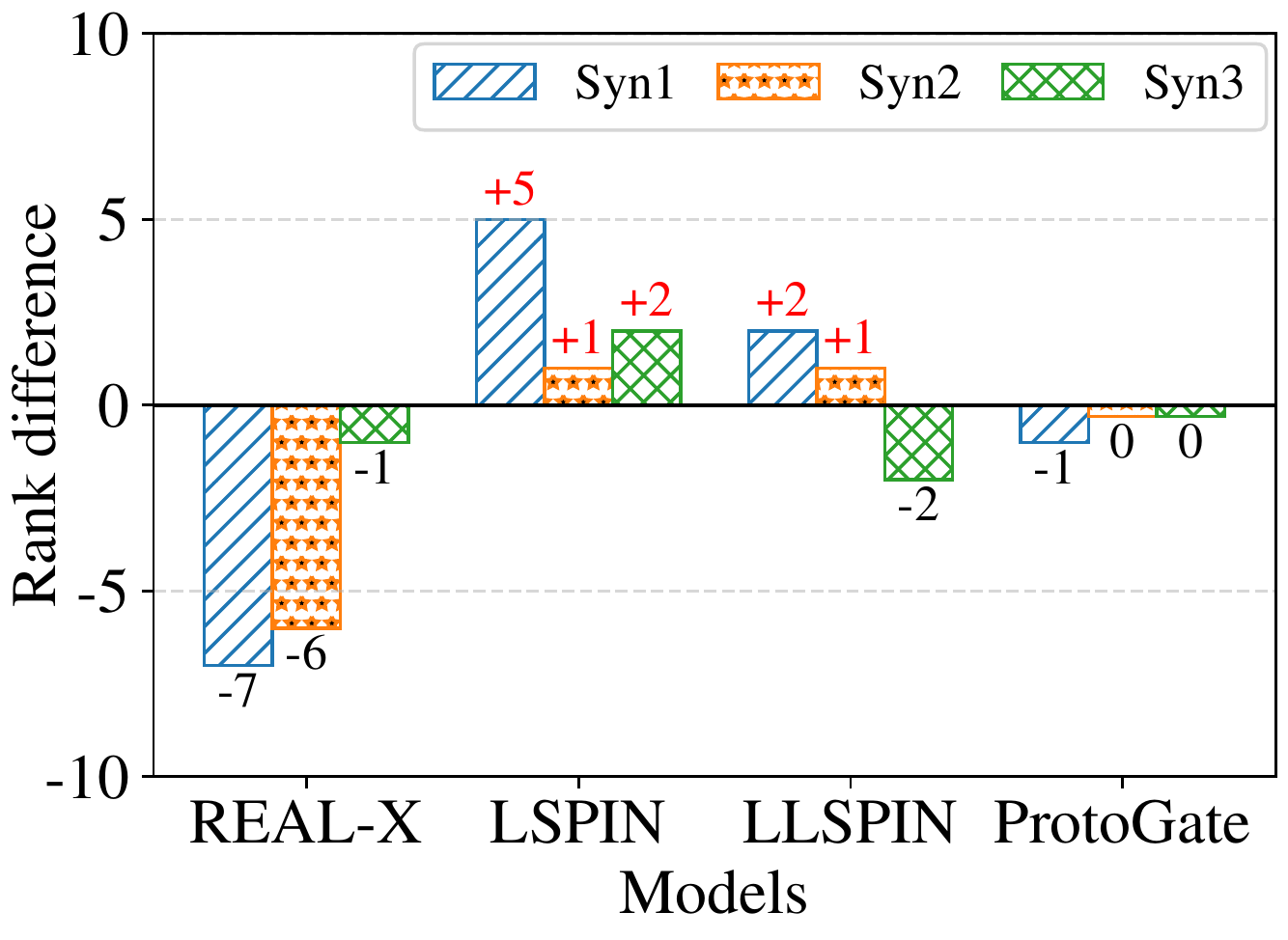}}
    \vspace{-5mm}
    \caption{\textbf{Fidelity evaluation of selected features on three synthetic datasets.} \textbf{Left:} Mean normalised F1 score of selected features (F1$_\text{select}$) vs. mean normalised balanced accuracy (ACC$_{\text{pred}}$). \textbf{Right:} Rank difference between F1$_\text{select}$ and ACC$_{\text{pred}}$. A positive value (highlighted in red) indicates low-fidelity feature selection. Note that we plot short bars at zeros for visual clearance. ProtoGate has competitive trade-off between F1$_\text{select}$ and ACC$_{\text{pred}}$, and consistently non-positive rank differences, showing robustness to co-adaptation.}
    \vspace{-5mm}
    \label{fig:interpretability_fidelity}
\end{figure*}

\looseness-1
\textbf{Model generalisability.}
We further evaluate classification performance on four non-HDLSS and non-biomedical datasets. Although ProtoGate is designed for the HDLSS regime, the evaluation results show that ProtoGate generalises well on non-HDLSS real-world datasets, which contain a much larger number of samples and classes than the considered HDLSS datasets. Specifically, ProtoGate can achieve competitive accuracy against benchmark methods, including the gradient boosting trees. Detailed results and discussion are in \cref{appendix:non_HDLSS}.

\subsection{Ablation Studies (Q2)}
\label{sec:exp_ablation}
We investigate the efficacy of ProtoGate's two main components: (i) Global-to-local Feature Selection and (ii) Non-parametric Prototype-based Prediction.

\looseness-1
\textbf{Soft
global selection and local selection are complementary, rather than interchangeable.}
\label{sec:ablation_sel}
We illustrate how ProtoGate balances global and local feature selection by performing ablation experiments on the interplay between $\lambda_{\text{global}}$ and $\lambda_{\text{local}}$. We compare ProtoGate ($\lambda_{\text{global}} \neq 0, \lambda_{\text{local}} \neq 0$) against two variants: (i) remove $\mathcal{R_{\text{local}}}(\vs_{\text{local}}^{(i)})$: ProtoGate-global ($\lambda_{\text{global}} \neq 0, \lambda_{\text{local}} = 0$) and (ii) remove $\mathcal{R_{\text{global}}}(\vs_{\text{global}})$: ProtoGate-local ($\lambda_{\text{global}} = 0, \lambda_{\text{local}} \neq 0$). Detailed experimental setup is in \cref{appendix:ablation_fs}.

\vspace{-0.5mm}
\cref{tab:ablation_reg_acc} shows that ProtoGate consistently outperforms its variants in accuracy, suggesting that either removing global selection (ProtoGate-local) or removing local selection (ProtoGate-global) leads to performance degradation. Therefore, the two stages are complementary for more accurate selection results.

\looseness-1
\cref{tab:ablation_reg_acc} further shows the effects of explicitly leveraging global information in feature selection. Note that we do not claim that the purely local methods, such as ProtoGate-local, omit global information completely. Indeed, they can promote global selection by increasing the regularisation strength for local selection to render smaller feature sets per sample. However, the ablation studies on feature selection sparsity (\cref{appendix:ablation_fs}) show that even when ProtoGate and ProtoGate-local achieve similar sparsity, the accuracy gap remains. This demonstrates that ProtoGate selects different features to ProtoGate-local (we will also see this in \cref{fig:interpretability_generalisability_normal}). On the other hand, they can also reduce the regularisation strength to render larger feature sets per sample (i.e., increasing the likelihood of selecting the same features for different samples). However, larger feature sets do \textit{not} guarantee that the additionally selected ``common'' features are informative, leading to lower accuracy. In contrast, ProtoGate explicitly promotes global information by applying $\ell_1$-regularisation on $\mathbf{W}^{[1]}$ and thus generates smaller feature sets per sample. Therefore, the soft global selection and local selection are complementary to help ProtoGate exhibit unique feature selection behaviours.

\looseness-1
\textbf{Non-parametric prototype-based prediction is appropriate for biomedical tasks.}
We investigate how the prototype-based predictor impacts classification accuracy. Specifically, we compare ProtoGate against two variants with different predictors and investigate different numbers of different nearest neighbours. The further studies (\cref{appendix:ablation_pred})
show the efficacy and robustness of non-parametric prototype-based prediction.

\subsection{Interpretability Evaluation (Q3)}
\label{sec:exp_interpretability}
We evaluate ProtoGate's interpretability by focusing on three aspects of the selected features and predictions: (i) fidelity of selected features, (ii) transferability of selected features and (iii) explainability of predictions.

\looseness-1
Note that our criteria are more realistic and comprehensive than prior work~\cite{alvarez2018towards, jethani2021have, yang2022locally}. In addition to fidelity and transferability of selected features, we also evaluate the explainability of predictions for local feature selection methods. More discussion on the metrics for interpretability evaluation is in \cref{appendix:metric_interpretability}.

\vspace{2mm}
\textbf{Fidelity of selected features.}
\label{sec:exp_syn}
This criterion focuses on \textit{``Can the classification accuracy denote feature selection performance, i.e., does the model achieve high accuracy with poorly selected features?''}, and we quantitatively evaluate it on synthetic datasets via the \textit{rank difference} between feature selection and prediction (\cref{fig:interpretability_fidelity}). Specifically, we first compute the F1 score of selected features (F1$_\text{select}$) and the balanced accuracy of predictions (ACC$_{\text{pred}}$). Then we compute the rank difference between F1$_\text{select}$ and ACC$_{\text{pred}}$. A \textit{positive} difference demonstrates the model makes accurate predictions with relatively low-fidelity features. The complete numerical results are in \cref{appendix:co_adapt}.

\looseness-1
\cref{fig:interpretability_fidelity} (Left) shows that ProtoGate achieves competitive trade-off between F1$_\text{select}$ and ACC$_{\text{pred}}$ against the benchmark methods. In contrast, some existing methods, such as LSPIN, can achieve high accuracy in prediction, while the feature selection is relatively inaccurate, indicating poor fidelity of selected features.
\cref{fig:interpretability_fidelity} (Right) further demonstrates that ProtoGate is the only in-model local feature selection method that consistently has non-positive rank differences between F1$_\text{select}$ and ACC$_{\text{pred}}$. Although REAL-X also has non-positive values across datasets, its post-hoc explainability of selected features cannot improve the prediction performance, leading to poor classification accuracy. In contrast, the in-model explainability helps ProtoGate to achieve high fidelity of selected features while guaranteeing high classification accuracy.

\looseness-1
Furthermore, the results raise concerns over the reliability of high accuracy achieved by other benchmark local methods in \cref{sec:exp_cls}. For instance,  LSPIN achieves the best classification accuracy on Syn1, but the correctness of selected features is much worse, with a rank of six out of ten methods. Consequently, LSPIN's high accuracy may not correspond to selecting the real informative features, and this is also true for other local methods that are susceptible to co-adaption. In contrast, ProtoGate's well-aligned performance between feature selection and classification guarantees the fidelity of selected features.

\begin{figure}[!t]
    \centering
    \includegraphics[width=1\columnwidth]{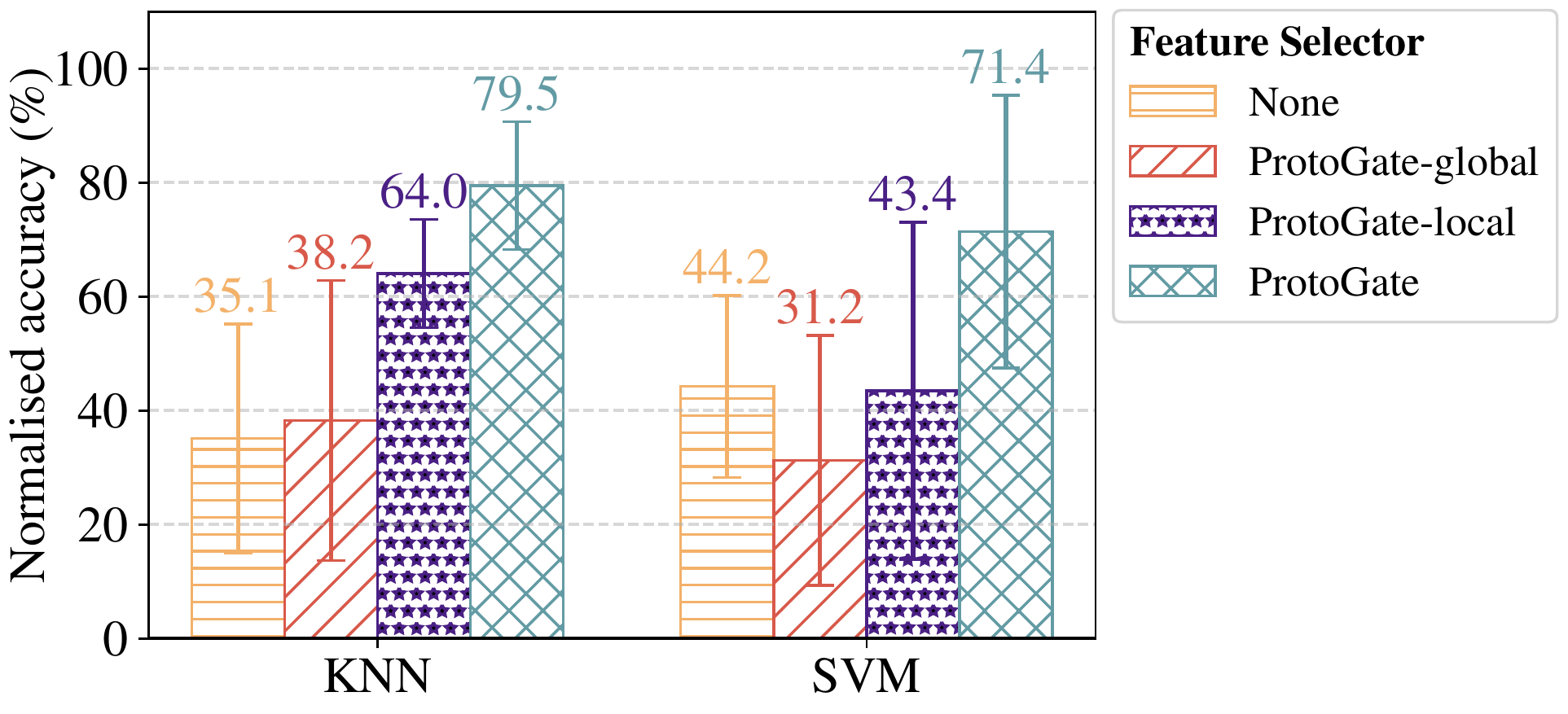}
    \vspace{-8mm}
    \caption{\textbf{Normalised balanced  accuracy (\%) of simple models with different feature selectors.} ProtoGate, with global-to-local selection, selects highly transferable features that generally improve the classification performance of KNN and SVM.}
    \vspace{-4mm}
    \label{fig:interpretability_generalisability_normal}
\end{figure}

\textbf{Transferability of selected features.}
This criterion focuses on \textit{``Can the selected features improve the performance of other simpler models?''}. We evaluate the balanced accuracy of downstream classifiers trained on the features selected by ProtoGate. Specifically, we first process the HDLSS datasets by applying $\vx \odot \vs_{\text{local}}$ -- where the mask $\vs_{\text{local}}$ is generated by ProtoGate, ProtoGate-global or ProtoGate-local -- and then train a KNN and SVM on the masked datasets. For each simple model, we aggregate the balanced accuracy across datasets with the ADTM metric.

\looseness-1
In \cref{fig:interpretability_generalisability_normal}, the features selected by ProtoGate generally improve the performance of KNN and SVM across datasets. In contrast, the benefits conferred by ProtoGate-global and ProtoGate-local are limited, even leading to lower accuracy than vanilla SVM. The fine-grained results on the accuracy improvements per dataset (see \cref{appendix:transferability} for more details) reveal that features selected by ProtoGate consistently avoid performance degradation for simple models, while ProtoGate-global and ProtoGate-local can cause a considerable drop in accuracy. These findings also support that ProtoGate selects different features to its variants that rely solely on either global or local selection, showing the efficacy of global-to-local feature selection. 

\textbf{Explainability of predictions.}
This criterion focuses on \textit{``Can the predictions be easily understood by practitioners?''}, and we qualitatively evaluate it by providing specific examples of predictions. \cref{fig:interpretability_pred} presents UMAP~\cite{mcinnes2018umap} on the ``colon'' dataset to show ProtoGate's inference process. In this split, ProtoGate classifies the new sample by explicitly pointing out the three ($K=3$) nearest prototypes as evidence. ProtoGate's prototypical predictions are easy to interpret because they resemble human behaviour~\cite{kolodner1992introduction, lu2021ace}.

\begin{figure}[!t]
    \centering
    \includegraphics[width=0.85\columnwidth]{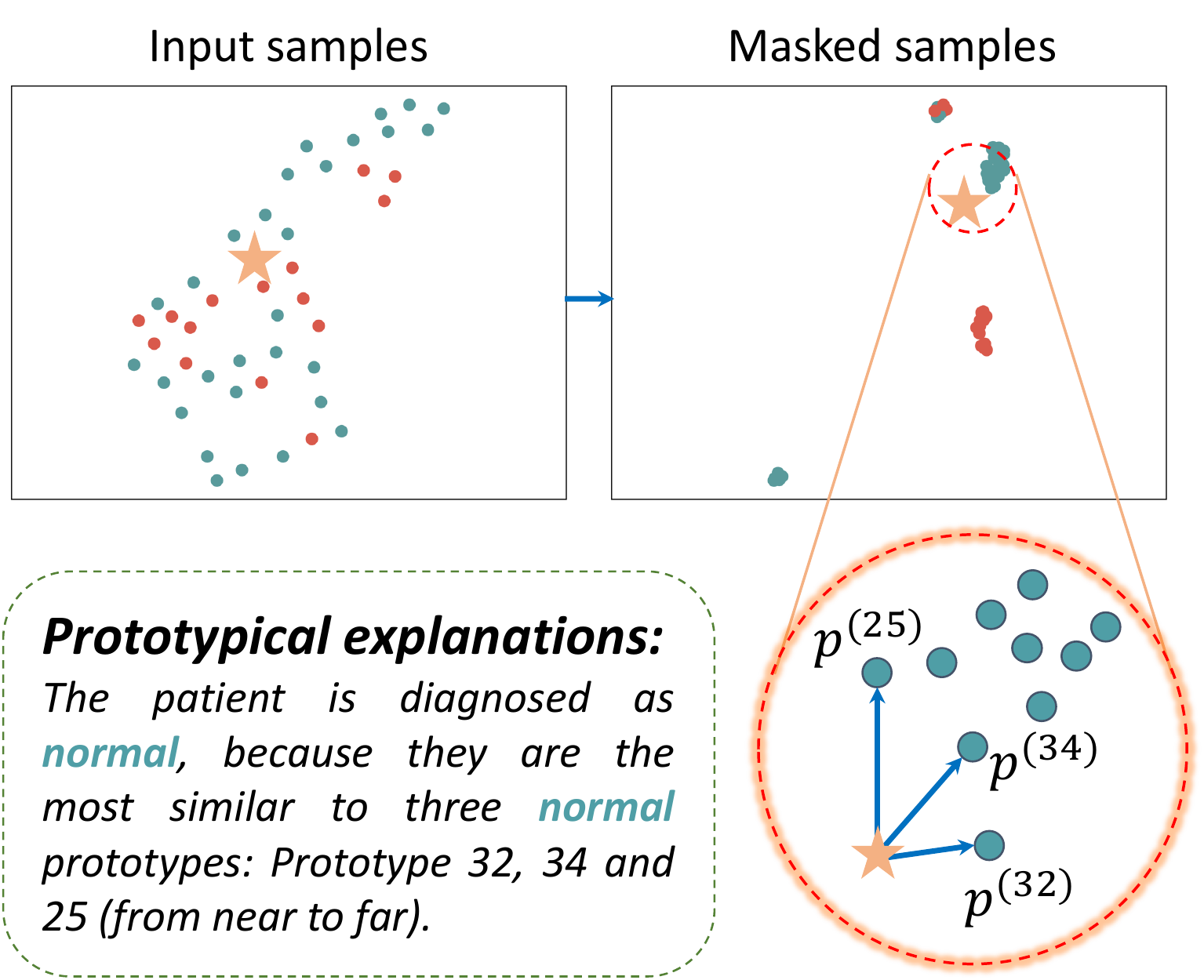}
    \vspace{-3mm}
    \caption{\textbf{Visualisation of ProtoGate's inference on the ``colon'' dataset.} The circles in ``Masked samples'' denote prototype base $\mathcal{B}$ learned by ProtoGate, containing {\color[HTML]{008080} \textbf{normal}} and {\color[HTML]{D9594B} \textbf{cancer}} prototypes $p^{(i)}$. The star is a new patient (i.e., a sample in the test set). ProtoGate provides an explainable diagnosis for the new patient.}
    \label{fig:interpretability_pred}
    \vspace{-5mm}
\end{figure}

\section{Conclusion}
\vspace{-0.19mm}

We introduce ProtoGate, a prototype-based neural model for feature selection in the high-dimensional and low-sample-size regime. ProtoGate proposes to select features in a global-to-local manner and perform non-parametric prototype-based prediction via hybrid sorting. The experimental results on real-world datasets demonstrate that ProtoGate generally improves the accuracy and feature selection sparsity of local methods on tabular datasets without sacrificing computation efficiency. The interpretability evaluation further validates that ProtoGate can effectively guarantee the fidelity and transferability of selected features by elegantly mitigating the co-adaptation problem. ProtoGate's robust performance and the ability to generate human-understandable explanations for its prototypical predictions has the potential to enhance practitioners' experience in decision-making processes beyond the biomedical domain.

\clearpage

\section*{Impact Statement}
This paper presents a novel feature selection method that aims to advance the field of machine learning by addressing challenges in the high-dimensional and low-sample-size regime. Furthermore, ProtoGate offers an elegant solution to the prevalent co-adaptation problem in machine learning models, which helps to pave the path to more robust and applicable feature selection methods. These characteristics can be particularly useful in high-stakes and data-scarce domains like healthcare, such as preclinical drug evaluation in early-stage clinical trials~\cite{bespalov2016failed, morford2011preclinical}. Moreover, ProtoGate can provide patient-wise insights with informative features and similar prototypes as explanations, facilitating automatic diagnosis systems~\cite{fansi2022towards, nazari2022explainable}.

ProtoGate's impact further extends to enabling broader machine learning applications in low-resource regions, such as low-income countries where medical resources are limited. Indeed, improving machine learning models' accuracy and interpretability in low-sample-size regimes can help elevate healthcare quality and foster medical equity~\cite{alami2020artificial, ciecierski2022artificial, mollura2020artificial}. ProtoGate can further facilitate research and enhance machine learning accessibility in various communities. Without foreseeing harmful applications, ProtoGate is designed to enhance the availability and fairness of AI across various societal and scientific domains.

\section*{Acknowledgements}
\looseness-1
AM acknowledges the support from the Cambridge ESRC Doctoral Training Partnership. NS acknowledges the support of the U.S. Army Medical Research and Development Command of the Department of Defense; through the FY22 Breast Cancer Research Program of the Congressionally Directed Medical Research Programs, Clinical Research Extension Award GRANT13769713. Opinions, interpretations, conclusions, and recommendations are those of the authors and are not necessarily endorsed by the Department of Defense.

\nocite{blondel2020fast}
\bibliography{icml2024}
\bibliographystyle{icml2024}

\clearpage

\appendix
\onecolumn

\addcontentsline{toc}{section}{Appendix}
\part{Appendix: ProtoGate: Prototype-based Neural Networks with Global-to-local Feature Selection for Tabular Biomedical Data}
\mtcsetdepth{parttoc}{3} 
\parttoc
\newpage
\section{Summary of Related Work}
\label{appendix:model_cmp}

As a supplement to \cref{sec:related_work}, we further provide a detailed summary of the related work on local feature selection and highlight the differences between ProtoGate and the prior studies. 

\vspace{-1mm}
\textbf{Joint in-model selection (\cref{fig:overview}(a)).} This line of work attempts to predict instance-wise masks for local feature selection via jointly learning a local selector and a predictor~\cite{chen2018learning, yoon2018invase, arik2021tabnet, yang2022locally, yoshikawa2022neural}. TabNet uses sequential attention to select instance-wise features for different samples~\cite{arik2021tabnet}, but it can suffer from overfitting on HDLSS datasets with the complex transformer architecture~\cite{margeloiu2022weight}. L2X uses mutual information with Concrete distribution, but it requires specifying the number of selected features~\cite{chen2018learning}. INVASE addresses such limitation by modelling each feature's mask/gate value with independent Bernoulli distributions~\cite{yoon2018invase}. However, both L2X and INVASE utilise computationally expensive gradient estimators like REINFORCE~\cite{williams1992simple} or REBAR~\cite{tucker2017rebar} to generate sparse masks. Based on STG~\cite{yamada2020feature} and Localized Lasso~\cite{yamada2017localized}, LSPIN~\cite{yang2022locally} further extends the estimation of $\ell_0$-regularisation to the exact formulation. To date, few contributions have been made to investigate the efficacy of globally important features in local feature selection. The recent Contextual Lasso~\cite{thompson2024contextual} attempts to introduce context information for local feature selection, but it requires domain knowledge to partition the features into contextual features and explanatory features, which is impractical for HDLSS regimes. Although LassoNet~\cite{lemhadri2021lassonet} features penalisation on the residual layer parameters and constraints on the first layer's parameters, it can only select features globally and has an unstable training process, even failed convergence, on HDLSS datasets~\cite{margeloiu2022weight, margeloiu2023gcondnet}. In addition, the paradigm to jointly learn a feature selector and a predictor is susceptible to the co-adaptation problem, leading to a considerable loss in the fidelity of selected features~\cite{jethani2021have, adebayo2018sanity, hooker2019benchmark}. Moreover, these methods typically make predictions without a tailored inductive bias for biomedical data, further highlighting their limitations. 

\vspace{-1mm}
These challenges in this relatively underexplored field suggest the complexity and non-trivial nature of proposing seemingly straightforward ideas. Indeed, our work bridges these gaps by (i) exploring the interplay between global and local selection, (ii) tackling the co-adaptation problem via incorporating a trainable feature selector with a non-trainable predictor and (iii) encoding the clustering assumption as an inductive bias by performing prototype-based prediction.

\vspace{-1mm}
\textbf{Disjoint post-hoc selection (\cref{fig:overview}(b)).}
Another promising line of work addresses the heterogeneity across samples by designing models to explain a trained predictor via local feature selection~\cite{jethani2021have, ribeiro2016should, lundberg2017unified, shrikumar2017learning, simonyan2014deep, lundberg2018consistent, bach2015pixel}. However, these methods can have limited applicability because they can only provide post-hoc analysis of the important features. Furthermore, the features they identify do not enhance prediction performance, given that the predictor has already been trained using all features. In contrast, ProtoGate directly utilises the selected features for predictions, thereby not only boosting the non-parametric predictor's accuracy but also preserving the in-model explainability of selected features. Moreover, ProtoGate surpasses post-hoc selection approaches such as REAL-X in computation efficiency, as it mitigates the need to enumerate various mask possibilities for an accurate approximation of the predictor's decision boundary.

\begin{table}[!htbp]
\centering
\vspace{-2mm}
\caption{\textbf{Model design comparison between ProtoGate and prior local feature selection methods.} ProtoGate has novel design rationales and leverages different feature selectors and predictors to the benchmark methods.}
\label{tab:related_work}
\resizebox{\textwidth}{!}{
    \begin{tabular}{l|c|c|c|c|c|c|c}
    \Xhline{3\arrayrulewidth}
    \multirow{2}{*}{\parbox{0.1\linewidth}{\vspace{5mm} Methods}} & \multirow{2}{*}{\parbox{0.08\linewidth}{\vspace{5mm} Category}}                & \multirow{2}{*}{\parbox{0.08\linewidth}{\vspace{5mm} Key ideas}}                                                                                                    & \multicolumn{2}{c|}{Feature Selection}                                          & \multicolumn{2}{c|}{Prediction}   & \multirow{2}{*}{\parbox{0.1\linewidth}{\vspace{3mm} \thead{Co-adaptation\\avoidance}}}                             \\
    
    \cline{4-7}
                                                 &                                             &                                                                                                                                & Global   Selector             & Local Selector                                 & Explainability                & \thead{Clustering\\assumption}         \\
    
    \hline
    TabNet                                       & \multirow{5}{*}{\parbox{0.2\linewidth}{\vspace{22mm} Joint in-model selection}} & \thead{Uses sequential attention \\to select important features}                                                      & {\color[HTML]{E6341C} \XSolidBrush}  & {\color[HTML]{3A7B21} \CheckmarkBold} & {\color[HTML]{E6341C} \XSolidBrush}   & {\color[HTML]{E6341C} \XSolidBrush}  & {\color[HTML]{E6341C} \XSolidBrush} \\
    
    L2X                                          &                                             & \thead{Mutual Information maximization \\with Concrete distribution}                                                                   & {\color[HTML]{E6341C} \XSolidBrush}   &              {\color[HTML]{3A7B21} \CheckmarkBold}                                  & {\color[HTML]{E6341C} \XSolidBrush}   & {\color[HTML]{E6341C} \XSolidBrush}  & {\color[HTML]{E6341C} \XSolidBrush} \\
    
    INVASE                                       &                                             & \thead{Modelling instance-wise mask value \\with independent Bernoulli distribution}                                                   & {\color[HTML]{E6341C} \XSolidBrush}   &                {\color[HTML]{3A7B21} \CheckmarkBold}                                & {\color[HTML]{E6341C} \XSolidBrush}   & {\color[HTML]{E6341C} \XSolidBrush}  & {\color[HTML]{E6341C} \XSolidBrush} \\
    
    LSPIN                                        &                                             & \thead{Formalising the mask value \\with Normal distribution}                                                                          & {\color[HTML]{E6341C} \XSolidBrush}   &                   {\color[HTML]{3A7B21} \CheckmarkBold}                             & {\color[HTML]{E6341C} \XSolidBrush}   & {\color[HTML]{E6341C} \XSolidBrush}  & {\color[HTML]{E6341C} \XSolidBrush} \\
    
    LLSPIN                                       &                                             & \thead{Replace LSPIN's predictor \\with a linear prediction head}                                                                      & {\color[HTML]{E6341C} \XSolidBrush}   &                    {\color[HTML]{3A7B21} \CheckmarkBold}                            & {\color[HTML]{3A7B21} \CheckmarkBold} & {\color[HTML]{E6341C} \XSolidBrush}  & {\color[HTML]{E6341C} \XSolidBrush} \\
    
    \cline{1-3}
    REAL-X                                       & Disjoint post-hoc selection                 & \thead{Decoupling the training objectives \\of the feature selector and predictor}                                                     & {\color[HTML]{E6341C} \XSolidBrush}   &                 {\color[HTML]{3A7B21} \CheckmarkBold}                               & {\color[HTML]{E6341C} \XSolidBrush}   & {\color[HTML]{E6341C} \XSolidBrush} & {\color[HTML]{3A7B21} \CheckmarkBold}  \\
    
    \cline{1-3}
    \textbf{ProtoGate (Ours)}                  & Disjoint in-model selection                 & \thead{Balances global and local selection \\and avoiding co-adaptation with\\non-parametetric prototype-based prediction} & {\color[HTML]{3A7B21} \CheckmarkBold} &                      {\color[HTML]{3A7B21} \CheckmarkBold}                          & {\color[HTML]{3A7B21} \CheckmarkBold} & {\color[HTML]{3A7B21} \CheckmarkBold} & {\color[HTML]{3A7B21} \CheckmarkBold} \\
    \Xhline{3\arrayrulewidth}
    \end{tabular}
}
\vspace{-3mm}
\end{table}

\clearpage
\section{Pseudocode for ProtoGate Training and Inference}
\label{appendix:algo}

\begin{algorithm*}[!hbpt]  
  \caption{Training procedure of ProtoGate}  
  \label{alg:model_train}
  \begin{algorithmic}  
    \REQUIRE  
    training samples $X \in \sR^{N \times D}$, labels $Y \in \sR^{N \times 1}$, global-to-local feature selector~$S_\mathbf{W}$, prototype-based predictor~$F_{\theta}$, sparsity hyperparameters ($\lambda_{\text{global}}$, $\lambda_{\text{local}}$), number of nearest neighbours~$K$,  total training epochs~$E$, learning rate~$\alpha$
    \ENSURE
    trained feature selector $S_\mathbf{W}$, prototype base $\mathcal{B}$
    \STATE{$\mathbf{W}$ $\leftarrow$ GaussianInitialisation()} 
    \COMMENT{Initialise the weights of feature selector}

    \FOR{$e \gets 1$ to $E$}
        \STATE{$\mathcal{B} \gets \{\}$}
        \COMMENT{Initialise the prototype base as an empty set}
        \STATE{$\vs_{\text{global}} \gets \mathbf{W}^{[1]}$}
        \COMMENT{Compute the global mask from $\mathbf{W}^{[1]}$}

        \FOR{$i \gets 1$ to $N$}
            \STATE{$\vs_{\text{local}}^{(i)} \gets S_\mathbf{W}(\vx^{(i)} \odot \vs_{global})$}
            \COMMENT{Predict the local mask for a specific training sample}
            \STATE{$p^{(i)} \gets (\vx^{(i)} \odot \vs_{\text{local}}^{(i)}, y^{(i)})$}
            \COMMENT{Construct a prototype with the masked training sample and its label}
            \STATE{$\mathcal{B} \gets \mathcal{B} \cup \{p^{(i)}\}$}
            \COMMENT{Retain the prototype in the base}
        \ENDFOR

        \FOR{$j \gets 1$ to $N$}
            \STATE{$\vx_{\text{query}}^{(j)} \gets \text{RandomSampling}(X)$}
            \COMMENT{Randomly pick a training sample as the query sample}
            \STATE{$\vx_{\text{masked}}^{(j)} \gets \vx_{\text{query}}^{(j)} \odot S_\mathbf{W}(\vx_{\text{query}}^{(j)} \odot \vs_{\text{global}})$}
            \COMMENT{Perform local feature selection for the query sample}
            \STATE{$\widehat{\mathbf{P}}_{\text{query}}^{(j)} \gets \text{HybridSort}(X_\mathcal{B}, \vx_{\text{masked}}^{(j)})$}
            \COMMENT{Compute the permutation matrix based on the similarity to the query sample}
            \STATE{$\hat{y}_{\text{query}}^{(j)} \gets \text{MajorityClass} \left( (\widehat{\mathbf{P}}_{\text{query}}^{(j)}Y_\mathcal{B})[1:K] \right) $}
            \COMMENT{Predict with the majority class of the $K$ nearest prototypes}
        \ENDFOR

        \STATE{$\mathcal{L_{\text{total}}} \gets \mathbb{E}_{(X,Y)} \left[ \mathcal{L}_{\text{pred}} + \mathcal{R}_{\text{select}} \right]$}
        \COMMENT{Compute the training loss according to \cref{eq:reg_sparsity} and \cref{eq:dknn_loss}}
        \STATE{$\mathbf{W} \gets \mathbf{W} - \alpha \nabla_\mathbf{W} \mathcal{L}_{\text{total}}$}
        \COMMENT{Update the weights of feature selector}
    \ENDFOR
    
    \STATE {return $S_\mathbf{W}$, $\mathcal{B}$}
  \end{algorithmic}  
\end{algorithm*}

\begin{algorithm*}[!hbpt]  
  \caption{Inference procedure of ProtoGate}  
  \label{alg:model_inference}
  \begin{algorithmic}  
    \REQUIRE  
    test sample $\vx_{\text{test}} \in \sR^{D}$
    \ENSURE
    predicted label $\hat{y}_{\text{test}} \in \sR$, prototypical explanations $C_{\text{explanation}}$

    \STATE{$\vs_{\text{global}} \gets \mathbf{W}^{[1]}$}
    \COMMENT{Compute the global mask from $\mathbf{W}^{[1]}$}
    \STATE{$\vs_{\text{local}} \gets S_\mathbf{W}(\vx_{\text{test}} \odot \vs_{\text{global}})$}
    \COMMENT{Predict the local mask for the test sample}
    \STATE{$\vx_{\text{masked}} \gets \vx_{\text{query}} \odot S_\mathbf{W}(\vx_{\text{query}} \odot \vs_{\text{global}})$}
    \COMMENT{Perform local feature selection on the test sample}
    \STATE{$\widehat{\mathbf{P}}_{\text{test}} \gets \text{HybridSort}(X_\mathcal{B}, \vx_{\text{masked}})$}
    \COMMENT{Compute the permutation matrix based on the similarity to the test sample}
    \STATE{$\hat{y}_{\text{test}} \gets \text{MajorityClass} \left( (\widehat{\mathbf{P}}_{\text{test}}Y_\mathcal{B})[1:K] \right) $}
    \COMMENT{Predict with the majority class of the $K$ nearest prototypes}
    \STATE{$C_{\text{explanation}} \gets \left( (\widehat{\mathbf{P}}_{\text{test}}X_\mathcal{B})[1:K], (\widehat{\mathbf{P}}_{\text{test}}Y_\mathcal{B})[1:K] \right)$}
    \COMMENT{Use the $K$ nearest prototypes as prototypical explanations}
    
    \STATE {return $\hat{y}_{\text{test}}$, $C_{\text{explanation}}$}
  \end{algorithmic}  
\end{algorithm*}

\clearpage
\section{Illustration of Global-to-local Feature Selection}
\label{appendix:fs_illustration}
In \cref{fig:fs_illustration}, we illustrate the interplay between soft global selection and local selection with a real-world example from the ``colon'' dataset. Specifically, we randomly pick a data split and visualise the mask values of the first training sample (i.e., $\vs_{\text{global}}$ and $\vs_{\text{local}}^{(1)}$). We can see the four different feature selection behaviours: \textit{(i) Both selected:} the 61st feature is selected by both global and local selection; \textit{(ii) Locally dropped:} the 828th feature is globally selected, but it is locally dropped for this sample; \textit{(iii) Locally recovered:} the 1355th feature is globally dropped, but it is locally recovered for this sample; \textit{(iv) Both dropped:} the 1759th feature is dropped globally and locally. These results show ProtoGate's flexible feature selection process, thus allowing the model to adapt to various datasets via balancing global and local selection. 

\begin{figure*}[!htbp]
    \centering
    \includegraphics[width=0.6\columnwidth]{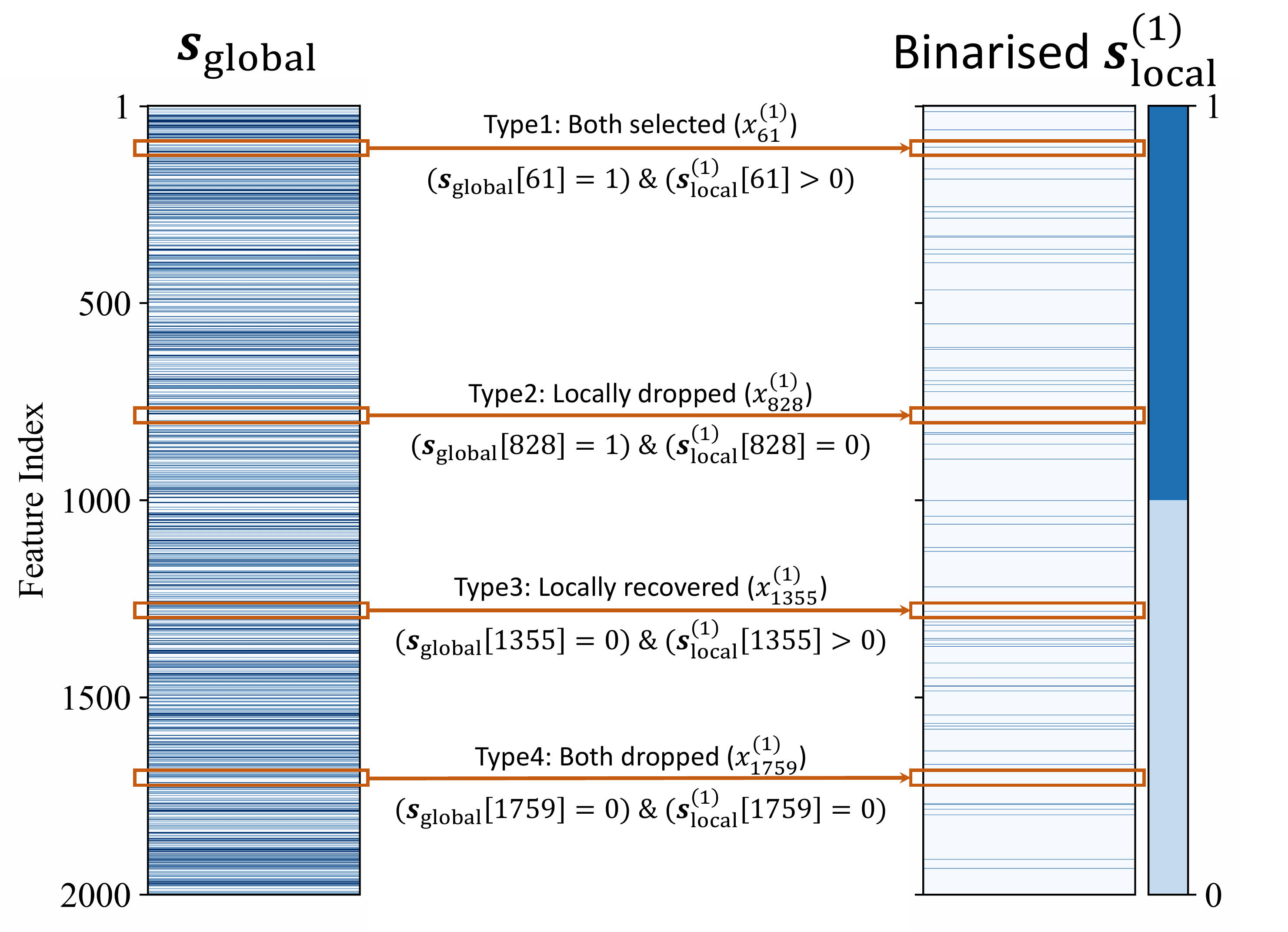}
    \caption{\textbf{Illustration of the global-to-local feature selection.} We visualise the mask values of the first training sample (i.e., $\vx^{(1)}$) from the ``colon'' dataset. Note that we binarise the local mask via $\mathbbm{1}(\vs_{\text{local}}^{(1)} > 0)$. The heatmap presents four different feature selection behaviours, showing ProtoGate's capability to adaptatively balance global and local selection.}
    \label{fig:fs_illustration}
\end{figure*}

\clearpage
\section{Theoretical Analysis}
We now provide the proof sketch for readers to understand the key ideas of ProtoGate, which revolves around computing gradients for the approximation of non-differentiable operations. Therefore, $\mathcal{L}_{\text{total}}$ boils down to deriving the empirical estimations of $\ell_0$ norm of local masks $\norm{\vs_{\text{local}}^{(i)}}_0$ and permutation matrix $\mathbf{P}$, which we show below respectively.

\paragraph{Estimation of $\norm{\vs_{\text{local}}^{(i)}}_0$.}
\label{appendix:reg}
The derivation of differentiable $\ell_0$-regularisation follows prior work~\cite{louizos2018learning, rolfe2016discrete, yamada2020feature, yang2022locally}. Recall the local mask is computed by $\vs_\text{local}^{(i)} = \max(0, \min(1, \vmu^{(i)} + \bm{\epsilon}^{(i)}))$, where $\bm{\epsilon}^{(i)}$ is Gaussian noise sampled from $\mathcal{N}(\bm{0}, \bm{\Sigma})$. Given that \(\bm{\epsilon}^{(i)}\) is sampled from a Gaussian distribution with mean vector \(\bm{0}\) and covariance matrix \(\bm{\Sigma}\), and noting the independence and identical distribution (i.i.d.) of the noise across dimensions, \(\bm{\Sigma}\) simplifies to \(\sigma \mathbf{I}\), where $\mathbf{I} \in \sR^{D \times D}$ is an identity matrix. The standard deviation $\sigma=0.5$ is fixed during training, and it is removed during the inference time for deterministic mask values. Subsequently, the relaxed sparsity regularisation for the local mask can be re-formalised as an expectation over the probabilistic distribution induced by the Gaussian noise:
\begin{equation}
\begin{split}
    \mathcal{R_{\text{local}}}(\vs_{\text{local}}^{(i)}) &= \lambda_{\text{local}}\norm{\vs_{\text{local}}^{(i)}}_0 \propto \lambda_{\text{local}}\mathbb{E}\left[\norm{\vs_{\text{local}}^{(i)}}_0 \right] \\
    &\propto \lambda_{\text{local}}\mathbb{E}_{\bm{\epsilon}^{(i)} \sim \mathcal{N}(\bm{0}, \sigma \mathbf{I})}\left[ \norm{\max\left( 0, \min \left( 1, \vmu^{(i)} + \bm{\epsilon}^{(i)} \right) \right)}_0 \right] \\
    &\propto \lambda_{\text{local}}\sum_{d=1}^{D} P \left( \mu_d^{(i)} + \epsilon_d^{(i)}>0 \right)\\
    &\propto \lambda_{\text{local}}\sum_{d=1}^{D} P \left( \frac{\epsilon_d^{(i)}}{\sigma}>- \frac{\mu_d^{(i)}}{\sigma} \right) \\
    &\propto \lambda_{\text{local}}\sum_{d=1}^{D} Q\left( - \frac{\mu_d^{(i)}}{\sigma} \right) \\
    &\propto \lambda_{\text{local}}\sum_{d=1}^{D}\int_{\hat{\mu}_d^{(i)}}^{\infty}\exp(-\frac{t^2}{2})dt
\end{split}
\end{equation}
where $\hat{\mu}_d^{(i)}=-\frac{\mu_d^{(i)}}{\sigma}$ is the standardised output of $S_{\mathbf{W}}$. Hence, the complete sparsity regularisation can be approximated via:
\begin{equation}
\begin{split}
    \mathcal{R}_{\text{select}} (\vs_{\text{global}}, \vs_{\text{local}}^{(i)}) &\coloneqq \mathcal{R_{\text{global}}}(\vs_{\text{global}}) + \mathcal{R_{\text{local}}}(\vs_{\text{local}}^{(i)}) \\
    &\propto \lambda_{\text{global}}\norm{\mathbf{W}^{[1]}}_1 + \lambda_{\text{local}}\sum_{d=1}^{D}\int_{\hat{\mu}_d^{(i)}}^{\infty}\exp(-\frac{t^2}{2})dt.
\end{split}
\end{equation}

\paragraph{Estimation of $\mathbf{P}$.}
\label{appendix:pred_loss}
The essence of prototype-based prediction is to sort the prototypes based on their similarities to the query sample. Consider the matrix of the prototype features $X_\mathcal{B} \coloneqq [\vx^{(1)} \odot \vs_{\text{local}}^{(1)},\dots,\vx^{(N)} \odot \vs_{\text{local}}^{(N)}]^{\top} \in \sR^{N \times D}$ Sorting the prototypes is equivalent to performing row operations on $X_\mathcal{B}$ via left-multiplying by a permutation matrix $\mathbf{P} \in \sR^{N \times N}$.
\begin{definition}
    Let $\vv \in \sR^{N}$ represent the vector of similarity, where each element $\vv[n]$ signifies the reciprocal of the distance between the query sample and the $n$-th prototype (i.e., $\frac{1}{\text{dist}(x_{\text{query}}, \vx^{(n)} \odot \vs_{\text{local}}^{(n)})}$). Let $A \in \sR^{N \times N}$ denote the matrix of absolute pairwise differences of the distances such that $A[n, m]=|v_n - v_m|$, and $\mathbf{1} \in \sR^{N \times 1}$ denote a column vector of all ones. The permutation matrix can be defined as follows:
    \begin{equation}
        \mathbf{P}[n, m]=
            \left\{\begin{array}{l}
            1, \ \ \  \text{if } m=\arg \max \left[(N+1-2 n)\vv - A \mathbf{1}\right] \\
            0, \ \ \  \text{otherwise}
            \end{array}\right.
    \end{equation}
    To circumvent the non-differentiability of $\arg \max$, we use its continuous approximation $\operatorname{soft} \max$ to compute the relaxed permutation matrix:
    \begin{equation}
        \widehat{\mathbf{P}}[n,:](\tau)=\operatorname{soft} \max \left[ \frac{\left((N+1-2 n)\vv - A \mathbf{1}\right)}{\tau}\right]
        \label{eq:relaxed_sort}
    \end{equation}
    where $\tau > 0$ is a temperature parameter. 
\end{definition}

\begin{lemma}
\label{lemma:lim}
     For the relaxed permutation matrix $\widehat{\mathbf{P}}$, assuming the entries of $\vv$ are independently drawn from a distribution continuously relative to the Lebesgue measure on $\sR$, then the convergence holds almost surely:
    \begin{equation}
        \lim _{\tau \rightarrow 0^{+}} \widehat{\mathbf{P}}[n,:](\tau)=\mathbf{P}[n,:], \quad \forall n \sim \mathcal{U}\{1, N\}
    \end{equation}
    where $\mathcal{U}$ denotes a discrete uniform distribution. This convergence is substantiated by ``Theorem 4'' in~\citet{grover2018stochastic}.
\end{lemma}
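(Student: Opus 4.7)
The lemma restates \cite[Theorem 4]{grover2018stochastic} in ProtoGate's notation, so my strategy is to reduce the claim to that theorem by verifying its two ingredients row by row: (a) almost sure uniqueness of the argmax of the score vector, and (b) the zero-temperature limit of the softmax at a vector with a unique maximiser. For each fixed $n \in \{1,\dots,N\}$, write
\begin{equation*}
    \vz_n(\vv) \;:=\; (N+1-2n)\vv - A(\vv)\mathbf{1} \in \sR^N,
\end{equation*}
so that $\widehat{\mathbf{P}}[n,:](\tau) = \operatorname{softmax}\!\left(\vz_n(\vv)/\tau\right)$ and $\mathbf{P}[n,:]$ is the one-hot row indicating $m_n^* := \arg\max_m \vz_n(\vv)[m]$. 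Row-wise almost sure convergence reduces to establishing (a) and (b).

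Step (b) is routine: subtracting $\max_m \vz_n(\vv)[m]$ from every coordinate leaves the softmax invariant, after which the maximiser sits at $0$ and every other coordinate is strictly negative. Dividing by $\tau$ and letting $\tau \to 0^+$ sends the non-maximal exponentials to $0$, so the normalised softmax vector converges to the one-hot row at $m_n^*$.

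Step (a) is where the absolute continuity hypothesis on the law of $\vv$ enters. The map $\vv \mapsto \vz_n(\vv)$ is piecewise affine because $A(\vv)_{k,l} = |v_k - v_l|$; the pieces correspond to the finitely many sign patterns of the pairwise differences $v_k - v_l$. On each such cell, the equation $\vz_n(\vv)[m] = \vz_n(\vv)[m']$ for fixed $m \ne m'$ cuts out an affine subspace of $\sR^N$ of codimension at least one, hence a Lebesgue-null set. The tie set, across all rows $n$, pairs $(m,m')$, and sign cells, is therefore a finite union of Lebesgue-null sets; since the joint law of $\vv$ is absolutely continuous with respect to Lebesgue measure on $\sR^N$ (by coordinate-wise independence and the per-coordinate continuity assumption), the tie event has probability zero. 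Combining (a) and (b) yields $\lim_{\tau \to 0^+} \widehat{\mathbf{P}}[n,:](\tau) = \mathbf{P}[n,:]$ almost surely for every $n$, as required.

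The main obstacle is step (a): the piecewise-affine structure induced by the absolute values in $A$ prevents a naive ``generic position'' argument, and one must also handle the corner case $N+1-2n = 0$ (possible only when $N$ is odd and $n = (N+1)/2$), where $\vz_n$ collapses to $-A\mathbf{1}$ and the unique-argmax claim becomes a statement about uniqueness of the median of $\vv$. A short case split confirms this subcase is also generic, after which the argument either appeals directly to \cite[Theorem 4]{grover2018stochastic}, which carries out exactly the cell decomposition above for the general NeuralSort score, or makes that decomposition explicit to close the proof.
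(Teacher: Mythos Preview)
Your proposal is correct and, at its core, matches the paper's treatment: the paper offers no proof of this lemma beyond the embedded citation to Theorem~4 of \citet{grover2018stochastic}, and your plan explicitly reduces to that same theorem. Where you differ is only in level of detail---you sketch \emph{why} the cited theorem applies (the softmax-limit step and the almost-sure uniqueness of the argmax via a piecewise-affine null-set argument), whereas the paper simply invokes the reference. One minor remark: your cell-decomposition route to uniqueness is valid but slightly heavier than necessary; the cleaner path, which is what \citet{grover2018stochastic} actually use, is to note that almost-sure distinctness of the entries of $\vv$ (immediate from absolute continuity) together with their Lemma~1/Corollary~3 identifies the row-$n$ argmax as the index of the $n$-th largest entry of $\vv$, which is then automatically unique---this also dispatches your $N+1-2n=0$ corner case without a separate split.
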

\begin{theorem}
\label{theo:pred_loss}
    In the context outlined above, the prediction loss of ProtoGate (\cref{eq:dknn_loss}) estimates the count of prototypes whose labels are different to that of the query sample.
\end{theorem}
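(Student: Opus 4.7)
The plan is to treat the prediction loss as an empirical (relaxed) estimator of a deterministic quantity, pass to the zero-temperature limit using \cref{lemma:lim}, and then read off the combinatorial meaning. Concretely, I would first expand the expectation appearing in \cref{eq:dknn_loss}. Because $\widehat{\mathbf{P}}[n,:]$ is a row-stochastic vector produced by the softmax in \cref{eq:relaxed_sort}, the expectation unfolds as
\begin{equation*}
\mathbb{E}_{\widehat{\mathbf{P}}[n,:]}\!\left[\mathbbm{1}(y^{(m)}=y_{\text{query}})\right]
= \sum_{m=1}^N \widehat{\mathbf{P}}[n,m]\,\mathbbm{1}\!\left(y^{(m)}=y_{\text{query}}\right),
\end{equation*}
so that $\mathcal{L}_{\text{pred}}$ is a convex combination of match indicators weighted by how likely each prototype is to occupy the $n$-th slot of the sorted order.

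Next, I would apply \cref{lemma:lim} to pass to the zero-temperature limit $\tau \to 0^+$, under which $\widehat{\mathbf{P}}[n,:](\tau)$ converges almost surely to the one-hot row $\mathbf{P}[n,:]$. From the definition of $\mathbf{P}$, the unique nonzero entry lies at the index $m_n = \arg\max_m\bigl[(N+1-2n)\vv - A\mathbf{1}\bigr]_m$, which is precisely the index of the $n$-th largest similarity (i.e.\ the $n$-th nearest prototype to $\vx_{\text{query}}$). Plugging this one-hot row into the expansion above collapses the inner sum, yielding $\mathbbm{1}(y^{(m_n)} = y_{\text{query}})$ for each $n$.

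Summing over $n = 1,\ldots,K$ then gives exactly the number of prototypes among the $K$ nearest neighbours whose label agrees with that of the query. Subtracting this quantity from $K$ — as in \cref{eq:dknn_loss} — produces the number of prototypes in the $K$-nearest set whose labels \emph{differ} from $y_{\text{query}}$, which is the claim of \cref{theo:pred_loss}. For the finite-$\tau$ case, one writes $\mathcal{L}_{\text{pred}}$ as the limit plus an $O(\tau)$ softmax-smoothing error; the smoothing only distributes mass among prototypes with near-tied similarities, so the interpretation as an estimator of the mislabelled-neighbour count is preserved.

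The main obstacle is the permutation-identification step: verifying that $m_n = \arg\max_m[(N+1-2n)v_m - \sum_k |v_m - v_k|]$ really coincides with the $n$-th order statistic of $\vv$. This is the key algebraic fact underlying NeuralSort and can be justified by expanding $\sum_k |v_m - v_k|$ after sorting the similarities, observing that the objective reduces to a piecewise-linear function whose maximum over $m$ selects the $n$-th largest coordinate. Once this identity is in hand, the remainder of the argument is a direct combinatorial rewrite and the almost-sure convergence in \cref{lemma:lim}.
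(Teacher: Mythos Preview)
Your proposal is correct and follows essentially the same route as the paper: expand the expectation over the row-stochastic $\widehat{\mathbf{P}}[n,:]$, invoke \cref{lemma:lim} to collapse each row to the one-hot $\mathbf{P}[n,:]$ in the $\tau\to 0^+$ limit, and read off that $K$ minus the sum of match indicators over the $K$ nearest rows is the count of mislabelled neighbours. The paper's version is terser---it simply cites the binary row-stochastic property of $\mathbf{P}$ without unpacking the argmax/order-statistic identification or the finite-$\tau$ error---so your additional remarks on those points are extra detail rather than a different argument.
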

\begin{proof}
    Invoking \cref{lemma:lim}, the prediction loss limit is computed as follows:
    \begin{equation}
        \begin{split}
            \lim_{\tau \to 0^{+}} \mathcal{L}_{\text{pred}} 
            &= K - \lim_{\tau \to 0^{+}} \sum_{n=1}^K \mathbb{E}_{\widehat{\mathbf{P}}[n,:](\tau)} \left[\mathbbm{1}(y^{(m)} = y_{\text{query}})\right] \\
            &= K - \sum_{n=1}^K \sum_{m=1}^{N} \mathbf{P}[n, m] \mathbbm{1}(y^{(m)} = y_{\text{query}})
        \end{split} 
    \end{equation}
    Since $\mathbf{P}$ is a binary, row-stochastic matrix, it holds that
    \begin{equation}
    \label{eq:nth_prototype}
        \sum_{m=1}^{N} \mathbf{P}[n, m] = 
        \begin{cases}
            1, & \text{if }(\mathbf{P}Y_{\mathcal{B}})[n] = y_{\text{query}} \\
            0, & \text{otherwise}
        \end{cases}
    \end{equation}
    Here, $(\mathbf{P}Y_{\mathcal{B}})[n]$ represents the label of the $n$-th nearest prototype. \cref{eq:nth_prototype} indicates whether the $n$-th nearest prototype shares the same label as the query sample. Summing over the top $K$ rows of $\mathbf{P}$ then calculates the count of prototypes that belong to the same class as the query sample among the top $K$ nearest prototypes.

Therefore, minimising \cref{eq:dknn_loss} is equivalent to reducing the number of nearest prototypes with labels differing from the query sample, thereby potentially enhancing classification accuracy.
\end{proof}

\clearpage
\section{Reproducibility}
\label{appendix:reprod}

\subsection{Real-word Datasets}
\label{appendix:data_real}

\paragraph{HDLSS datasets.}
\looseness-1
All seven HDLSS datasets are publicly available, and the details are listed in \cref{tab:hdlss_dataset}. Four of them are available online (\url{https://jundongl.github.io/scikit-feature/datasets}): \textbf{lung}~\cite{bhattacharjee2001classification}, \textbf{Prostate-GE} (referred to as ``prostate'')~\cite{singh2002gene}, \textbf{TOX-171} (referred to as ``toxicity'')~\cite{bajwa2016cutting} and \textbf{colon}~\cite{ding2005minimum}. The methods to build the other three datasets are detailed below.

In accordance with the methodology presented in~\cite{margeloiu2022weight}, we derived two datasets from the \textbf{METABRIC} dataset~\cite{curtis2012genomic}. We combined the molecular data with the clinical label ``DR'' to create the \textbf{``meta-dr''} dataset, and we combined the molecular data with the clinical label ``Pam50Subtype'' to create the \textbf{``meta-pam''} dataset. Because the label ``Pam50Subtype'' was very imbalanced, we transformed the task into a binary task of basal vs non-basal by combining the classes ``LumA'', ``LumB'', ``Her2'', ``Normal'' into one class and using the remaining class ``Basal'' as the second class. For both ``meta-dr'' and ``meta-pam'', we selected the Hallmark gene set~\cite{liberzon2015molecular} associated with breast cancer, and the new datasets contain $4160$ expressions (features) for each patient. We randomly sampled \textbf{200} patients while maintaining stratification to create the final datasets, as our focus is on the HDLSS regime.

Following the procedure by~\citet{margeloiu2022weight}, we also derived \textbf{``tcga-2y''} dataset from the \textbf{TCGA} dataset~\cite{tomczak2015cancer}. We combined the molecular data and the label ``X2yr.RF.Surv'' to create the \textbf{``tcga-2ysurvival''} dataset. Similar to the previous datasets, we selected the Hallmark gene set \citep{liberzon2015molecular} associated with breast cancer, resulting in $4381$ expressions (features). We randomly sampled \textbf{200} patients while maintaining stratification to create the final datasets, as our primary focus is on the HDLSS regime.

\begin{table*}[!htbp]
\centering
\caption{Details of seven HDLSS real-world tabular datasets.}
\label{tab:hdlss_dataset}
\begin{tabular}{lrrrr}
\toprule
Dataset         & \# Samples & \# Features & \# Classes & \# Samples per class  \\ \midrule
colon           & 62         & 2,000        & 2          & {[}40, 22{]}          \\
lung            & 197        & 3,312        & 4          & {[}139, 21, 20, 17{]} \\
meta-dr     & 200        & 4,160        & 2          & {[}139, 61{]}         \\
meta-pam  & 200        & 4,160        & 2          & {[}167, 33{]}         \\
prostate        & 102        & 5,966        & 2          & {[}52, 50{]}          \\
tcga-2y & 200        & 4,381        & 2          & {[}122, 78{]}         \\
toxicity        & 171        & 5,748        & 4          & {[}45, 45, 42, 39{]}  \\
\bottomrule
\end{tabular}
\end{table*}

\paragraph{non-HDLSS datasets.}
To show the generalisability of ProtoGate, we select four representative and challenging non-HDLSS datasets from the open-source TabZilla benchmark~\cite{mcelfresh2023neural}. Firstly, their dimensionalities are smaller than the number of samples (i.e., $N>D$). Secondly, they are from various domains beyond the biomedical scenarios. For instance, ``cnae-9'' is relevant to business descriptions. Thirdly, they have many more classes than the HDLSS datasets we mentioned. For instance, ``100-plant-texture'' has up to 100 classes.

\begin{table*}[!htbp]
\centering
\caption{Details of four non-HDLSS real-world tabular datasets.}
\label{tab:non_hdlss_dataset}
\begin{tabular}{lrrrr}
\toprule
Dataset         & \# Samples & \# Features & \# Classes & \# Samples per class  \\ \midrule
100-plants-texture           & 1,599         & 65        & 100          & 16 per class except 15 of ``Class 100''          \\
cnae-9            & 1,080        & 857        & 9          & 120 per class \\
mfeat-fourier  & 2,000        & 77        & 10          & 200 per class         \\
vehicle        & 846        & 19        & 4          & {[}218, 217, 212, 199{]}          \\
\bottomrule
\end{tabular}
\end{table*}

\subsection{Synthetic Datasets}
\label{appendix:data_syn}
The synthetic datasets are adapted from the nonlinear datasets proposed by \citet{yoon2018invase}. Specifically, we generate three synthetic datasets: Syn1 (also referred to as ``Syn1$_{(+)}$''), Syn2 (also referred to as ``Syn2$_{(+)}$''), and Syn3 (also referred to as ``Syn3$_{(-)}$''), which are designed for the classification task. Each sample is characterised by 100 features, where the feature values are independently sampled from a Gaussian distribution $\mathcal{N}(\mathbf{0}, \mathbf{I})$, with $\mathbf{I}$ representing a $100 \times 100$ identity matrix. The ground truth label (target) $y$ for each sample is computed by:
\begin{equation}
    y = \mathbbm{1}(\frac{1}{1 + \text{logit}(\vx)} > 0.5)
\end{equation}
where $\mathbbm{1}(\cdot)$ is the indicator function. For each samples, the $\text{logit}(\vx)$ is computed with a small proportion of its features:
\begin{equation}
\label{eq:syn1}
    \textbf{Syn1}_{(+)}: \ \text{logit} = \begin{cases}        \exp(x_1 x_2 - x_3) & \text{if} \ x_{11} < 0 \\         \exp(x_3^2 + x_4^2 + x_5^2 + x_6^2 - 4) & \text{otherwise}\\    \end{cases}
\end{equation}

\begin{equation}
\label{eq:syn2}
    \textbf{Syn2}_{(+)}:\ \text{logit} = \begin{cases}        \exp(x_3^2 + x_4^2 + x_5^2 + x_6^2 + x_7^2 - 4) & \text{if}\ x_{11} < 0 \\         \exp(-10 \sin(0.2 x_7) + |x_8| + x_9^2 + \exp(-x_{10}) - 2.4) & \text{otherwise} \\    \end{cases}
\end{equation}

\begin{equation}
\label{eq:syn3}
    \textbf{Syn3}_{(-)}:\ \text{logit} = \begin{cases}        \exp(x_1 x_2 + |x_9|) & \text{if}\ x_{11} < 0 \\         \exp(-10 \sin(0.2 x_7) + |x_8| + x_9^2 + \exp(-x_{10}) - 2.4) & \text{otherwise} \\    \end{cases}
\end{equation}
Within each dataset, the two classes have a minimum of two informative features in common. For example, in Syn1$_{(+)}$, both class one and class two share $(x_3, x_{11})$ as the informative features. To introduce class imbalance, we intentionally generate 150 samples for class one and 50 samples for class two.

\looseness-1
Note that we purposely design Syn3$_{(-)}$ to examine the clustering assumption in ProtoGate by adding even function $|x_9|$ to Class One. The absolute value function is an even function. Two samples with opposite values of the same feature are likely to have equal logit values, and then they belong to the same class. However, the opposite values mean a long distance between them, and they should not belong to the same class according to the clustering assumption. Therefore, prototype-based models are expected to perform poorly in this regime. We implement it by adding absolute value function $|x_9|$ in the first class of Syn3$_{(-)}$ to observe the performance degradation in ProtoGate. Because of the evenness of absolute value, two samples with opposite values of $x_9$ are likely to be of Class One, which is against the clustering assumption. Although Syn1$_{(+)}$ and Syn2$_{(+)}$ also contain even functions like square and absolute value, they also have many other informative features that do not utilise the even functions to compute logit value. Therefore, the side effect of even functions is diluted in Syn1$_{(+)}$ and Syn2$_{(+)}$.

\paragraph{Differences to prior work.} Compared to previous studies~\cite{yang2022locally, yoon2018invase}, we focus on more realistic and challenging synthetic datasets by considering four key aspects. Firstly, we only generate 200 samples for each dataset, which is only 10\% of the samples in prior work~\cite{yang2022locally}. Secondly, each sample has 100 features, which is ten times more than that in prior work~\cite{yang2022locally}. Thirdly, our synthetic datasets are imbalanced, while those in prior work have balanced class distribution~\cite{yang2022locally}. Lastly, we incorporate a greater number of overlapping informative features between the two classes, while those in prior work may have no overlapping features~\cite{yang2022locally}.

\subsection{Data Preprocessing}
Following the methodology presented by~\citet{margeloiu2022weight}, we perform Z-score normalisation on each dataset before training the models. This normalisation process involves two steps. First, we compute the mean and standard deviation of each feature in the training data. Using these statistics, we transform the training samples to have a mean of zero and a variance of one for each feature. Subsequently, we apply the same transformation to the validation and test data before conducting evaluations.

\subsection{Computing Resources}
\label{appendix:compute}
We trained over 15,000 models (including over 3,000 of ProtoGate) for evaluations. All the experiments were conducted on a machine equipped with an NVIDIA A100 GPU with 40GB memory and an Intel(R) Xeon(R) CPU (at 2.20GHz) with six cores. The operating system used was Ubuntu 20.04.5 LTS.

\subsection{Training Details and hyperparameter Tuning}
\label{appendix:training}

\paragraph{Software implementation.}
We implemented ProtoGate with Pytorch Lightning~\cite{Falcon_PyTorch_Lightning_2019}: the global-to-local feature selector is implemented from scratch, and the relaxed sorting predictor is adapted from the official implementation of NeuralSort (\url{https://github.com/ermongroup/neuralsort}). Note that we further optimised the speed of the official implementation with matrix operators in PyTorch~\cite{paszke2019pytorch}. We re-implemented LSPIN/LLSPIN because the official implementation (\url{https://github.com/jcyang34/lspin}) used a different evaluation setup from ours: we report the mean $\pm$ std number of selected features, while they report the median number of selected features. We implemented XGBoost (\url{https://xgboost.readthedocs.io/en/stable/}), CatBoost (\url{https://catboost.ai/}) and LightGBM (\url{https://github.com/microsoft/LightGBM}) using their open-source implementations. With scikit-learn~\cite{pedregosa2011scikit}, we implemented Random Forest (\url{https://scikit-learn.org/stable/modules/generated/sklearn.ensemble.RandomForestClassifier}), KNN (\url{https://scikit-learn.org/stable/modules/generated/sklearn.neighbors.KNeighborsClassifier}) and Lasso (\url{https://scikit-learn.org/stable/modules/generated/sklearn.linear\_model.Lasso}). For other benchmark methods, we used their open-source implementations: STG (\url{https://github.com/runopti/stg}), TabNet (\url{https://github.com/dreamquark-ai/tabnet}), L2X (\url{https://github.com/Jianbo-Lab/L2X}), INVASE (\url{https://github.com/vanderschaarlab/mlforhealthlabpub/tree/main/alg/invase}) and REAL-X (\url{https://github.com/rajesh-lab/realx}).

We implemented a uniform pipeline using PyTorch Lightning to ensure consistency and reproducibility. We further fixed the random seeds for data loading and evaluation throughout the training and evaluation process. This ensured that ProtoGate and all benchmark models were trained and evaluated on the same set of samples. The experimental environment settings, including library dependencies, are specified in the associated code for reference and replication purposes.

Note that all the libraries utilised in this study adhere to open-source licenses. Specifically, the scikit-learn and the INVASE implementation follow the BSD-3-Clause license, Pytorch Lightning follows the Apache-2.0 license, and the others follow the MIT license.

\paragraph{Training procedures.}
In this section, we outline the key training settings for ProtoGate and all benchmark methods. The complete experimental settings can be found in the accompanying code. We made diligent efforts to ensure a fair comparison among the benchmark methods whenever possible. For example, we employed the same predictor architecture in LSPIN, MLP and STG, as these models share similar design principles.

\begin{itemize}[topsep=0pt, leftmargin=18pt]
    \item \textbf{ProtoGate} has a three-layer feature selector. The number of neurons in the hidden layer is 200 for real-world datasets and 100 for synthetic datasets. And the activation function is $tanh$ in all layers for a fair comparison against prior work~\cite{yang2022locally}. The model is trained for 10,000 iterations using early stopping with patience 500 on the validation loss. We used the suggested temperature parameter $\tau=16.0$ in NeuralSort~\cite{grover2018stochastic}. We train the models with a batch size of 64 and utilise an SGD optimiser with a weight decay of $1e-4$. 
    
    \item \textbf{TabNet} has a width of eight for the decision prediction layer and the attention embedding for each mask and 1.5 for the coefficient for feature reusage in the masks. The model is trained with Adam optimiser with momentum of 0.3 and gradient clipping at 2.
    
    \item \textbf{L2X, INVASE and REAL-X} have the default architecture as published~\cite{chen2018learning, yoon2018invase, jethani2021have}. The feature selector network has two hidden layers of $[100, 100]$, and the predictor network has two hidden layers of $[200, 200]$. They all use the $relu$ activation after layers. For convergence and computation efficiency, L2X is trained for 7,000 iterations, INVASE is trained for 5,000 epochs and REAL-X is trained for 1,000 iterations.

    \item \textbf{STG, LSPIN and LLSPIN} have a feature selector with the same architecture as that in ProtoGate. For LSPIN/STG, the predictor is a feed-forward neural network with hidden layers of $[100, 100, 10]$ with $tanh$ activation function. And we used the same architecture of predictor for \textbf{MLP}. For LLSPIN, the architecture of the predictor is the same, but the activation functions are removed. In other words, LLSPIN has multiple linear layers with no activations. This is proposed in LLSPIN’s original paper~\cite{yang2022locally} and the official open-source implementation (\url{https://github.com/jcyang34/lspin/blob/dev/Demo0.ipynb?short_path=69bb17b#L214}). That being said, we weren’t able to find discussion or reasons behind the implementation choices of the LLSPIN’s predictor with multiple linear layers. Nevertheless, we empirically tested and found that LLSPIN with multiple linear layers generally outperforms the variant with only one linear layer in the predictor (see \cref{tab:llspin_multi_linear_layers}). The standard deviation $\sigma$ for injected noise is $0.5$. The model is trained for 7,000 iterations using early stopping with patience 500 on the validation loss.

    \item \textbf{Ridge} is trained for 10,000 iterations to minimise the multinomial loss with Limited-memory BFGS solver~\cite{liu1989limited}, and the tolerance for early stopping is set as $1e-4$.

    \item \textbf{SVM} is trained for 10,000 iterations with the RBF kernel, and the tolerance for early stopping is set as $1e-3$.

    \item \textbf{KNN} measured the distance across samples with Euclidean distance and used uniform weights to compute the majority class in the neighbourhood.

    \item \textbf{Lasso} is trained for 10,000 iterations to minimise the weighted loss with SAGA solver~\cite{defazio2014saga}, and the tolerance for early stopping is set as $1e-4$.

    \item \textbf{Random Forest} has 500 estimators, feature bagging with the square root of the number of features, and used balanced weights from class distribution.

    \item \textbf{XGBoost} has 100 estimators. It is trained with a learning rate of $1.0$ to minimise the cross-entropy loss. And the $\ell_2$-regularization term on weights is set as $1e-5$.

    \item \textbf{CatBoost} has a maximum depth of 6, and it is trained with a learning rate of $0.03$ to minimise the cross-entropy loss. The $\ell_2$-regularization term of the cost function is $3.0$.
    
    \item \textbf{LightGBM} has 200 estimators, feature bagging with 30\% of the features, a minimum of two instances in a leaf. It is trained for 10,000 iterations to minimise the weighted cross-entropy loss using early stopping with patience 100 on validation loss.
    
\end{itemize}

\begin{table}[htbp]
\centering
\caption{\textbf{Classification accuracy (\%) of two variants of LLSPIN.} We \textbf{bold} the highest accuracy for each dataset. LLSPIN with multiple linear layers consistently achieves higher accuracy across all datasets.}
\label{tab:llspin_multi_linear_layers}
\begin{tabular}{lrrrrrrr}
\toprule
Methods                         & colon & lung  & meta-dr & meta-pam & prostate & tcga-2y & toxicity \\

\midrule

LLSPIN (single linear layer)    & 73.75          & 62.50         & 55.95            & 88.31             & 85.45             & 55.21            & 74.65             \\
LLSPIN (multiple linear layers) & \textbf{79.35} & \textbf{70.10} & \textbf{56.77} & \textbf{95.50} & \textbf{88.71} & \textbf{57.88} & \textbf{81.67}  \\

\bottomrule

\end{tabular}
\end{table}

\paragraph{Hyperparameter tuning.}
To ensure optimal performance, we initially identified a suitable range of hyperparameters for each model to facilitate convergence. Subsequently, we conducted a grid search within this predefined range to determine the optimal hyperparameter settings. The selection of models was based on their balanced accuracy on the validation sets averaged over 25 runs. It is worth noting that tuning hyperparameters in LSPIN can be challenging, particularly for real-world datasets. Therefore, we followed the recommendations in the original paper~\cite{yang2022locally} and employed Optuna~\cite{optuna_2019} to fine-tune the hyperparameters for LSPIN. 

Note that the LSPIN/LLSPIN performance on ``toxicity'' and ``colon'' datasets in \cref{tab:acc} is different from its original paper. The mismatched performance stems from the experimental settings. Specifically: the LSPIN paper~\cite{yang2022locally} mentions in paragraph 3 of its Appendix B.5.2 that they did \textbf{not} perform cross-validation on the ``colon'' and ``toxicity'' datasets. Namely, they did \textbf{not} use validation sets for model selection. In contrast, we have more realistic evaluation settings. Specifically, we \textbf{do} use validation sets for all datasets, leading to smaller train sets. Therefore, the classification accuracy of LSPIN/LLSPIN can be lower than those reported in the original paper.

\cref{tab:param_protogate} lists the searching ranges of hyperparameters in ProtoGate, and \cref{tab:param_fs} lists the searching ranges of hyperparameters in network-based feature selection benchmark methods. 

\begin{table*}[htbp]
\centering
\caption{Searching ranges of hyperparameters in ProtoGate.}
\label{tab:param_protogate}
\resizebox{\textwidth}{!}{
\begin{tabular}{lrrrr}
\toprule
Datasets  & Global Sparsity $\lambda_{\text{global}}$              & Local Sparsity $\lambda_{\text{local}}$                        & $K$                 & Learning Rate $\alpha$   \\ \midrule
Real-word & $\{1e-4, 2e-4, 3e-4, 4e-4, 6e-4\}$               & $\{1e-3\}$ & $\{1, 2, 3, 4, 5\}$ & $\{5e-2, 7.5e-2, 1e-1\}$ \\
Synthetic & $\{1e-2, 1.5e-2, 2e-2\}$ & $\{0, 1e-4, 3e-4\}$                & $\{3\}$             & $\{1e-1\}$               \\
\bottomrule
\end{tabular}
}
\end{table*}

\begin{table*}[htbp]
\centering
\caption{\textbf{Searching ranges of hyperparameters in network-based feature selection benchmark methods.} Note that the range for LSPIN/LLSPIN on real-world datasets is an interval instead of a set because we used Optuna to search for the optimal settings.}
\label{tab:param_fs}
\begin{tabular}{llrr}
\toprule
Datasets                       & Methods   & Sparsity regularisation strength ($\lambda$)       & Learning Rate          \\ \midrule
\multirow{6}{*}{Real-world} & STG          & $\{35, 40, 45, 50, 55\}$     & $\{3e-3\}$             \\
                            & TabNet       & $\{1e-4, 1e-3, 1e-2, 1e-1\}$ & $\{1e-2, 2e-2, 3e-2\}$ \\
                            & L2X          & $\{1, 5, 10\}$               & $\{1e-4\}$             \\
                            & INVASE       & $\{1, 1.5, 2\}$              & $\{1e-4\}$             \\
                            & REAL-X       & $\{1, 5, 10, 30, 50\}$       & $\{1e-4\}$             \\
                            & LSPIN/LLSPIN & $[5e-4, 1.5e-3]$             & $[5e-2, 1e-1]$         \\ \midrule
\multirow{6}{*}{Synthetic}  & STG          & $\{1, 3, 5\}$                & $\{1e-1\}$             \\
                            & TabNet       & $\{1e-2, 1e-1, 5e-1\}$       & $\{1e-2\}$             \\
                            & L2X          & $\{1, 5, 10\}$               & $\{1e-4\}$             \\
                            & INVASE       & $\{1, 1.5, 2\}$              & $\{1e-4\}$             \\
                            & REAL-X       & $\{1, 5, 10, 30, 50\}$       & $\{1e-4\}$             \\
                            & LSPIN/LLSPIN & $\{1e-2, 5e-2, 1e-1\}$       & $\{1e-1\}$             \\ 
\bottomrule
\end{tabular}
\end{table*}

In line with prior studies~\cite{margeloiu2022weight, yoon2018invase, yang2022locally}, we performed hyperparameter searching for other methods within the same ranges for real-world and synthetic datasets. For \textbf{Ridge}, we performed a grid search of the regularisation strength in $\{1, 1e1, 1e2, 1e3\}$. For \textbf{SVM}, we performed a grid search of the regularisation strength in $\{1e-3, 1e-2, 1e-1, 1, 1e1\}$. For \textbf{KNN}, we performed a grid search of the number of nearest neighbours in $\{1, 3, 5\}$. For \textbf{MLP}, we used Optuna to find the optimal learning rate within $[1e-3, 1e-1]$. For \textbf{Lasso}, we performed a grid search of the regularisation strength in $\{1, 1e1, 1e2, 1e3\}$. For \textbf{Random Forest}, we performed a grid search for the maximum depth in $\{3, 5, 7\}$ and the minimum number of instances in a leaf in $\{2, 3\}$. For \textbf{LightGBM}, we performed a grid search for the learning rate in $\{1e-2, 1e-1\}$ and maximum depth in $\{1, 2\}$. Please refer to the associated code for the full details of the hyperparameter settings and their corresponding ranges.

\clearpage
\section{Additional Results on Real-world Prediction Tasks}

\subsection{Results on Feature Selection}
\label{appendix:complete_sparsity}

\subsubsection{Numerical Results for Feature Selection Sparsity}
As a supplement to \cref{fig:trade_offs} (Left), we provide detailed numerical results of the feature selection sparsity here. In \cref{tab:sparisity_num}, we show the number of selected features per sample. For global methods, the number of selected features is the same across samples, and thus the standard deviation is zero. In \cref{fig:prop}, we further visualise the proportion of selected features among all features. ProtoGate generally selects fewer features than benchmark feature selection methods, indicating improved interpretability in the selected features.

\begin{table*}[!htbp]
\centering
\caption{\textbf{Feature selection sparsity on real-world datasets.} We report the mean $\pm$ std of the number of selected features on test samples, averaged over 25 runs. ProtoGate generally selects fewer features than other feature selection methods.}
\label{tab:sparisity_num}
\resizebox{\textwidth}{!}{
\begin{tabular}{lrrrrrrr}
\toprule
Methods      & colon                        & lung                         & meta-dr                       & meta-pam                      & prostate                      & tcga-2y                       & toxicity                      \\
\midrule
Lasso        & 371.40$_{\pm\text{0.00}}$    & 1618.08$_{\pm\text{0.00}}$   & 4159.92$_{\pm\text{0.00}}$    & 4159.40$_{\pm\text{0.00}}$    & 5434.68$_{\pm\text{0.00}}$    & 4214.56$_{\pm\text{0.00}}$    & 2951.28$_{\pm\text{0.00}}$    \\
RF           & 1629.72$_{\pm\text{0.00}}$   & 504.76$_{\pm\text{0.00}}$    & 577.60$_{\pm\text{0.00}}$     & 1439.20$_{\pm\text{0.00}}$    & 510.72$_{\pm\text{0.00}}$     & 887.12$_{\pm\text{0.00}}$     & 1507.44$_{\pm\text{0.00}}$    \\
LightGBM     & 70.12$_{\pm\text{0.00}}$     & 82.44$_{\pm\text{0.00}}$     & 29.64$_{\pm\text{0.00}}$      & 336.25$_{\pm\text{0.00}}$     & 117.58$_{\pm\text{0.00}}$     & 31.88$_{\pm\text{0.00}}$      & 1150.48$_{\pm\text{0.00}}$    \\
STG          & 2000.00$_{\pm\text{0.00}}$   & 3312.00$_{\pm\text{0.00}}$   & 4157.96$_{\pm\text{0.00}}$    & 2992.00$_{\pm\text{0.00}}$    & 5966.00$_{\pm\text{0.00}}$    & 4381.00$_{\pm\text{0.00}}$    & 5748.00$_{\pm\text{0.00}}$    \\
L2X          & 5.00$_{\pm\text{0.00}}$      & 1.00$_{\pm\text{0.00}}$      & 5.00$_{\pm\text{0.00}}$       & 10.00$_{\pm\text{0.00}}$      & 10.00$_{\pm\text{0.00}}$      & 5.00$_{\pm\text{0.00}}$       & 5.00$_{\pm\text{0.00}}$       \\
LSPIN        & 1044.32$_{\pm\text{293.67}}$ & 564.83$_{\pm\text{1236.21}}$ & 1138.51$_{\pm\text{1545.96}}$ & 1073.04$_{\pm\text{1661.89}}$ & 2120.00$_{\pm\text{1968.86}}$ & 1418.35$_{\pm\text{1936.41}}$ & 1979.29$_{\pm\text{2387.03}}$ \\
LLSPIN       & 1311.86$_{\pm\text{209.80}}$ & 673.27$_{\pm\text{1212.20}}$ & 3026.77$_{\pm\text{642.02}}$  & 1180.08$_{\pm\text{1769.59}}$ & 2151.05$_{\pm\text{1954.80}}$ & 3486.77$_{\pm\text{696.29}}$  & 1999.12$_{\pm\text{2398.65}}$ \\
\midrule
ProtoGate    & 110.70$_{\pm\text{58.62}}$   & 177.24$_{\pm\text{173.13}}$  & 337.21$_{\pm\text{738.81}}$   & 469.47$_{\pm\text{46.90}}$    & 91.29$_{\pm\text{7.20}}$      & 348.79$_{\pm\text{869.23}}$   & 76.39$_{\pm\text{17.42}}$     \\ 
\bottomrule
\end{tabular}
}
\end{table*}

\begin{figure*}[!htbp]
    \centering
    \includegraphics[width=0.95\columnwidth]{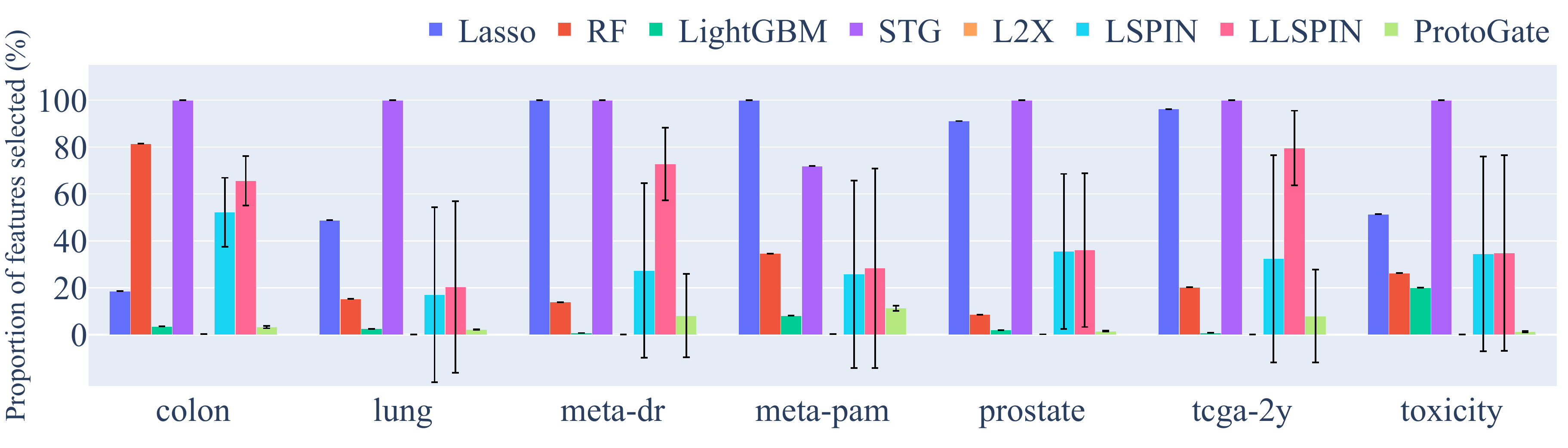}
    \caption{\textbf{Visualisation of the feature selection sparsity on real-world datasets.} We report the mean $\pm$ std of the proportion of selected features on test samples, averaged over 25 runs. ProtoGate generally selects fewer features than other local feature selection methods.}
    \label{fig:prop}
\end{figure*}

\subsubsection{Visualisation of Selected Features}
\label{appendix:vis_heatmap}
\cref{fig:gate_val} qualitatively shows that ProtoGate generally has smaller mask values than LSPIN and LLSPIN, denoting ProtoGate selects fewer features. Furthermore, we can see different feature selection results (i.e., $\vs_{\text{local}}$) in ProtoGate's heatmaps (e.g., the ``meta-dr'' dataset), showing that ProtoGate can select features locally. Additionally, ProtoGate can behave more globally on other datasets (e.g., the ``lung'' dataset). The results suggest that ProtoGate can adaptively balance global and local feature selection across different datasets, showing the effect of global-to-local feature selection.

\begin{figure*}[!htbp]
    \centering
    \subfloat[Heatmaps of mask values from LSPIN]
    {\includegraphics[width=0.95\columnwidth]{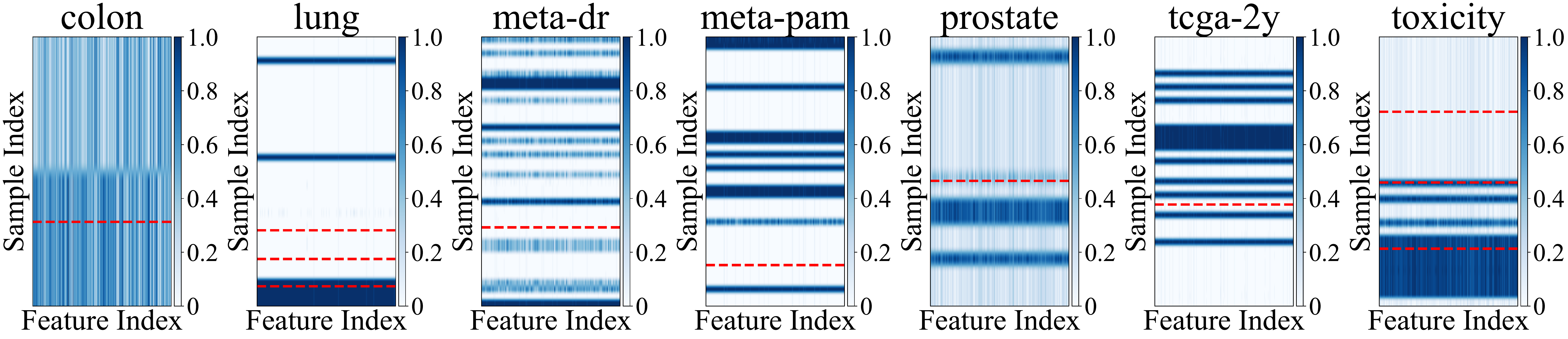}}\\
    \subfloat[Heatmaps of mask values from LLSPIN]
    {\includegraphics[width=0.95\columnwidth]{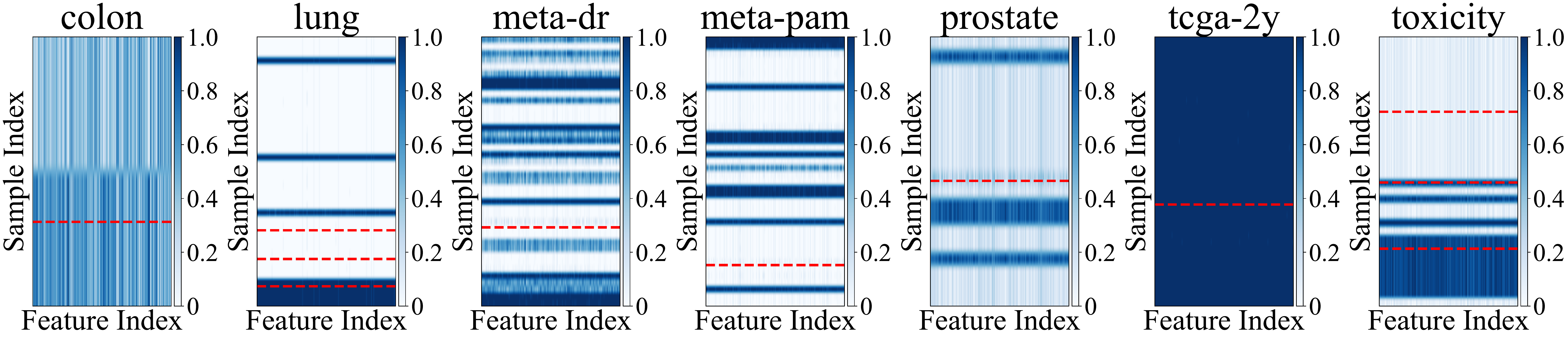}}\\
    \subfloat[Heatmaps of mask values from ProtoGate]
    {\includegraphics[width=0.95\columnwidth]{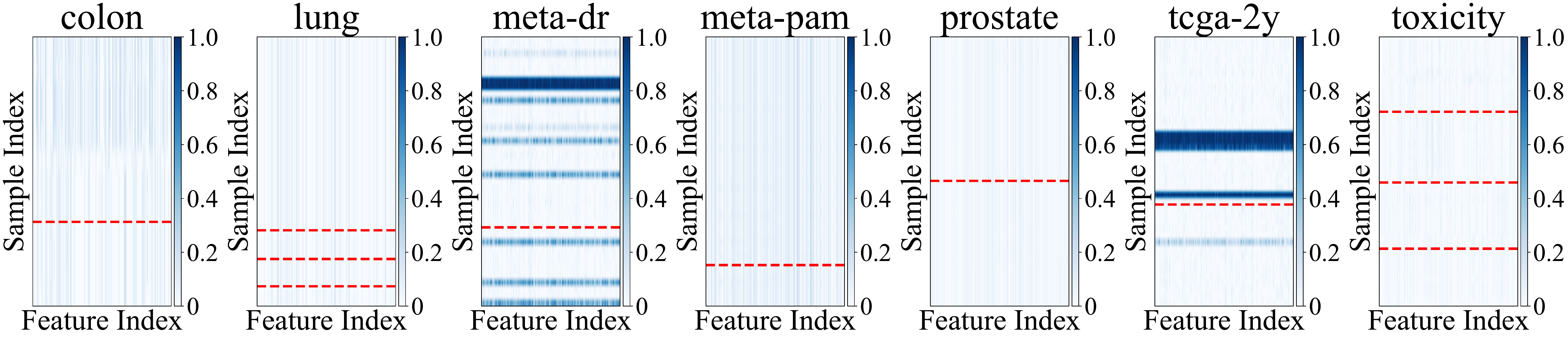}}
    \caption{\textbf{Visualisaton of the mask values on real-world datasets.} We plot the heatmaps of predicted mask values $\vs_{\text{local}}^{(i)}$ of test samples, where the x-axis refers to the indices of features, and the y-axis refers to the indices of samples (this is different to \cref{fig:fs_illustration}). Note that different datasets can have different numbers of samples and features, and the number of features should be more than the number of samples in the HDLSS regime. For visualisation purposes, we align them by adjusting the aspect ratio of heatmaps. The samples are sorted according to their ground truth labels, and the red dash lines separate samples of different classes.}
    \label{fig:gate_val}
\end{figure*}

\newpage
\subsubsection{Composition of Selected Features}
\label{appendix:feat_composition}
We further analyse the composition of the final feature selection results (i.e., the proportions of ``both selected'' and ``locally recovered'' features in $\textbf{s} _\text{local}^{(i)}$). \cref{tab:fs_behaviour} shows that the majority proportion of $\textbf{s} _\text{local}^{(i)}$ is generally included in $\textbf{s} _\text{global}$ on most datasets (i.e., the proportion of ``both selected'' features is generally greater than ``locally recovered'' features). For instance, on the ``tcga-2y'' dataset, 96.61\% of $\textbf{s} _\text{local}^{(i)}$ is included in $\textbf{s} _\text{global}$.
\cref{tab:fs_behaviour} also shows that ProtoGate can adaptively adjust its selection behaviours for different datasets. On some datasets, such as the ``toxicity'' dataset, ProtoGate locally recovers more features than reusing the ones from global selection.
The above results show that ProtoGate always includes some globally selected features in the subsequent local selection results. Therefore, we introduce ProtoGate by ``refining the global mask into the local mask''. In other words, the global mask $\textbf{s} _\text{global}$ empirically represents a lower-dimensional feature set to effectively determine the local mask $\textbf{s} _\text{local}^{(i)}$.

\begin{table}[htbp]
    \centering
    \caption{\textbf{Proportions (\%) of ``both selected'' and ``locally recovered'' features in ProtoGate’s local selection results (i.e., $\textbf{s}_\text{local}^{(i)}$).} ProtoGate adaptively adjust its selection behaviours to be more global or more local for different datasets.}
    \label{tab:fs_behaviour}
    \begin{tabular}{llllllll}
    \toprule
    \textbf{Selection behaviours} & \textbf{colon} & \textbf{lung} & \textbf{meta-dr} & \textbf{meta-pam} & \textbf{prostate} & \textbf{tcga-2y} & \textbf{toxicity} \\

    \midrule
    
    Both selected                 & 56.99          & 79.54         & 33.33            & 69.64             & 84.56             & 96.61            & 10.80             \\
    Locally recovered             & 43.01          & 20.46         & 66.67            & 30.36             & 15.44             & 3.39             & 89.20  \\         
    \bottomrule
    \end{tabular}%
\end{table}

\subsection{Results on Computation Efficiency}
\label{appendix:efficiency}
As a supplement to \cref{fig:trade_offs} (Middle), we further provide detailed numerical results on the computation efficiency. \cref{tab:time_train} shows that ProtoGate’s training time per epoch is consistently shorter than average. Moreover, the training time differences per epoch between ProtoGate and the fastest methods are consistently smaller than 0.1s across all datasets, which is practically insignificant.  \cref{tab:time_test} further shows that ProtoGate can efficiently make predictions for new samples. The inference time per sample of ProtoGate is shorter than 1ms, and the time differences between ProtoGate and the fastest methods are consistently smaller than 0.5ms across all datasets. 

\begin{table}[!htbp]
\vspace{-5mm}
    \centering
    \caption{\textbf{Training time per epoch (unit: second).} We \textbf{bold} the average training time and ProtoGate's training time per batch. ProtoGate’s training time per epoch is consistently shorter than average, and the time differences per epoch between ProtoGate and the fastest methods are consistently smaller than 0.1s across all datasets.}
    \label{tab:time_train}
    \begin{tabular}{lrrrrrrr}
    \toprule
    Methods                     & colon          & lung          & meta-dr          & meta-pam          & prostate          & tcga-2y          & toxicity          \\ \midrule
    TabNet                      & 0.03           & 0.08          & 0.11             & 0.11              & 0.06              & 0.11             & 0.13              \\
    L2X                         & 0.03           & 0.16          & 0.19             & 0.18              & 0.08              & 0.19             & 0.16              \\
    INVASE                      & 0.06           & 0.10          & 0.11             & 0.11              & 0.12              & 0.11             & 0.12              \\
    REAL-X                      & 0.03           & 0.30          & 0.37             & 0.40              & 0.19              & 0.40             & 0.33              \\
    LLSPIN                      & 0.02           & 0.05          & 0.06             & 0.06              & 0.04              & 0.05             & 0.06              \\
    LSPIN                       & 0.02           & 0.05          & 0.05             & 0.05              & 0.04              & 0.05             & 0.06              \\
    \textbf{Avg. w/o ProtoGate} & \textbf{0.03}  & \textbf{0.12} & \textbf{0.15}    & \textbf{0.15}     & \textbf{0.09}     & \textbf{0.15}    & \textbf{0.14}     \\ \midrule
    \textbf{ProtoGate (Ours)}          & \textbf{0.02}  & \textbf{0.10} & \textbf{0.12}    & \textbf{0.12}     & \textbf{0.08}     & \textbf{0.12}    & \textbf{0.13}     \\ \bottomrule
    \end{tabular}
\vspace{-5mm}
\end{table}

\begin{table}[!htbp]
\vspace{-3mm}
    \centering
    \caption{\textbf{Inference time per sample (unit: millisecond).} We \textbf{bold} the average inference time and ProtoGate's inference time per sample. The inference time per sample of ProtoGate is shorter than $1$ms, and the time differences per sample between ProtoGate and the fastest methods are consistently smaller than $0.5$ms across all datasets.}
    \label{tab:time_test}
    \begin{tabular}{lrrrrrrr}
    \toprule
    Methods                     & colon          & lung          & meta-dr          & meta-pam          & prostate          & tcga-2y          & toxicity          \\ \midrule
    TabNet                      & 0.53           & 0.39          & 0.54             & 0.55              & 0.57              & 0.53             & 0.75              \\
    L2X                         & 0.41           & 0.81          & 0.93             & 0.90              & 0.77              & 0.93             & 0.91              \\
    INVASE                      & 1.04           & 0.51          & 0.54             & 0.54              & 1.16              & 0.55             & 0.69              \\
    REAL-X                      & 0.41           & 1.55          & 1.86             & 1.98              & 1.88              & 2.00             & 1.95              \\
    LLSPIN                      & 0.31           & 0.26          & 0.30             & 0.28              & 0.35              & 0.27             & 0.34              \\
    LSPIN                       & 0.31           & 0.27          & 0.27             & 0.27              & 0.37              & 0.27             & 0.34              \\
    \textbf{Avg. w/o ProtoGate} & \textbf{0.50}  & \textbf{0.63} & \textbf{0.74}    & \textbf{0.75}     & \textbf{0.85}     & \textbf{0.76}    & \textbf{0.83}     \\ \midrule
    \textbf{ProtoGate (Ours)}          & \textbf{0.17}  & \textbf{0.19} & \textbf{0.28}    & \textbf{0.28}     & \textbf{0.61}     & \textbf{0.44}    & \textbf{0.59}     \\ \bottomrule
    \end{tabular}
\vspace{-3mm}
\end{table}

\cref{tab:param} further shows that ProtoGate can have much fewer trainable parameters than benchmark methods. Note that we focus on the model and exclude the shared training settings such as batch size, training epochs and learning rate when counting the hyperparameters. Because the number of trainable parameters depends on the input dimensionality, we compute the total number of trainable parameters according to the ``prostate'' dataset (102 samples with 5,966 features). Although the number of trainable parameters can change across datasets, the order remains the same (INVASE$>$L2X$>$REAL-X$>$LSPIN$>$LLSPIN$>$ProtoGate$>$TabNet) in our experimental settings.
    
\begin{table}[!htbp]
\vspace{-5mm}
\centering
\caption{\textbf{Number of hyperparameters and trainable parameters in ProtoGate and other local methods.} We \textbf{bold} the number of parameters for ProtoGate. Benchmark methods are sorted according to the number of trainable parameters. ProtoGate can have much fewer trainable parameters than other local methods in the considered experimental settings.}
\label{tab:param}
\begin{tabular}{lrrrrrrr}
\toprule
Methods            & \# hyperparameters& \# Trainable Parameters \\    \midrule
INVASE             & 2                  & 3.7M                   \\
L2X                & 3                  & 2.4M                   \\
REAL-X             & 2                  & 2.4M                   \\
LSPIN              & 2                  & 1.8M                   \\
LLSPIN             & 2                  & 1.8M                   \\
TabNet             & 3                  & 0.4M                   \\ 
\midrule
\textbf{ProtoGate (Ours)} & \textbf{3}         & \textbf{1.2M}          \\
\bottomrule
\end{tabular}
\vspace{-5mm}
\end{table}

\subsection{Results on non-HDLSS Classification Tasks}
\label{appendix:non_HDLSS}
\cref{tab:non_hdlss_dataset} shows that ProtoGate consistently ranks within the top three for all datasets, implying robust performance. Although ProtoGate is primarily designed for HDLSS regimes, we find it can also perform quite well in other challenging tasks, even tied in average rank with LightGBM. Specifically, ProtoGate ranks first on the ``100-plants-texture'' dataset, surpassing other local methods by a clear margin. Note that REAL-X even failed to converge well on the ``100-plants-texture'' dataset, highlighting its limitations in managing datasets with a high number of classes. Moreover, ProtoGate can achieve competitive performance on non-biomedical datasets, such as the ``mfeat-fourier'' dataset. These findings indicate that ProtoGate's applicability may well extend beyond its original scope in HDLSS biomedical applications.

\begin{table}[!htbp]
\centering
\caption{\textbf{Classification accuracy (\%) on four non-HDLSS real-world tabular datasets.} We report the mean $\pm$ std balanced accuracy and average accuracy rank across datasets. A higher rank implies higher accuracy. We highlight the {\color[HTML]{008080} \textbf{First}}, {\color[HTML]{7030A0} \textbf{Second}} and {\color[HTML]{C65911} \textbf{Third}} ranking accuracy for each dataset. ProtoGate consistently ranks Top-3 across all datasets and achieves comparable overall performance as the state-of-the-art methods.}
\label{tab:exp_acc_non_HDLSS}
\begin{tabular}{lrrrr|r}
\toprule
Methods & 100-plants-texture & cnae-9 & mfeat-fourier & vehicle & \textbf{Rank} \\
\midrule
MLP & 19.10$_{\pm\text{2.03}}$ & 64.52$_{\pm\text{3.10}}$ & 79.76$_{\pm\text{1.89}}$ & {\color[HTML]{008080} \textbf{77.87}}$_{\pm\text{3.41}}$ & 4.50$_{\pm\text{2.29}}$ \\
Lasso & 52.89$_{\pm\text{2.46}}$ & 74.92$_{\pm\text{3.67}}$ & 76.86$_{\pm\text{1.54}}$ & 70.62$_{\pm\text{2.68}}$ & 5.00$_{\pm\text{0.71}}$ \\
RF & {\color[HTML]{C65911} \textbf{64.81}}$_{\pm\text{2.34}}$ & {\color[HTML]{C65911} \textbf{87.30}}$_{\pm\text{1.79}}$ & {\color[HTML]{C65911} \textbf{80.31}}$_{\pm\text{1.89}}$ & 72.86$_{\pm\text{2.46}}$ & {\color[HTML]{C65911} \textbf{3.25}}$_{\pm\text{0.43}}$ \\
LightGBM & {\color[HTML]{7030A0} \textbf{75.73}}$_{\pm\text{2.34}}$ & {\color[HTML]{008080} \textbf{92.96}}$_{\pm\text{1.45}}$ & {\color[HTML]{7030A0} \textbf{81.96}}$_{\pm\text{1.76}}$ & {\color[HTML]{7030A0} \textbf{75.87}}$_{\pm\text{2.92}}$ & {\color[HTML]{008080} \textbf{1.75}}$_{\pm\text{0.43}}$ \\
REAL-X & 9.89$_{\pm\text{1.33}}$ & 70.81$_{\pm\text{3.00}}$ & 63.61$_{\pm\text{1.75}}$ & 46.81$_{\pm\text{1.99}}$ & 6.75$_{\pm\text{0.43}}$ \\
LSPIN & 49.33$_{\pm\text{1.84}}$ & 79.54$_{\pm\text{2.47}}$ & 77.49$_{\pm\text{1.76}}$ & 68.98$_{\pm\text{2.12}}$ & 5.00$_{\pm\text{0.71}}$ \\
\midrule
\textbf{ProtoGate (Ours)} & {\color[HTML]{008080} \textbf{77.61}}$_{\pm\text{1.42}}$ & {\color[HTML]{7030A0} \textbf{87.80}}$_{\pm\text{2.59}}$ & {\color[HTML]{008080} \textbf{82.84}}$_{\pm\text{1.60}}$ & {\color[HTML]{C65911} \textbf{73.63}}$_{\pm\text{2.63}}$ & {\color[HTML]{008080} \textbf{1.75}}$_{\pm\text{0.83}}$ \\
\bottomrule
\end{tabular}
\end{table}

\clearpage
\section{Additional Ablation Results on Global-to-local Feature Selection}
\label{appendix:ablation_fs}

\subsection{Results on Features Selection Sparsity}
\label{appendix:ablation_reg_sparsity}
We perform ablation experiments to illustrate the interplay between $\lambda_{\text{global}}$ and $\lambda_{\text{local}}$. Note that we summarise the combinations into three cases: ProtoGate (both regularisations: $\lambda_{\text{global}} \neq 0, \lambda_{\text{local}} \neq 0$), ProtoGate-global (only $\ell_1$-regularisation: $\lambda_{\text{global}} \neq 0, \lambda_{\text{local}} = 0$) and ProtoGate-local (only $\ell_0$-regularisation: $\lambda_{\text{global}} = 0, \lambda_{\text{local}} \neq 0$). Specifically, we perform grid search within ranges of $\lambda_{\text{global}} \in \{0, 1e-4, 3e-4, 5e-4, 7e-4, 1e-3\}$ and $\lambda_{\text{local}} \in \{0, 5e-4, 1e-3, 2e-3\}$. 

\cref{tab:ablation_reg_sparsity} shows that ProtoGate and ProtoGate-local generally selects fewer features than ProtoGate-global. Note that we tried manually increasing $\lambda_{\text{global}}$ in ProtoGate-global to attain a sparsity level similar to ProtoGate, but it substantially degrades its performance. Note that although when ProtoGate and ProtoGate-local achieve similar sparsity, the accuracy gap remains. This demonstrates that ProtoGate selects different features to ProtoGate-local, showing the effects of soft global selection.

\begin{table}[htbp]
\centering
\caption{\textbf{Number of selected features of ProtoGate and its two variants.} We report the mean $\pm$ std of the number of selected features on test samples, averaged over 25 runs. We \textbf{bold} the fewest selected features for each dataset.}
\label{tab:ablation_reg_sparsity}
\resizebox{\textwidth}{!}{
\begin{tabular}{lrrrrrrr}
\toprule
Cases                                                     & colon   & lung                      & meta-dr                     & meta-pam                    & prostate                  & tcga-2y                     & toxicity \\ \midrule
ProtoGate-global & 846.48$_{\pm\text{95.32}}$ & 1673.92$_{\pm\text{250.13}}$ & 2089.16$_{\pm\text{265.44}}$ & 2601.15$_{\pm\text{170.17}}$ & 2542.48$_{\pm\text{108.79}}$ & 1667.31$_{\pm\text{84.74}}$  & 2367.68$_{\pm\text{53.67}}$ \\

ProtoGate-local & 115.96$_{\pm\text{16.65}}$ & \textbf{82.18}$_{\pm\text{14.53}}$  & \textbf{85.22}$_{\pm\text{12.57}}$ & \textbf{69.85}$_{\pm\text{23.20}}$  & 291.13$_{\pm\text{58.48}}$  & \textbf{131.43}$_{\pm\text{15.52}}$ & 154.00$_{\pm\text{25.11}}$ \\

\textbf{ProtoGate} & \textbf{110.70}$_{\pm\text{58.62}}$ & 177.24$_{\pm\text{173.13}}$ & 337.21$_{\pm\text{738.81}}$ & 469.47$_{\pm\text{46.90}}$ & \textbf{91.29}$_{\pm\text{7.20}}$ & 348.79$_{\pm\text{869.23}}$ & \textbf{76.39}$_{\pm\text{17.42}}$ \\ 
\bottomrule
\end{tabular}
}
\end{table}

\subsection{Results on Degree of Local Selection}
\label{appendix:lambda_t_test}
To further distinguish between ``similar number of selected features'' and ``similar selected features'', \textbf{we introduce a new metric: degree of local sparsity $\mathcal{Q}$}, which is computed by:
\begin{equation}
    \mathcal{Q} = \frac{1}{D \cdot N} \sum_{j=1}^{N} \text{card} \left(\bigcup_{i=1}^{N}\text{nonzero}(\vs_{\text{local}}^{(i)}) - \text{nonzero}(\vs_{\text{local}}^{(j)}) \right)
\end{equation}
where $\text{card}(\cdot)$ returns the cardinality of a set and $\text{nonzero}(\cdot)$ returns the indices of non-zero elements in a vector. $\mathcal{Q}$ measures the difference between the union set of selected features for all samples and the selected features for a specific sample. Intuitively, a smaller $\mathcal{Q}$ denotes the method selects features more globally than a bigger $\mathcal{Q}$. We perform ablation experiments with $\lambda_{\text{local}}=1e-3$ and $\lambda_{\text{global}} \in \{0, 1e-4, 3e-4, 5e-4, 7e-4, 1e-3\}$, and report the average $\mathcal{Q}$ and average balanced accuracy over 25 runs. 

\vspace{-5mm}
\begin{figure*}[!htbp]
    \centering
    \subfloat[Degree of local sparsity under different $\lambda_{\text{global}}$]{\includegraphics[width=0.43\columnwidth]{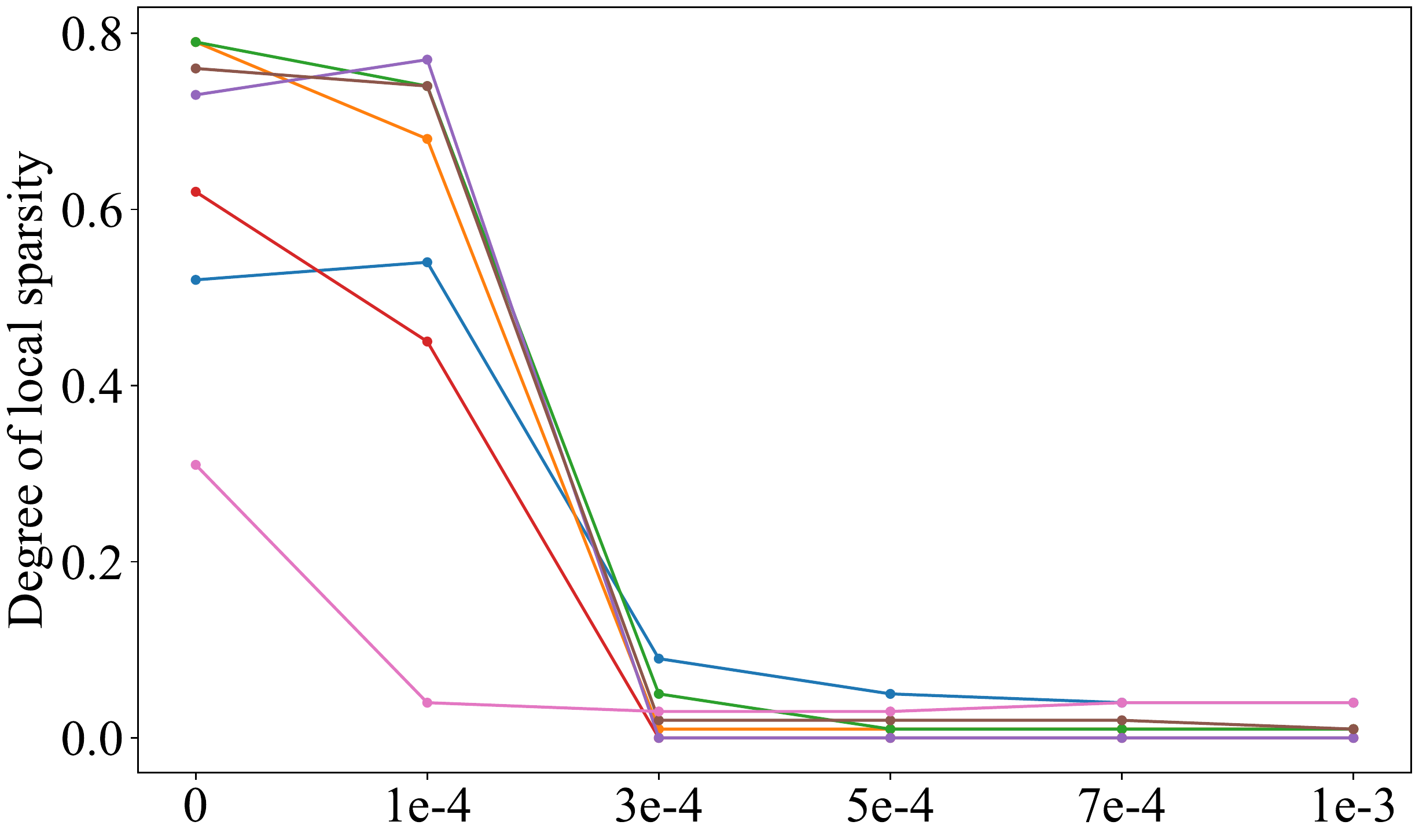}}
    \hspace{5mm}
    \subfloat[Balanced accuracy under under different $\lambda_{\text{global}}$]{\includegraphics[width=0.439\columnwidth]{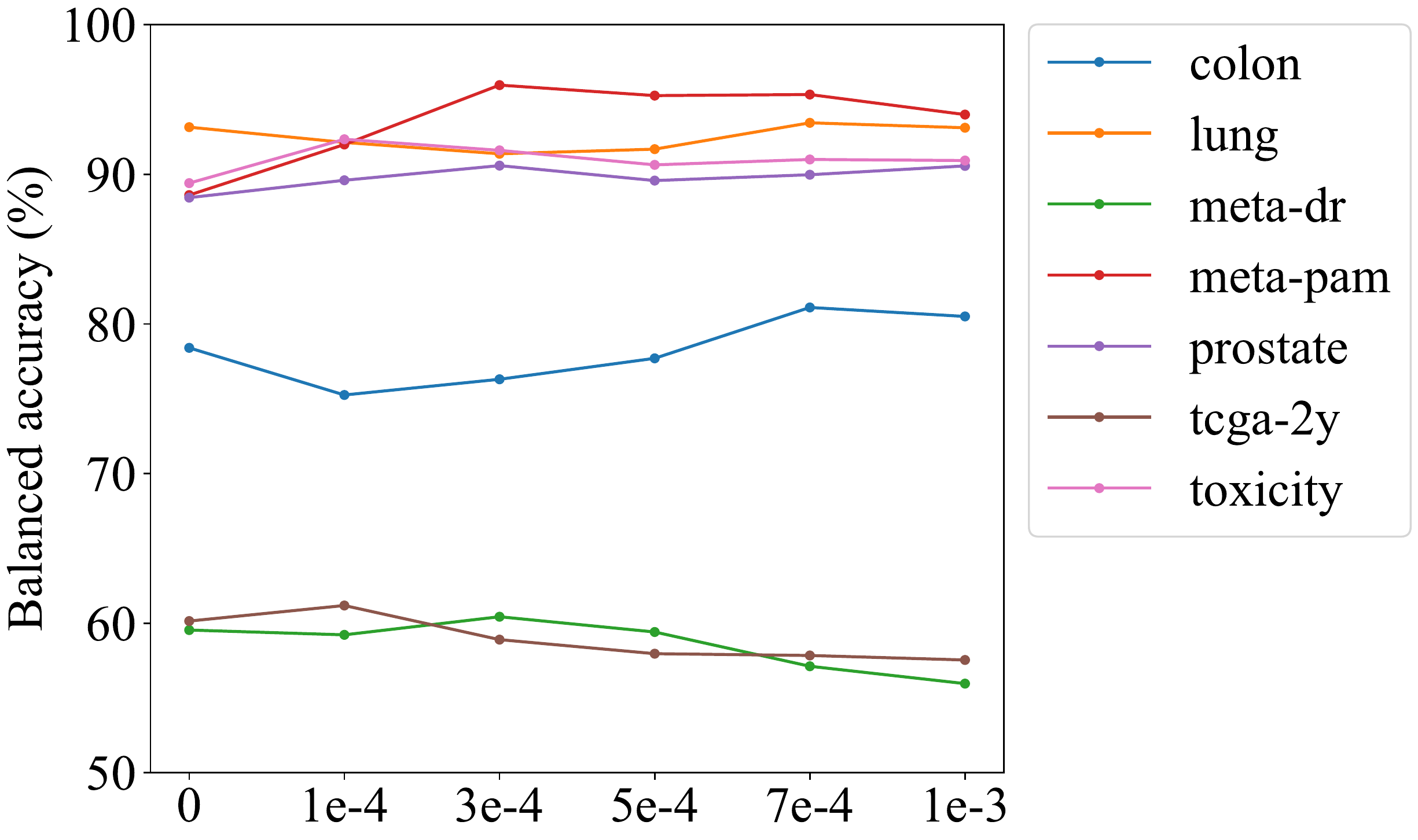}}
    \caption{\textbf{Comparison of different values of global sparsity hyperparameter $\lambda_{\text{global}}$.} (a) Degree of local sparsity averaged over 25 runs. Increasing $\lambda_{\text{global}}$ reduces the diversity of selected features across samples. (b) Balanced accuracy (\%) averaged over 25 runs. Increasing $\lambda_{\text{global}}$ does not guarantee improvement in the prediction accuracy. Note that confidence intervals are omitted for visualisation purposes.}
    \label{fig:l1_ablation}
\end{figure*}

\begin{table}[!htbp]
\vspace{-5mm}
\centering
\caption{Statistical analysis of performances between ProtoGate with optimal $\lambda_{\text{global}}^{*}$ and $\lambda_{\text{global}}=1e-3$.}
\label{tab:lambda_t_test}
\begin{tabular}{lrrrrrrrr}
\toprule
\textbf{} & lung   & meta-dr & meta-pam & prostate & tcga-2y & toxicity & colon  & Wilcoxon test \\ \midrule
P-values  & 0.8856 & 0.0112  & 0.1638   & 0.9919   & 0.1272  & 0.3517   & 0.8252 & 0.0156 \\ \bottomrule       
\end{tabular}
\vspace{-5mm}
\end{table}

\cref{fig:l1_ablation} shows that increasing $\lambda_{\text{global}}$ enables ProtoGate to select features more globally. Namely, increasing $\lambda_{\text{global}}$ can promote the existence of global important features by performing soft global selection via $\ell_1$-regularisation on $\mathbf{W}^{[1]}$. We performed statistical analysis between test accuracy with optimal $\lambda_{\text{global}}^*$ and maximum $\lambda_{\text{global}}=1e-3$. Specifically, we performed a \textit{corrected} 2-tailed t-test as well as Wilcoxon signed-rank test of the mean accuracies~\cite{nadeau1999inference, bouckaert2004evaluating}. We find that while models with optimal $\lambda_{\text{global}}^*$ generally tend to perform better than their counterparts, this difference is statistically significant (at $\alpha=0.05$) \textbf{only} in the case of ``meta-dr'' dataset, which highlights the benefits of leveraging the locally important features across samples. On the other datasets, leveraging the locally important features is less effective, which helps to explain why global methods (e.g., Lasso) can generally outperform some local methods in \cref{tab:acc}. The results show that ProtoGate can easily adapt and effectively balance global and local feature selection for a particular dataset, rather than exclusively regularising for either of them.

\clearpage
\section{Additional Ablation Results on Non-parametric Prototype-based Prediction}
\label{appendix:ablation_pred}

\vspace{2mm}
\subsection{Results on Different Predictors}
To show the efficacy of clustering assumption in prediction tasks, we replace the differentiable KNN predictor with (i) a linear head and (ii) an MLP, and then tune the hyperparameter for global sparsity $\lambda_{\text{global}}$ by searching within $\{1e-4, 2e-4, 3e-4\}$ for fair comparison between different predictors. \cref{tab:ablation_predictor} shows that differentiable KNN consistently outperforms other predictors across datasets, suggesting that the clustering assumption is beneficial for feature selection.

\begin{table*}[htbp]
\centering
\caption{\textbf{Classification accuracy (\%) for different predictors on real-world datasets.} We \textbf{bold} the highest accuracy for each dataset. The prototype-based classifier consistently outperforms linear and MLP predictors on all datasets.}
\label{tab:ablation_predictor}
\resizebox{\textwidth}{!}{
\begin{tabular}{lrrrrrrrrr}
\toprule
Predictors                & colon                    & lung                      & meta-dr                  & meta-pam                 & prostate                 & tcga-2y                  & toxicity                 \\ 
\midrule

MLP                       & 80.95$_{\pm\text{7.77}}$ & 69.97$_{\pm\text{9.17}}$  & 56.00$_{\pm\text{6.37}}$ & 93.62$_{\pm\text{6.04}}$ & 89.13$_{\pm\text{6.36}}$ & 54.74$_{\pm\text{8.11}}$ & 90.36$_{\pm\text{5.61}}$ \\

Linear Head               & 79.45$_{\pm\text{6.23}}$ & 66.51$_{\pm\text{12.45}}$ & 56.10$_{\pm\text{8.95}}$ & 93.20$_{\pm\text{6.18}}$ & 89.87$_{\pm\text{5.80}}$ & 56.60$_{\pm\text{8.20}}$ & 90.29$_{\pm\text{5.93}}$ \\

\textbf{Differentiable KNN (Ours)} & \textbf{83.95}$_{\pm\text{9.82}}$ & \textbf{93.56}$_{\pm\text{6.29}}$  & \textbf{60.43}$_{\pm\text{7.62}}$ & \textbf{95.96}$_{\pm\text{3.93}}$ & \textbf{90.58}$_{\pm\text{5.72}}$ & \textbf{61.18}$_{\pm\text{6.47}}$ & \textbf{92.34}$_{\pm\text{5.67}}$ \\
\bottomrule
\end{tabular}
}
\end{table*}

\vspace{2mm}
\subsection{Results on Different Number of Nearest Neighbours $K$}
\label{appendix:ablation_k}
To evaluate the robustness of the prototype-based predictor, we further conduct experiments using different numbers of nearest neighbours denoted as $K$. Considering the limited sample sizes of the datasets under investigation, we set the maximum number of nearest samples to $K=5$. All experimental settings are kept consistent to ensure 
 fair comparisons. 

\cref{tab:ablation_k} presents the results of the ablation experiments on the number of nearest neighbours, demonstrating that the optimal value of $K$ varies across different datasets. It is observed that using a small value of $K$ can make the predictions more sensitive to noise and outliers, resulting in lower accuracy. Notably, ProtoGate consistently achieves high accuracy across the range of $K \in \{3, 4, 5\}$. This finding supports the validity of the prototype-based prediction for the considered real-world datasets and ProtoGate's robustness in prediction tasks.

\begin{table*}[!htbp]
\centering
\caption{\textbf{Classification accuracy (\%) for different numbers of the nearest neighbours.} We \textbf{bold} the highest accuracy for each dataset. A small $K \in \{1, 2\}$ can lead to sensitivity to noise, and the model performs stably with $K \in \{3, 4, 5\}$.}
\label{tab:ablation_k}
\begin{tabular}{lrrrrrrrrr}
\toprule
\# Prototypes      & colon & lung                      & meta-dr                   & meta-pam                  & prostate                  & tcga-2y                   & toxicity                                       \\ \midrule
$K=1$ & 70.40$_{\pm\text{14.45}}$ & 87.53$_{\pm\text{7.28}}$ & 50.50$_{\pm\text{6.21}}$ & 73.02$_{\pm\text{10.90}}$ & 75.91$_{\pm\text{10.21}}$ & 57.46$_{\pm\text{6.85}}$ & 75.85$_{\pm\text{7.02}}$ \\

$K=2$ & 77.35$_{\pm\text{13.46}}$ & 92.30$_{\pm\text{7.28}}$ & 56.06$_{\pm\text{7.29}}$ & 90.28$_{\pm\text{6.01}}$  & 86.93$_{\pm\text{7.33}}$  & 59.40$_{\pm\text{6.24}}$ & 88.81$_{\pm\text{7.01}}$ \\

$K=3$ & \textbf{83.95}$_{\pm\text{9.82}}$  & \textbf{93.56}$_{\pm\text{6.29}}$ & 57.82$_{\pm\text{8.93}}$ & \textbf{95.96}$_{\pm\text{3.93}}$  & 89.53$_{\pm\text{5.64}}$  & \textbf{61.18}$_{\pm\text{6.47}}$ & 91.14$_{\pm\text{5.19}}$ \\

$K=4$ & 75.25$_{\pm\text{13.34}}$ & 90.34$_{\pm\text{7.01}}$ & \textbf{60.43}$_{\pm\text{7.62}}$ & 95.03$_{\pm\text{4.77}}$  & 88.85$_{\pm\text{5.87}}$  & 60.97$_{\pm\text{5.60}}$ & 91.10$_{\pm\text{4.93}}$ \\

$K=5$ & 77.50$_{\pm\text{8.67}}$  & 91.12$_{\pm\text{6.36}}$ & 59.23$_{\pm\text{6.88}}$ & 95.83$_{\pm\text{5.89}}$  & \textbf{90.58}$_{\pm\text{5.72}}$  & 60.84$_{\pm\text{5.88}}$ & \textbf{92.34}$_{\pm\text{5.67}}$\\
\bottomrule
\end{tabular}
\end{table*}

\vspace{2mm}
\subsection{Results on Hybrid Sorting}
\label{appendix:ablation_sort}
\vspace{1mm}
To gauge the effect of NeuralSort, we run a proof-of-principle analysis comparing our proposed ProtoGate (w/ HybridSort) to the one where we substitute the sorting mechanisms with only classical QuickSort or differentiable NeuralSort. The results and conclusions are below:

\looseness-1
Firstly, \cref{tab:ablation_sort_time} shows that ProtoGate w/ only QuickSort (as expected) leads to faster training than HybridSort. Nevertheless, given the problem settings we are focusing on (high-dimensional and low-sample-size tabular data), these differences are practically insignificant. Secondly, \cref{tab:ablation_sort_acc} shows the predictive performance of using only QuickSort substantially degrades w.r.t our proposed ProtoGate (w/ HybridSort). This large performance gap indicates that a differentiable sorting operator is important for learning a well-performing feature selector since it allows the clustering assumption to be explicitly encoded into the optimisation procedure (via backpropagating gradients from the sorted samples/prototypes). Thirdly, we also find that using QuickSort for training can be less stable on the considered datasets, resulting in different and sometimes larger selected feature sets (up to 40\% of all features), compared to the ones obtained with HybridSort (consistently below 15\%). 

In a nutshell, HybridSort is an effective sorting operator for prototype-based classification, which reduces the inference time by almost half while preserving the identical predictive performance as only using the computationally expensive differentiable sorting operators.

\begin{table}[htbp]
    \centering
    \caption{\textbf{Training time (per epoch) and inference time (per sample) with different sorting operators (unit: millisecond).} ProtoGate w/ only QuickSort leads to faster training and inference.}
    \label{tab:ablation_sort_time}
    \begin{tabular}{lrrrrrrrr}
    \toprule
    Methods    & Stages                            & colon  & lung  & meta-dr & meta-pam & prostate & tcga-2y & toxicity \\ \midrule
    \multirow{2}{*}{QuickSort}  & training         & 10.66  & 36.78 & 56.72   & 56.09    & 61.76    & 88.91   & 100.06   \\
               & inference                         & 0.17   & 0.19  & 0.28    & 0.28     & 0.61     & 0.44    & 0.59     \\ \midrule
    \multirow{2}{*}{NeuralSort} & training         & 17.63  & 99.42 & 118.44  & 117.50   & 80.96    & 120.31  & 133.33   \\
               & inference                         & 0.28   & 0.50  & 0.59    & 0.59     & 0.79     & 0.60    & 0.78     \\ \midrule
    \multirow{2}{*}{\textbf{HybridSort (Ours)}} & training  & 17.63  & 99.42 & 118.44  & 117.50   & 80.96    & 120.31  & 133.33   \\
               & inference                         & 0.17   & 0.19  & 0.28    & 0.28     & 0.61     & 0.44    & 0.59     \\
    \bottomrule 
    \end{tabular}
\end{table}

\begin{table}[htbp]
\centering
\caption{\textbf{Classification accuracy (\%) of ProtoGate with different sorting operators.} We \textbf{bold} the highest accuracy for each dataset. HybridSort and NeuralSort outperform QuickSort in predictive accuracy by a clear margin.}
\label{tab:ablation_sort_acc}
\begin{tabular}{lrrrrrrr}
\toprule
Methods             & colon & lung                      & meta-dr                   & meta-pam                  & prostate                  & tcga-2y                   & toxicity                                        \\ \midrule
QuickSort  & 67.25$_{\pm\text{8.17}}$  & 59.46$_{\pm\text{12.51}}$ & 49.11$_{\pm\text{1.79}}$    & 65.78$_{\pm\text{14.63}}$    & 81.70$_{\pm\text{11.80}}$    & 55.00$_{\pm\text{2.89}}$    & 67.57$_{\pm\text{7.34}}$     \\
NeuralSort &\textbf{83.95}$_{\pm\text{9.82}}$  & \textbf{93.56}$_{\pm\text{6.29}}$  & \textbf{60.43}$_{\pm\text{7.62}}$    & \textbf{95.96}$_{\pm\text{3.93}}$     & \textbf{90.58}$_{\pm\text{5.72}}$     & \textbf{61.18}$_{\pm\text{6.47}}$    & \textbf{92.34}$_{\pm\text{5.67}}$ \\
\textbf{HybridSort (Ours)} & \textbf{83.95}$_{\pm\text{9.82}}$  & \textbf{93.56}$_{\pm\text{6.29}}$  & \textbf{60.43}$_{\pm\text{7.62}}$    & \textbf{95.96}$_{\pm\text{3.93}}$     & \textbf{90.58}$_{\pm\text{5.72}}$     & \textbf{61.18}$_{\pm\text{6.47}}$    & \textbf{92.34}$_{\pm\text{5.67}}$ \\ \bottomrule
\end{tabular}
\end{table}

\clearpage
\section{Additional Results on Interpretability Evaluation}
\label{appendix:inter_eval}

\subsection{Alternative Metrics for Interpretability Evaluation}
\label{appendix:metric_interpretability}

We carefully assess the existing metrics before performing the interpretability evaluation, and we find that some of them are debatable, even with potential flaws.

\looseness-1
\textbf{Faithfulness.}
The faithfulness in feature selection promotes that ``all selected features should be significant for predictions''.
Firstly, faithfulness~\cite{yang2022locally, alvarez2018towards} can be computationally impractical for high-dimensional datasets. The real faithfulness should be evaluated in four steps:
(i)~Remove a feature;
(ii)~Retrain the model on the new datasets without the dropped feature;
(iii)~Compute the correlation between the accuracy drop and the feature importance;
(iv)~Repeat the above three steps for all features.
On high-dimensional datasets, Step-(ii) requires retraining the model thousands of times.
Secondly, we understand that some prior work~\cite{yang2022locally} omits Step-(ii) to compute a surrogate of real faithfulness, but omitting Step-(ii) means Step-(iii) evaluates the model with a different distribution to which the model is trained on. Following prior work~\cite{jethani2021have}, our experiments are also based on the common assumption in machine learning: models are trained and evaluated on data with the same distribution (i.e., the training and test data are independent and identically distributed). Evaluating the surrogate of faithfulness can violate this key assumption~\cite{jethani2021have, hooker2019benchmark}.
Thirdly, the transferability of selected features directly demonstrates whether the selected features are overall informative for accurate predictions with downstream predictors~\cite{yang2022locally}. Therefore, we choose to provide an analysis of the transferability of selected features to avoid providing possibly misleading guidance for users.

\textbf{Stability.}
The stability in feature selection promotes ``samples similar in the input space should have similar informative features''~\cite{yang2022locally}.
Firstly, the poor accuracy of vanilla KNN shows that samples of the same class are not always close (i.e., similar) to each other in the original high-dimensional space. In other words, the similarity in the original high-dimensional space can be misleading. Therefore, leveraging the similarity in the original high-dimensional space can harm the model performance.
Secondly, in our experiments, we also find that LSPIN without such stability regularisation generally performs better than LSPIN with it. The highest accuracy of LSPIN (reported in \cref{tab:acc}) is consistently achieved with stability strength of 0 (i.e., no stability regularisation) across datasets. The results further show the negative effects of such stability regularisation.
Thirdly, we further analyse the source of such negative effects. The real clustering assumption promotes ``samples of the same class should have similar representations''~\cite{chapelle2006semi}. However, ``similar masks'' is not a sufficient condition for ``similar representations''. Technically, ``similar representations'' refer to the similarity between masked samples (i.e., $\textbf{x}^{(i)} \odot \textbf{s} _\text{local}^{(i)}$), rather than the masks (i.e., $\textbf{s} _\text{local}^{(i)}$). Therefore, similar samples in the input space are not expected to have similar masks. This is also the reason why ProtoGate can achieve high accuracy by attending to the similarity between representations, instead of masks.

\textbf{Diversity.}
The diversity in feature selection promotes ``samples of different classes should have different informative features''. Firstly, we show with vanilla KNN (\cref{tab:acc}) that the similarity (also dissimilarity) in the original high-dimensional space can be misleading for feature selection and prediction. Therefore, leveraging the similarity in the original high-dimensional space can harm the model performance.
Secondly, the real clustering assumption promotes ``samples of the same class should have similar representations''~\cite{chapelle2006semi}. However, ``similar masks'' is not a sufficient condition for ``similar representations'', and ``dissimilar masks'' is also not a sufficient condition for ``dissimilar representations''. In other words, similar samples are not necessarily expected to have similar masks. Likewise, dissimilar samples are not necessarily expected to have dissimilar masks. Therefore, we chose not to evaluate stability in the considered tasks to avoid possibly misleading guidance for users.

\subsection{Results on Fidelity of Selected Features}
\label{appendix:co_adapt}
In \cref{tab:acc_syn}, ProtoGate achieves better or comparable performance in feature selection and classification than the benchmark methods on Syn1$_{(+)}$ and Syn2$_{(+)}$. On Syn3$_{(-)}$, ProtoGate performs poorly as expected (see \cref{appendix:data_syn} for the reason), verifying the inductive bias of the clustering assumption. We also find the LSPIN exhibits visible misalignment in feature selection and prediction. On Syn1$_{(+)}$, LSPIN achieves the best classification accuracy, but the quality of selected features is much worse, with a rank of six out of ten methods. In other words, LSPIN simply overfits the dataset without correctly identifying the informative features, denoting a severe co-adaptation problem and low-fidelity feature selection. In contrast, ProtoGate has consistently non-positive rank differences between F1$_{\text{selec}}$ and ACC$_{\text{pred}}$, showing high-fidelity feature selection. The results demonstrate that ProtoGate can achieve a well-aligned performance of feature selection and classification, guaranteeing the fidelity of selected features.

\begin{table*}[!htbp]
    \centering
    \caption{\textbf{Evaluation comparison of ProtoGate and nine benchmark methods on three synthetic datasets.} We report the F1 score of selected features (F1$_\text{select}$) and the balanced accuracy for prediction (ACC$_{\text{pred}}$). ``Diff.'' refers to the difference between the ranks of F1$_\text{select}$ and ACC$_{\text{pred}}$, and a positive value indicates a high possibility of co-adaptation. We highlight the {\color[HTML]{008080} \textbf{First}}, {\color[HTML]{7030A0} \textbf{Second}} and {\color[HTML]{C65911} \textbf{Third}} performance for each dataset. ProtoGate achieves well-aligned performance for feature selection and prediction.}
    \label{tab:acc_syn}
    \resizebox{\textwidth}{!}{
    \begin{tabular}{lrrrrrr|rrr}
    \toprule
    \multirow{2}{*}{Methods} & \multicolumn{3}{c}{Syn1$_{(+)}$}           & \multicolumn{3}{c|}{Syn2$_{(+)}$}            & \multicolumn{3}{c}{Syn3$_{(-)}$}           \\ \cmidrule{2-10}
                             & F1$_\text{select} \uparrow$              & ACC$_{\text{pred}} \uparrow$            & \textbf{Diff.}               & F1$_\text{select} \uparrow$              & ACC$_{\text{pred}} \uparrow$            & \textbf{Diff.}               & F1$_\text{select} \uparrow$              & ACC$_{\text{pred}} \uparrow$            & \textbf{Diff.}               \\ \midrule
    
    Lasso                     & 0.09$_{\pm\text{0.02}}$                                 & 54.55$_{\pm\text{6.14}}$                                 & 2              & 0.11$_{\pm\text{0.01}}$                                 & 52.42$_{\pm\text{6.69}}$                                 & 0              & 0.09$_{\pm\text{0.02}}$                                 & 55.30$_{\pm\text{7.44}}$                                 & 2              \\
    
    RF                        & 0.15$_{\pm\text{0.04}}$                                 & 57.08$_{\pm\text{6.48}}$                                 & 3              & 0.19$_{\pm\text{0.02}}$                                 & {\color[HTML]{C65911} \textbf{59.44}}$_{\pm\text{5.24}}$ & 1              & {\color[HTML]{C65911} \textbf{0.22}}$_{\pm\text{0.02}}$ & 56.33$_{\pm\text{9.08}}$                                 & -1             \\

    STG                       & {\color[HTML]{7030A0} \textbf{0.27}}$_{\pm\text{0.04}}$ & {\color[HTML]{C65911} \textbf{58.65}}$_{\pm\text{9.03}}$ & -1             & {\color[HTML]{C65911} \textbf{0.22}}$_{\pm\text{0.09}}$ & 58.28$_{\pm\text{8.36}}$                                 & -2             & {\color[HTML]{008080} \textbf{0.28}}$_{\pm\text{0.18}}$ & 54.00$_{\pm\text{9.09}}$                                 & -7             \\
    
    TabNet                    & 0.08$_{\pm\text{0.02}}$                                 & 48.59$_{\pm\text{6.55}}$                                 & 1              & 0.06$_{\pm\text{0.02}}$                                 & 49.57$_{\pm\text{5.38}}$                                 & 0              & 0.06$_{\pm\text{0.02}}$                                 & 48.45$_{\pm\text{8.31}}$                                 & 0              \\
    
    L2X                       & 0.16$_{\pm\text{0.07}}$                                 & 52.89$_{\pm\text{7.51}}$                                 & -3             & 0.19$_{\pm\text{0.10}}$                                 & 55.78$_{\pm\text{6.97}}$                                 & -1             & 0.10$_{\pm\text{0.09}}$                                 & 55.92$_{\pm\text{7.30}}$                                 & 2              \\
    
    INVASE                    & 0.18$_{\pm\text{0.05}}$                                 & 55.36$_{\pm\text{9.00}}$                                 & -1             & 0.16$_{\pm\text{0.03}}$                                 & {\color[HTML]{7030A0} \textbf{60.28}}$_{\pm\text{8.61}}$ & 6              & 0.13$_{\pm\text{0.03}}$                                 & {\color[HTML]{008080} \textbf{58.75}}$_{\pm\text{8.70}}$ & 5              \\
    
    REAL-X                    & {\color[HTML]{C65911} \textbf{0.19}}$_{\pm\text{0.04}}$ & 47.54$_{\pm\text{9.51}}$                                 & -7             & {\color[HTML]{7030A0} \textbf{0.23}}$_{\pm\text{0.07}}$ & 55.20$_{\pm\text{6.38}}$                                 & -6             & {\color[HTML]{7030A0} \textbf{0.26}}$_{\pm\text{0.06}}$ & {\color[HTML]{C65911} \textbf{56.48}}$_{\pm\text{9.34}}$ & -1             \\

    LSPIN                     & 0.15$_{\pm\text{0.04}}$                                 & {\color[HTML]{008080} \textbf{59.04}}$_{\pm\text{9.24}}$ & 5              & 0.19$_{\pm\text{0.04}}$                                 & 59.40$_{\pm\text{8.07}}$                                 & 1              & 0.19$_{\pm\text{0.06}}$                                 & {\color[HTML]{7030A0} \textbf{58.09}}$_{\pm\text{6.41}}$ & 2              \\
    
    LLSPIN                    & 0.11$_{\pm\text{0.02}}$                                 & 54.96$_{\pm\text{9.49}}$                                 & 2              & 0.17$_{\pm\text{0.08}}$                                 & 56.18$_{\pm\text{5.80}}$                                 & 1              & 0.10$_{\pm\text{0.06}}$                                 & 52.35$_{\pm\text{8.32}}$                                 & -2             \\
    
    \midrule
    \textbf{ProtoGate (Ours)}                 & {\color[HTML]{008080} \textbf{0.29}}$_{\pm\text{0.07}}$ & {\color[HTML]{7030A0} \textbf{58.68}}$_{\pm\text{6.28}}$ & -1             & {\color[HTML]{008080} \textbf{0.29}}$_{\pm\text{0.09}}$ & {\color[HTML]{008080} \textbf{60.67}}$_{\pm\text{8.21}}$ & 0              & 0.17$_{\pm\text{0.06}}$                                 & 56.16$_{\pm\text{6.82}}$                                 & 0             
    \\
    \bottomrule
    \end{tabular}
    }
\end{table*}

\subsection{Results on Transferability of Selected Features}
\label{appendix:transferability}
\cref{fig:interpretability_transferability} shows that 
ProtoGate consistently improves the performance of KNN across datasets, while ProtoGate-global and ProtoGate-local can cause performance degradation on some datasets. Although SVM does not have the same inductive bias as ProtoGate (i.e., the clustering assumption), the selected features by ProtoGate do \textit{not} cause performance degradation in SVM, even bringing notable improvements on some datasets (e.g., 25.8\% increase in balanced accuracy on the ``meta-pam'' dataset). In contrast, ProtoGate-global and ProtoGate-local can cause considerable performance degradation (e.g., over 10\% drop in balanced accuracy on the ``toxicity'' dataset), further showing the complementary effects of soft global selection and local selection. In a nutshell, both global and local feature selection contribute to the selection behaviours of ProtoGate, and the features selected by ProtoGate are generally transferable and beneficial for the performance of simple models.

\vspace{-5mm}
\begin{table}[htbp]
\centering
\caption{\textbf{Normalised balanced accuracy (\%) of simple models with different feature selectors.} We \textbf{bold} the highest accuracy for each simple model. ProtoGate selects features that generally improve the performance of KNN and SVM.}
\label{tab:Table1}
\begin{tabular}{lrrrr}
\toprule
Methods & None & ProtoGate-global & ProtoGate-local & \textbf{ProtoGate (Global-to-local)} \\
\midrule
KNN &  35.08$_{\pm\text{20.07}}$ &  38.24$_{\pm\text{24.54}}$ &   64.02$_{\pm\text{9.47}}$ &  \textbf{79.45}$_{\pm\text{11.22}}$ \\
SVM &  44.22$_{\pm\text{15.96}}$ &  31.24$_{\pm\text{21.90}}$ &  43.45$_{\pm\text{29.58}}$ &  \textbf{71.38}$_{\pm\text{23.95}}$ \\
\bottomrule
\end{tabular}
\end{table}

\vspace{-5mm}
\begin{figure}[!ht]
    \centering
    \subfloat{\includegraphics[width=0.5\columnwidth]{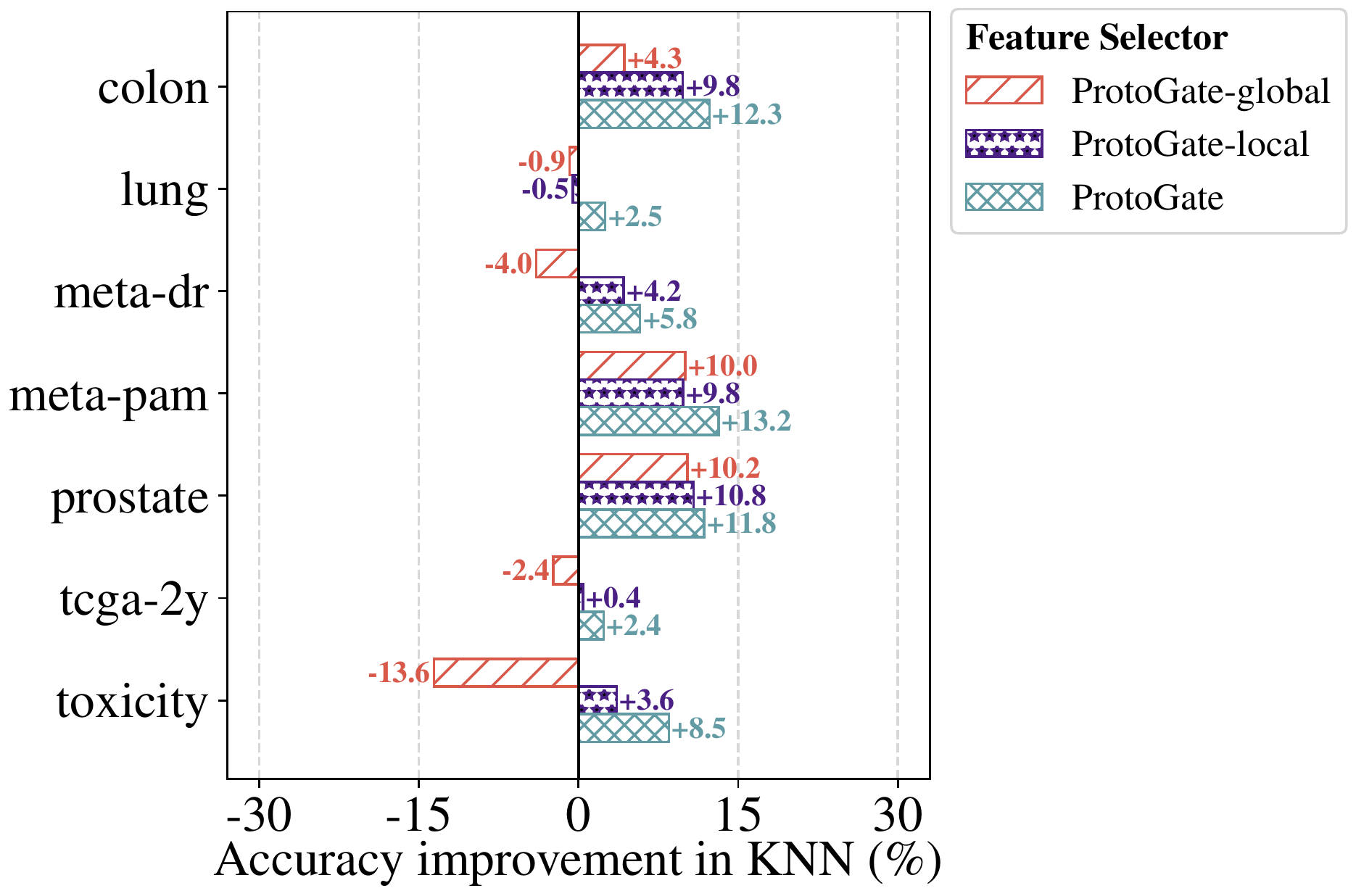}}
    \subfloat{\includegraphics[width=0.5\columnwidth]{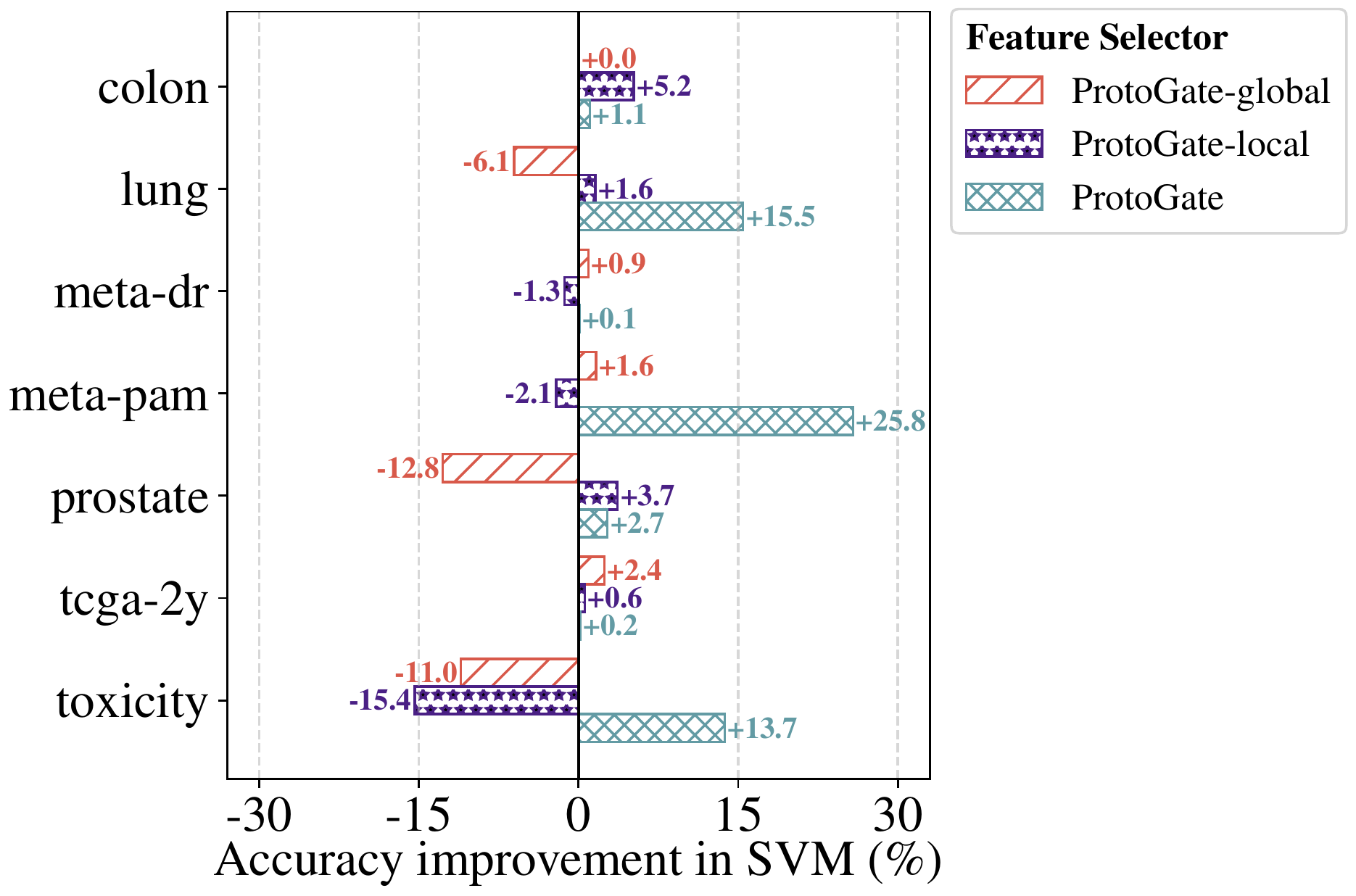}}
    \caption{\textbf{Left:} Accuracy improvement in KNN (\%) with different feature selectors. \textbf{Right:} Accuracy improvement in SVM (\%) with different feature selectors. ProtoGate selects features that generally improve the performance of KNN and SVM, while ProtoGate-global and ProtoGate-local can cause performance degradation on some datasets.}
    \label{fig:interpretability_transferability}
\end{figure}

\clearpage
\section{Future Work}
\looseness-1
We would like to emphasise that no discussed methods, including ProtoGate, dominate the considered datasets. Indeed, ProtoGate could be outperformed by the other top three methods, Lasso and MLP, on some HDLSS datasets, e.g., ``lung'' dataset. This is likely due to the inherently limited expressiveness of prototype-based models in comparison to well-regularized connectionist models, as discussed in \cite{lu2017expressive, margeloiu2022weight}. However, the robustness and prototypical explainability offered by ProtoGate contribute valuable perspectives for advancing reliable HDLSS feature selection methods. A promising direction for future research is the development of more expressive non-parametric models, which can potentially enhance feature selection models with the same learning paradigm as ProtoGate (i.e., disjoint in-model selection). In addition, the clustering characteristics of the samples after feature selection, in conjunction with domain knowledge, could reveal important scientific insights. Moreover, exploring the application of ProtoGate in other high-dimensional fields where explainability is crucial, such as finance \cite{song2019autoint}, and environmental modelling \cite{jarquin2014reaction}, could yield important findings.

\let\clearpage\relax


\end{document}